\documentclass[11pt]{article}

\usepackage[numbers]{natbib}

\usepackage{booktabs}       
\usepackage{amsfonts}       
\usepackage{amsmath}
\usepackage{graphicx}
\usepackage{subcaption}
\usepackage{amsthm}
\usepackage{amssymb}
\usepackage{multirow}
\usepackage{wrapfig}
\usepackage{makecell}
\usepackage[vlined,linesnumbered,ruled,resetcount]{algorithm2e}
\usepackage[colorlinks,linkcolor=magenta,filecolor=blue,citecolor=blue,urlcolor=blue]{hyperref}%
\usepackage[top=1in, left=1in, right=1in, bottom=1in]{geometry}
\usepackage{amssymb}
\usepackage{pifont}
\usepackage{mathtools}
\usepackage{stmaryrd}
\usepackage[T1]{fontenc}
\usepackage{float}

\usepackage{authblk}
\usepackage{enumitem}
\usepackage{algorithmic}

\newtheorem{theorem}{Theorem}[section]
\newtheorem{lemma}[theorem]{Lemma}

\newtheorem{assumption}[theorem]{Assumption}

\theoremstyle{definition}
\newtheorem{definition}[theorem]{Definition}

\usepackage[normalem]{ulem}

\newcommand{\wh}{\widehat}
\newcommand{\wt}{\widetilde}
\newcommand{\ov}{\overline}

\renewcommand{\epsilon}{\varepsilon}
\renewcommand{\phi}{\varphi}

\renewcommand{\tilde}{\wt}
\renewcommand{\hat}{\wh}
\renewcommand{\bar}{\ov}

\makeatletter
\newcommand*{\RN}[1]{\expandafter\@slowromancap\romannumeral #1@}
\makeatother

\makeatletter
\newcommand{\printfnsymbol}[1]{%
  \textsuperscript{\@fnsymbol{#1}}%
}
\makeatother

\title{Mitigating Non-IID Drift in Zeroth-Order Federated LLM Fine-Tuning with Transferable Sparsity}

\author[1]{Yide Ran}
\author[2]{Wentao Guo}
\author[3]{Jingwei Sun}
\author[4]{Yanzhou Pan}
\author[1]{Xiaodong Yu}
\author[1]{Hao Wang}
\author[5]{Jianwen Xie}
\author[3]{Yiran Chen}
\author[1]{Denghui Zhang}
\author[1]{Zhaozhuo Xu\thanks{Contact: \texttt{zxu79@stevens.edu}}}
\affil[1]{Stevens Institute of Technology}
\affil[2]{Princeton University}
\affil[3]{Duke University}
\affil[4]{Google LLC}
\affil[5]{Lambda Inc.}
\begin{document}

\maketitle
\begin{abstract}
Federated Learning enables collaborative fine-tuning of Large Language Models (LLMs) across decentralized Non-Independent and Identically Distributed (Non-IID) clients, but such models' massive parameter sizes lead to significant memory and communication challenges. This work introduces \textsc{Meerkat}, a sparse zeroth-order optimization (ZO) method designed for federated LLM fine-tuning. By limiting fine-tuning to a transferable, static, extremely sparse subset of parameters, \textsc{Meerkat} achieves remarkable communication efficiency, enabling cost-effective high-frequency synchronization. With theoretical analysis and experiments, we show that this high-frequency communication effectively mitigates Non-IID data challenges and leads to superior performance compared to full-parameter ZO. Furthermore, experiment results show that \textsc{Meerkat} outperforms existing sparsity baselines with better performance at the same communication frequency. To further handle Non-IID drift, \textsc{Meerkat} leverages traceable local updates and forms a \textit{virtual path} for each client. This virtual path mechanism reveals the GradIP phenomenon: the inner products between LLM pre-training gradients maintained by server and client gradients estimated via ZO converges for extreme Non-IID clients but oscillates for IID ones. This distinct behavior provides a signal for identifying clients with extreme data heterogeneity.
Using this signal, \textsc{Meerkat-vp} is proposed to analyze GradIP trajectories to identify extreme Non-IID clients and applies early stopping to enhance aggregated model quality. Experiments confirm that \textsc{Meerkat} and \textsc{Meerkat-vp} significantly improve the efficiency and effectiveness of ZO federated LLM fine-tuning.
\end{abstract}

\section{Introduction}

Federated Learning (FL)~\cite{mcmahan2017communication} has emerged as a powerful paradigm for enabling decentralized collaboration, particularly relevant for fine-tuning Large Language Models (LLMs) across numerous client devices~\cite{dubey2024llama,brown2020language}. Unlike centralized training, FL allows clients to train models locally and share only model updates with a central server. However, fine-tuning LLMs in a FL setting faces two major challenges: the massive model parameter size and the Non-Independent and Identically Distributed (Non-IID) data distribution across clients. The former leads to high computation demands on clients and significant communication overhead, while the latter causes client drift and hinder global convergence. 
These challenges make LLM fine-tuning impractical on resource-constrained clients and hinder the effective use of decentralized data.

Zeroth-order Optimization (ZO) provides a promising avenue for addressing some of these challenges in federated LLM fine-tuning. By estimating gradients through model perturbations and forward passes, ZO bypasses the need for backpropagation and the storage of intermediate activations, leading to more memory-efficient learning on client devices~\cite{zhang2021desirable,fang2022communication,ling2024convergence, liu2024sparsemezoparametersbetter,malladi2023fine}. However, applying standard ZO directly to the massive parameter space of LLMs can still be computationally inefficient and the optimization process unstable~\cite{malladi2023fine}. 
Moreover, adapting ZO for federated LLM fine-tuning remains challenging, particularly in balancing computational efficiency, communication overhead, and model performance under Non-IID data heterogeneity.

In order to address the above challenges, we propose \textsc{Meerkat}, a sparse ZO method designed for efficient federated LLM fine-tuning. \textsc{Meerkat} addresses the computational and communication burdens by focusing ZO updates on a static, extremely sparse (less than $0.1\%$), and transferable subset of LLM parameters. This subset is strategically identified using gradients derived from pre-training data, ensuring that updates target parameters most sensitive to the loss function. This selective approach dramatically reduces communication overhead and supports cost-effective high-frequency synchronization. As we will demonstrate through theoretical analysis and extensive experiments, the combination of high communication frequency and sparsity in \textsc{Meerkat} enables frequent yet lightweight synchronization. This effectively reduces the convergence error floor in theory and practice, leading to consistently superior performance compared to full-parameter ZO fine-tuning and other sparsity methods under the same communication frequency.

Leveraging \textsc{Meerkat}'s efficient high-frequency synchronization to effectively mitigate Non-IID data challenges, we further enhance its adaptability to weak network conditions. By employing a virtual path mechanism to track client updates, we enable the server to analyze client training dynamics without accessing raw data, thus facilitating robust operation even when frequent direct communication is constrained. Within this virtual path, we observe the \textbf{GradIP phenomenon}, a pattern revealed by the GradIP score, which computes the inner product between local client gradients estimated via ZO and server pre-training gradients. GradIP scores converge for Non-IID clients while oscillating for IID clients, serving as a clear indicator of data heterogeneity. Leveraging this insight, we propose \textsc{Meerkat-vp} that introduces a virtual path client selection method to identify clients with significant Non-IID characteristics and apply early stopping, thereby reducing their adverse impact on the aggregated model and enhancing its quality.

In summary, this paper makes the following contributions:

\begin{itemize}[nosep,leftmargin=*]
    \item \textbf{Performance Improvement with Sparsity.} Meerkat consistently outperforms full-parameter ZO optimization in both IID and Non-IID settings, demonstrating the effectiveness of our sparse update strategy. Extensive experiments show that Meerkat surpasses not only full-parameter ZO but also other sparse methods, such as LoRA and weight-magnitude, achieving superior performance.
    \item \textbf{High Frequency Communication with Sparsity Can Lower the Error Floor.} \textsc{Meerkat} leverages extreme model sparsity to reduce local computational memory. Exchanging scalar gradients drastically decreases communication costs, enabling high-frequency communication.
    \item \textbf{Traceable Local Updates and GradIP Phenomenon}: \textsc{Meerkat} leverages traceable sparse local updates and forms a \textit{virtual path}. The virtual paths reveals the GradIP phenomenon: the inner product between LLM pre-training gradients maintained by server and client gradients estimated via ZO converges for extreme Non-IID clients but oscillates for IID ones. This distinct behavior serves as a signal for detecting clients with extreme data heterogeneity.
    \item \textbf{\textsc{Meerkat-vp}: Early Stopping for Extreme Non-IID Clients.} Leveraging the GradIP phenomenon via virtual path client selection, \textsc{Meerkat-vp} effectively manages extreme Non-IID clients, by early stopping these clients to improve global model quality.
    \item \textbf{Theoretical and Experimental Validation.} We present theoretical analysis and extensive experiments across diverse FL settings, validating the scalability and performance benefits of both \textsc{Meerkat} and \textsc{Meerkat-vp}.
\end{itemize}
\begin{figure*}[!h]
    \vspace{-4pt}
    \centering
    \includegraphics[width=0.9\textwidth]{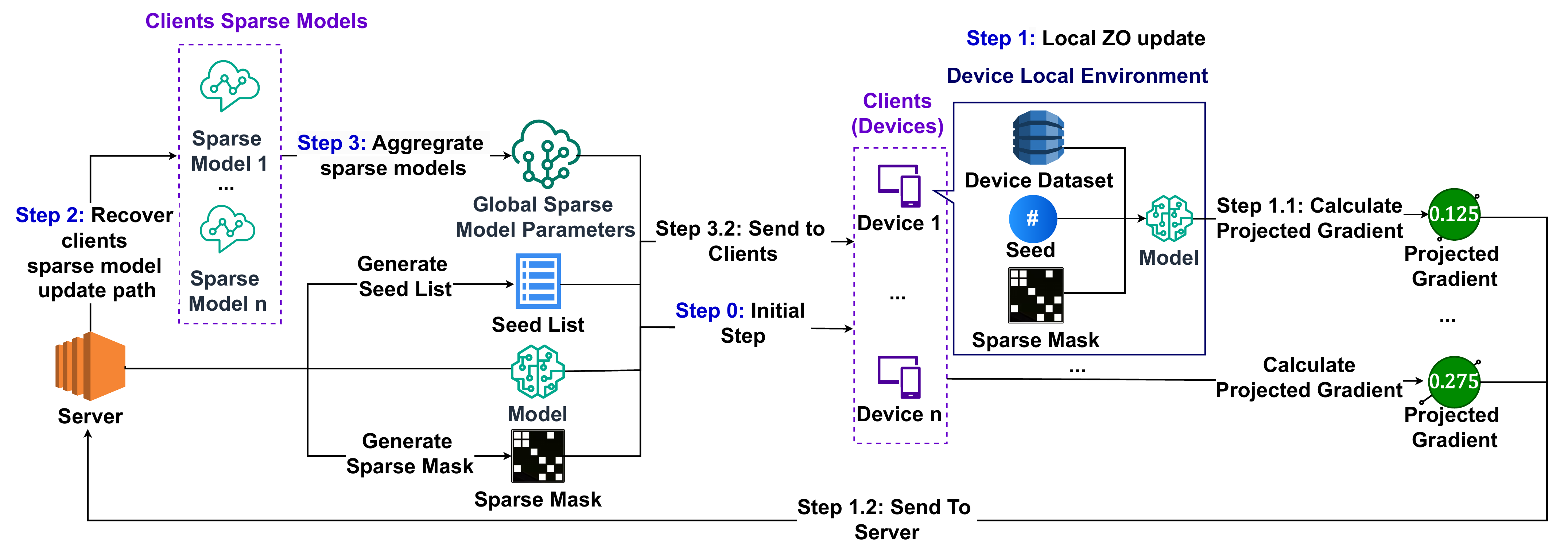}
    \caption{\textsc{Meerkat}: Sparse zeroth-order optimization for federated LLM fine-tuning workflow.}
    \vspace{-4pt}
    \label{fig:overview}
    \vspace{-4pt}
\end{figure*}

\section{ Sparse Zeroth-Order Optimization for Federated LLM Fine-tuning}\label{sec:method}

This section introduces \textsc{Meerkat}, a sparse ZO method for federated LLM fine-tuning, and its upgraded version, \textsc{Meerkat-vp}, which incorporates Virtual Path Client Selection(VPCS) strategy. This strategy leverages the traceable virtual path of client local updates to identify clients with extremely Non-IID data and applies early stopping to mitigate their adverse impact on global model convergence. We first introduce the technical details of \textsc{Meerkat}, as illustrated in Figure~\ref{fig:overview}, and subsequently describe \textsc{Meerkat-vp}, shown in Figure~\ref{fig:vpoverview}. We then present theoretical convergence analysis for both methods and discuss their strengths in terms of cost-effectiveness, traceability, and the use of early stopping to mitigate client drift caused by Non-IID data.  

\subsection{\textsc{Meerkat}: Extreme Sparse Zeroth-Order Federated LLM Fine-Tuning}

\textbf{Sparse ZO On-Device LLM Fine-Tuning.}
\textsc{meerkat} performs sparse ZO for LLM fine-tuning on the client device. Let $\mathcal{D}$ denote the client dataset we would like an LLM to fine-tune with loss function $f$. Given the LLM weight $\mathbf{w} \in \mathbb{R}^d$, we perform an iterative optimization by randomly sampling a batch $\mathcal{B} \subset \mathcal{D}$ for each step and performing the local update step as 
\begin{equation}\label{eq:sparse_zo}
  g = \frac{f(\mathbf{w} + \epsilon(\mathbf{z}\odot\mathbf{m});\mathcal B)
            - f(\mathbf{w} - \epsilon(\mathbf{z}\odot\mathbf{m});\mathcal B)}
           {2\epsilon}
  \,,\quad
  \hat{\nabla} f = g\,(\mathbf{z}\odot\mathbf{m}) \,.
\end{equation}
where $\mathbf{z} \in \mathbb{R}^d$ is a random vector sampled from a Gaussian distribution $\mathcal{N}(0, I_d)$, 
$\epsilon \in \mathbb{R}$ is the perturbation magnitude, and $\mathbf{m} \in \{0, 1\}^d$ is a binary sparse mask with density ratio $u$ that selects a subset of parameters for updates.

\textbf{Extremely Sparse Parameters Obtained from Pre-Training.} 
According to the formulation in Eq~\eqref{eq:sparse_zo}, we focus the perturbation of the LLM on a subset of parameters determined by a binary mask $\mathbf{m}$. 
The mask $\mathbf{m}$ is derived from the pre-training process of the LLM. 
We compute the average squared gradients of each parameter over a subset of the C4 dataset~\cite{raffel2020exploring}. 
Then, we select the top $u$ parameters with the highest average squared gradient values and mark them as $1$ in $\mathbf{m}$. 
In practice, we set $u$ to $0.1\%$, resulting in extremely sparse updates.

\textbf{FL with \textsc{Meerkat}.} The workflow of \textsc{Meerkat} is illustrated in Figure~\ref{fig:overview} and Algorithm~\ref{alg:fl-szogeneral}.  \textsc{Meerkat} first loads each client with the pre-trained weight $\mathbf{w}_0$ and the sparse mask $\mathbf{m}$. Next, \textsc{Meerkat} initializes a random seed list $\{s_1^1, \dots, s_1^T\}$ at the server to generate the random Gaussian vector $\mathbf{z}$ for each local step in the first round.
Next, \textsc{Meerkat} performs an iterative federated optimization with $R$ rounds of client-server synchronization with each round as follows. 

(1) \textit{Local ZO update at each client.} 
Upon receiving global model weights $\mathbf{w}_{r-1}$ and seed list $\{s_r^1, \dots, s_r^T\}$ from the server, each client performs $T$ local iteration steps.
In each local step $t$, the client perturbs the model parameters selected by $\mathbf{m}$ with the random vector $\mathbf{z}_k^t$ generated by the random seed $s_r^t$. Each client then computes projected gradient $g_k^t$ (a scalar) according to Eq.~\eqref{eq:sparse_zo}. Using $g_k^t$, each client calculates the local gradient $\hat{\nabla} f_k^t$ and updates the local model $w_k$ with learning rate $\eta$. After $T$ local steps, each client uploads a list of projected gradients $\{g_k^1, g_k^2, \dots, g_k^T\}$ to the server. 
(2) \textit{Server reconstructs client update with virtual path.}
Since the server shares the same random seed list with clients for the round, it can reconstruct each client's local model update path upon receiving their projected gradients. We term this server-side reconstruction process the \textit{virtual path}, as it allows the server to follow the client's local steps without accessing raw data. As shown in Step 2 of Algorithm~\ref{alg:fl-szogeneral}, the server uses the preserved random seed and receives project gradients of each local step from each client to recover the local model update path for each client. 
(3) \textit{Sever aggregates and initiate the next round:} After virtual path reconstruction, the server aggregates the reconstructed client model weights $\mathbf{w}_k^T$ to sparsely update the global model to $\mathbf{w}_{r}$. Subsequently, the server sends $\mathbf{w}_{r}$ and a a new seed list $\{s_{r+1}^1, \dots, s_{r+1}^T\}$ to clients and initializes next round.

\textbf{\textsc{Meerkat-vp}: Virtual Path Client Selection and Early Stopping}. \textsc{Meerkat-vp} extends \textsc{Meerkat} by incorporating a VPCS strategy designed for heterogeneous environments. Leveraging the virtual path reconstruction capability, the server analyzes client update trajectories to identify those with extremely Non-IID data distributions. \textsc{Meerkat-vp} then applies an early stopping mechanism to these identified clients, restricting them to a single local step to mitigate the negative impact of their skewed updates on global model convergence and performance.

\subsection{Theoretical Convergence Analysis}
We theoretically analyze the convergence of \textsc{Meerkat} and \textsc{Meerkat-vp} under the Polyak–Łojasiewicz (PL)-type non-convex condition. All technical assumptions and the corresponding proof are presented in Appendix~\ref{sec:theoretical_proof}.

\begin{theorem}[Convergence rate of \textsc{Meerkat}]
\label{thm:global-convergence:formal}Under Assumptions~\ref{assumption:assumption1}--\ref{assumption:assumption5}, if the learning rate satisfy $\eta = \min\left\{ \frac{1}{L(u+2)}, \frac{\mu\,\sqrt{c}\bigl(1+\sqrt{c_h}\bigr)}
        {2\,L^{2}(2+u)^{2}} \right\}$, then the global model \(\{ \mathbf{w}^r \}\) generated by the \textsc{Meerkat} algorithm satisfies the following convergence bound:
\begin{equation}\label{eq:formal_theory_meerkat}
\begin{aligned}
\frac{1}{R} \sum_{r=0}^{R-1} \left( f(w^r) - f^* \right) &\leq \mathcal{O}\!\left( \frac{(2+u)^2}{TR} \cdot \mathbb{E}[f(w^0) - f(w^R)] \right)
+ \mathcal{O}\!\left( \frac{T}{2+u} \right)
+ O\!\bigl(1\bigr)
\end{aligned}
\end{equation}
\end{theorem}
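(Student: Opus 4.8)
The plan is to follow the standard descent-plus-PL recipe for zeroth-order federated optimization, instantiated for the sparse estimator of Eq.~\eqref{eq:sparse_zo} and carefully tracking the local-step accumulation. Concretely, I would (i) characterize the first and second moments of the sparse ZO gradient, (ii) turn $L$-smoothness into a one-step expected-descent inequality, (iii) unroll the $T$ local steps while controlling client drift, and (iv) invoke the PL condition and telescope over the $R$ rounds. The two learning-rate thresholds in the statement are exactly what make steps (ii) and (iv) go through.

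First I would analyze the estimator $\hat{\nabla} f = g\,(\mathbf{z}\odot\mathbf{m})$. Conditioning on the minibatch and taking expectation over $\mathbf{z}\sim\mathcal{N}(0,I_d)$, this is an almost-unbiased estimate of the gradient of the Gaussian-smoothed surrogate restricted to the masked coordinates, with a smoothing bias of the usual order $\epsilon^2 L^2$ arising from a second-order Taylor expansion of the finite difference. The essential quantity is the second moment $\mathbb{E}\|\hat{\nabla} f\|^2$, whose computation reduces to fourth-moment integrals of the masked Gaussian $\mathbf{z}\odot\mathbf{m}$; this is precisely where the dimension factor enters in the normalized form $(u+2)$ (the ``$+2$'' being the characteristic ZO second-moment constant), so that extreme sparsity keeps this factor near its constant floor rather than scaling with the full dimension $d$. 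I would carry the bounded-variance assumption on the stochastic gradients through this computation as well.

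Second, I would substitute these moment bounds into the smoothness descent inequality $f(\mathbf{w}^{t+1}) \le f(\mathbf{w}^t) - \eta\,\langle \nabla f(\mathbf{w}^t),\, \mathbb{E}\hat{\nabla} f\rangle + \tfrac{L\eta^2}{2}\,\mathbb{E}\|\hat{\nabla} f\|^2$. The inner-product term yields a negative progress term of order $\eta\|\nabla f\|^2$ (modulo the smoothing bias), and the condition $\eta \le \frac{1}{L(u+2)}$ ensures the quadratic variance term cannot overwhelm it, leaving a net single-step decrease.

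The main obstacle, and the step that dictates the second, tighter learning rate, is controlling the drift of the local iterates $\mathbf{w}_k^t$ from the round anchor $\mathbf{w}^{r-1}$ across the $T$ inner steps. I would bound $\mathbb{E}\|\mathbf{w}_k^t - \mathbf{w}^{r-1}\|^2$ by a recursion in the step index, then use $L$-smoothness to convert it into an error on $\nabla f(\mathbf{w}_k^t) - \nabla f(\mathbf{w}^{r-1})$ that feeds back into the descent. This accumulation is what produces the term growing linearly in $T$, appearing as $\mathcal{O}(T/(2+u))$, and it captures the central tension: increasing $T$ speeds the $\mathcal{O}((2+u)^2/(TR))$ decay of the optimization term but inflates drift, so that small $T$ (high-frequency synchronization) lowers this part of the error floor. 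The constraint $\eta \le \frac{\mu\sqrt{c}\bigl(1+\sqrt{c_h}\bigr)}{2L^2(2+u)^2}$ is what I expect is needed to keep the drift contribution subdominant so that, after applying the PL inequality $\|\nabla f\|^2 \ge 2\mu(f-f^*)$ (with the assumption constants $c,c_h$ linking the masked gradient and the heterogeneity to the optimality gap), the per-round recursion becomes a clean contraction. Telescoping over $r=0,\dots,R-1$, averaging, and collecting the residual smoothing-bias and heterogeneity terms into the non-vanishing $\mathcal{O}(1)$ floor then yields Eq.~\eqref{eq:formal_theory_meerkat}.
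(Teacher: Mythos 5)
Your proposal is correct in its overall architecture and matches the paper's skeleton at the start and end: the paper likewise begins with the two moment lemmas (unbiasedness of the masked estimator, Lemma~\ref{lem:A2}, and the second-moment identity $\mathbb{E}_{\bar z}\|\hat\nabla f\|^2=(2+u)c\|\nabla f\|^2$, Lemma~\ref{lem:A1}, which is exactly your ``$+2$'' fourth-moment constant), uses $L$-smoothness for a one-step descent with the threshold $\eta\le\frac{1}{L(u+2)}$, and closes with the PL inequality, a telescope over $R$ rounds, and the second threshold guaranteeing positivity of the PL-contraction coefficient. Where you genuinely diverge is the middle step --- how the $T$ local steps are controlled. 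You propose the classical FedAvg-style drift recursion on $\mathbb{E}\|\mathbf{w}_k^t-\mathbf{w}^{r}\|^2$, converted to gradient error via smoothness. The paper never bounds iterate drift; instead it (a) proves a client-level convergence theorem bounding $\frac{1}{T}\sum_t\mathbb{E}\|\nabla f_k(\mathbf{w}_k^t)\|^2$ by $\frac{2L(u+2)}{cT}\bigl(f_k(\mathbf{w}^r)-f_k^*\bigr)+\sigma^2$ (Eq.~\ref{eq:client_grad_convergence_rate}) and feeds that into the quadratic term of the global descent, (b) bounds the inner-product term by Cauchy--Schwarz, Assumption~\ref{assumption:assumption5}, the triangle inequality, and the heterogeneity Assumption~\ref{assumption:assumption3}, absorbing the resulting $\sigma_h\|\nabla f(\mathbf{w}^r)\|$ cross term with a Young's-inequality parameter $\delta$ chosen at the end, and (c) invokes the local--global optimality gap assumption ($\Delta_k$, Assumption~\ref{assumption:local_global_gap}) to rewrite $\sum_k f_k^*$ in terms of $f^*$, which is why $\frac{L}{K}\sum_k\Delta_k$ appears in the paper's $\mathcal{O}(1)$ floor. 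Consequently the mechanisms behind the terms differ: in the paper the $\mathcal{O}(T/(2+u))$ term is purely the accumulated ZO stochastic-gradient variance ($\propto T c\sigma^2/(L(2+u))$), not drift, and the second learning-rate threshold controls the client-suboptimality feedback term $L^2\eta^2T(2+u)^2\sum_r(f(w^r)-f^*)$ rather than a drift recursion. Your route is arguably more standard and has the advantage of not needing the $\Delta_k$ assumption at all (heterogeneity enters directly through the drift bound), and it would yield the same scaling in $T$, $R$, and $(2+u)$; the paper's route buys reusability --- the same client-level lemma and $\Delta_k$ machinery are recycled verbatim in the \textsc{Meerkat-vp} analysis --- at the cost of an extra assumption. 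One minor difference: the paper sidesteps your $\epsilon^2L^2$ smoothing-bias bookkeeping by taking the $\epsilon\to0$ limit in Lemma~\ref{lem:A2} and treating the estimator as exactly unbiased.
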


\begin{theorem}[Convergence rate of \textsc{Meerkat-vp}]
\label{thm:global-convergence-merkatvp:formal}Under Assumptions~\ref{assumption:assumption1}--\ref{assumption:assumption5}, if the learning rate satisfies $\eta = \min\left\{ \frac{1}{L(u+2)}, \frac{\mu\,\sqrt{c}\,\bigl(K_g\,T + K_b\bigr)}
     {2\,K\,(2+u)^2\,L^2\,T\,\gamma} \right\}$ and each client $k \in K_b$ performs $T = 1$ local step while the remaining $K_g$ clients perform $T$ local steps, then the global model \(\{ \mathbf{w}^r \}\) generated by the \textsc{Meerkat-vp} algorithm satisfies the following convergence bound:
\begin{equation}\label{eq:vpcs_convergence_bigO_informal}
\begin{aligned}
\frac{1}{R}\sum_{r=0}^{R-1}\mathbb{E}_{\bar z}\bigl[f(w^r)-f^*\bigr]
&\le
O\!\biggl(\frac{(K_g+K_b)^2\,(2+u)^2\,\gamma\,T}{c\,(K_gT+K_b)^2\,R}\biggr) +\,O\!\biggl(\frac{1+u}{K_g+K_b}\sum_{k=1}^{K_g+K_b}\Delta_k\biggr) \\[4pt]
&\quad +\,O\!\biggl(\frac{c\,T\,K_g}{(K_g+K_b)(1+u)\,\gamma}\biggr) +\,O\!\biggl(\frac{c\,K_b\,\sigma_h^2}{(K_g+K_b)(1+u)\,T\,\gamma}\biggr)
+O\!\bigl(1\bigr).
\end{aligned}
\end{equation}
\end{theorem}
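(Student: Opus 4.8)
The plan is to reuse the single-round descent machinery behind Theorem~\ref{thm:global-convergence:formal} and adapt it to the heterogeneous local-step regime, in which the $K_b$ extreme Non-IID clients take a single ZO step ($T=1$) while the $K_g$ remaining clients run the full $T$ steps. First I would import the per-step properties of the sparse ZO estimator from Eq.~\eqref{eq:sparse_zo}: its expectation equals the mask-restricted Gaussian-smoothed gradient up to an $O(\epsilon)$ bias, and its second moment inflates the true gradient norm by a factor scaling like $(2+u)$, with $u$ the mask density. These are the same estimator bounds that produce the $(2+u)^2$ factors in Theorem~\ref{thm:global-convergence:formal}, so I would state them as a lemma rather than re-derive them.

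Next I would write the global update $w^r-w^{r-1}$ as the server-side aggregation of the reconstructed virtual paths. Because each good client contributes $T$ accumulated ZO steps and each bad client contributes exactly one, the aggregate carries an effective step count of $K_gT+K_b$ — precisely the quantity appearing in the learning-rate condition. Substituting this into the $L$-smoothness descent inequality (Assumptions~\ref{assumption:assumption1}--\ref{assumption:assumption5}) yields a one-round bound of the shape $\mathbb{E}[f(w^r)]\le\mathbb{E}[f(w^{r-1})]-(\text{descent})+(\text{drift}+\text{variance})$, where the quadratic-in-$\eta$ remainder must be dominated by the linear descent term under the stated $\eta$.

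I would then bound the drift and variance contributions separately for the two groups, which is the crux of the argument. For the $K_g$ good clients the $T$ local steps accumulate client drift that I would control through $L$-smoothness and the gradient-dissimilarity quantities $\Delta_k$, producing the $\tfrac{1+u}{K_g+K_b}\sum_k\Delta_k$ and $\tfrac{cTK_g}{(K_g+K_b)(1+u)\gamma}$ terms. For the $K_b$ bad clients the single step cannot drift, but its one-shot estimate carries the full heterogeneity variance $\sigma_h^2$; because this contribution is normalized by the effective step count it enters with the characteristic $1/T$ scaling, giving the $\tfrac{cK_b\sigma_h^2}{(K_g+K_b)(1+u)T\gamma}$ term. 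Finally I would invoke the PL inequality to convert the accumulated gradient-norm descent into a bound on $f(w^r)-f^*$, telescope $\sum_{r=0}^{R-1}\mathbb{E}[f(w^r)-f(w^{r+1})]$, and divide by $R$ to recover the $O(1/R)$ optimization term with the $(K_gT+K_b)^2$ denominator; the residual smoothing bias leaves the $O(1)$ floor.

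The main obstacle I anticipate is the asymmetric aggregation. Standard FedAvg/FedZO analyses assume a uniform number of local steps, so client drift telescopes cleanly across a single normalization; here the mismatch between $T$-step and $1$-step clients forces the descent and drift terms to be tracked per group and recombined only at aggregation. Getting the learning-rate ceiling $\eta\le\frac{\mu\sqrt{c}\,(K_gT+K_b)}{2K(2+u)^2L^2T\gamma}$ right — so that the two groups' quadratic variance terms are jointly absorbed into the single descent term — is the delicate step, and it is what pins down the exact constants in the four error terms.
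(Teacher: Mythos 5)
Your proposal follows essentially the same route as the paper's proof: the same good/bad client decomposition with effective step count $K_gT+K_b$, the same sparse-ZO moment lemmas (unbiasedness and the $(2+u)c$ second-moment inflation) feeding an $L$-smoothness descent inequality, separate drift/variance accounting for the two groups, and finally the PL inequality plus telescoping over $R$ rounds with a learning-rate ceiling chosen so the quadratic terms are absorbed by the descent term. The only inaccuracy is cosmetic: the paper's $\Delta_k$ are local--global optimality gaps $\|\mathbf{w}_k^*-\mathbf{w}^*\|^2$ converted to function-value gaps via smoothness (with $\gamma$ entering when the good clients' local suboptimality $\sum_{k\in K_g}(f_k(w^r)-f_k^*)$ is folded back into the global suboptimality), not gradient-dissimilarity quantities, but this does not alter the structure of the argument.
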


The detailed theoretical analysis and proofs for Theorem~\ref{thm:global-convergence:formal} (\textsc{Meerkat}) can be found in Appendix~\ref{sec:meerkat convergence analysis}, and for Theorem~\ref{thm:global-convergence-merkatvp:formal} (\textsc{Meerkat-vp}) in Appendix~\ref{sec:vp-client-selection}.

\textbf{Insights of \textsc{Meerkat}.} 
\textsc{Meerkat}'s convergence reveals the intricate interplay of local steps $T$ and density $u$ on performance. (1) \textit{\textsc{Meerkat}'s sparsity can theoretically improve performance.} Lower $u$ (higher sparsity) quadratically benefits the rate-dependent term $(\propto (2+u)^2)$, favoring faster initial convergence. However, it inflates the steady-state error \(\bigl(\propto \tfrac{1}{2 + u}\bigr)\). Comparing to the full-parameter case $(u = 1)$, sparsity $(u < 1)$ can reduce the overall bound by decreasing the rate-dependent term, offering communication and computational benefits. Yet, excessive sparsity can increase the steady-state error, suggesting an optimal density level \(u \in (0,\,1]\). (2) \textit{High frequency communication with sparsity can lower the error floor.} Increasing $T$ improves the transient term scaling with $\mathcal{O}\!\bigl(\frac{(2 + u)^2}{T\,R}\bigr)$, potentially accelerating convergence towards the steady state; however, it expands the steady-state term $\mathcal{O}\!\bigl(\tfrac{T}{2 + u}\bigr)$, thereby increasing the error floor. Conversely, decreasing $T$ reduces the steady-state term, leading to a tighter final accuracy. Although smaller $T$ can lead to larger rate-dependent term. It's impact diminishes as the number of rounds $R$ increases. This analysis suggests that operating with frequent communication can theoretically reduce the steady-state error.

\textbf{Advantages of \textsc{Meerkat-vp}.} 
We compare each component of the error bound under the same \(T\) and \(R\). First, the \emph{transient term ratio} between \textsc{Meerkat-vp} and \textsc{Meerkat} is approximately \(\gamma(1+\sqrt{c_h})^2 < 1\), and as \(c_h \to 1\) so $\gamma \to 0$, the product \(\gamma(1+\sqrt{c_h})^2 \to 0\), causing the transient error to vanish. Second, the \emph{noise term ratio} is given by \(\frac{\sigma_h^2/2}{\sigma_h^2 / (\mu(1+\sqrt{c_h})^2)} = \frac{\mu(1+\sqrt{c_h})^2}{2}\), which remains below 1 whenever \(\mu(1+\sqrt{c_h})^2 < 2\). Since \(\mu < 1\) empirically, this condition typically holds. Moreover, \textsc{Meerkat-vp} introduces an additional variance term \(\frac{c K_b \sigma_h^2}{2K (2 + u) L T \gamma}\) that decays as \(\mathcal{O}(1/T)\), making it negligible for large local steps. Lastly, in terms of heterogeneity, the coefficient of the heterogeneity term \(\sum_k \Delta_k\) in \textsc{Meerkat-vp} is smaller: \(\frac{(2+u)L}{4K} < \frac{L}{K}\), and the extra variance term scales inversely with \(K\), thus diminishing in larger systems. Therefore, $E_{\mathrm{\textsc{Meerkat-vp}}}
\;<\;
E_{\mathrm{\textsc{Meerkat}}}
$ and this gap widens as data heterogeneity \(c_h\) increases. The detailed mathematical derivations and analysis, please refer to the Appendix~\ref{sec:vp-client-selection}.

\subsection{Claim 1: \textsc{Meerkat} Can Outperforms Full-Parameter Federated ZO Under Same Synchronization Frequency}\label{subsec:meerkat_outperform}
We claim that with fixed and extreme sparsity, \textsc{Meerkat} outperforms full-parameter ZO in federated LLM fine-tuning under the same synchronization frequency and effectively mitigates the Non-IID client data problem through frequent synchronization and sparsity.

\textbf{Advantages of Sparsity in Federated ZO.} ZO has an intrinsic need for sparsity due to its reliance on nearly uniform perturbations across dimensions. Research on ZO shows that selecting sensitive parameters using gradient-based methods consistently outperforms alternative strategies such as weight magnitude or random parameter selection~\cite{guo2024zerothorderfinetuningllmsextreme}. 
Following this idea, \textsc{Meerkat} produces LLM-sensitive parameters with gradient-based sparsification on pre-training data such as C4~\cite{raffel2020exploring}. Moreover, 
\textsc{Meerkat} fine-tunes LLMs by estimating gradients through forward passes, completely bypassing backpropagation. This approach minimizes the need to cache gradients and activations, leading to significant memory savings. Focusing on sensitive parameters — those with the greatest influence on loss value changes upon random perturbation — \textsc{Meerkat} ensures efficient and effective fine-tuning even under extreme sparsity levels (e.g., updating only $0.1\%$ of the parameters). Furthermore, these sensitive parameters, derived from pre-training datasets like C4, exhibit transferability across downstream tasks, allowing for parameter-efficient ZO. Theoretical analysis (Appendix~\ref{sec:meerkat convergence analysis}) also confirms that lower density $u$ leads to faster convergence via improved rate-dependent terms $\mathcal{O}((2+u)^2/(TR))$, while excessive sparsity increases the steady-state error $\mathcal{O}(T/(2+u))$, suggesting an optimal sparsity trade-off.

\textbf{Performance Under High Synchronization Frequency.} The lightweight communication of \textsc{Meerkat} enables frequent client-server synchronization at a low cost, which is crucial for addressing data heterogeneity~\cite{yang2024fedfed, mendieta2022local} in FL. In high-frequency communication scenarios, both the clients and the server only exchange a list of scalars (projected gradients) whereas in lower-frequency synchronization, clients have to upload projected gradients but still download sparse model parameters. By eliminating the need to download sparse model parameters in high-frequency synchronization, this approach is significantly more bandwidth-efficient, further minimizing communication overhead.  We present the high-frequency synchronization algorithm of \textsc{Meerkat} in Appendix~\ref{sec:theoretical_proof} Algorithm~\ref{alg:fl-szofreq}. By facilitating frequent synchronization, training can better prevent clients from drifting. Our previous theoretical analysis also demonstrates that a smaller $T$ might influence the rate-dependent term, its beneficial impact on reducing the steady-state error is significant for achieving a tighter final accuracy over many rounds $R$.

\subsection{Claim 2: Empirical GradIP Phenomenon Reveals Data Heterogeneity}\label{subsec:gradip_phenomenon}
\textsc{Meerkat}'s traceable virtual path allows us to analyze client local training dynamics, revealing an empirical phenomenon related to data heterogeneity via a metric we call GradIP.
\begin{definition} Gradient Inner Product (GradIP) score: \label{def:gradip}
    let $\hat{\nabla}f_k^t$ (see Algorithm~\ref{alg:fl-szogeneral}) denote the gradient of LLM computed by ZO with Eq~\eqref{eq:sparse_zo} on client $k$ at local step $t$. Let $\nabla f_p$ denote the gradient of LLM computed by backpropagation on pre-training data. We define the GradIP score as $\langle \nabla f_p, \hat{\nabla} f_k^t\rangle$.
\end{definition}
\textbf{GradIP As Indicator for Data Heterogeneity.}
Leveraging the virtual path reconstruction capability of \textsc{Meerkat}, the server can trace each client's local training trajectory. This process uses the uploaded projected gradients $g_k^t$ along with the shared random seeds (which regenerate $\mathbf{z}_k^t$) and the sparse mask $\mathbf{m}$ to reconstruct the local gradient $\hat{\nabla}f_k^t$. To understand the impact of a client's local data distribution on its training process, we introduce the \textit{GradIP} metric. Inspired by the use of pre-training data gradients to identify sensitive parameters, GradIP quantifies the cosine similarity between the local gradient computed during client training and the LLM pre-training gradient.

\textbf{Empirical GradIP Phenomenon.}
Through the traceable virtual path provided by \textsc{Meerkat}, we empirically investigated the behavior of the GradIP score among clients with different data distributions (IID and Non-IID) over their local training steps. Our analysis, presented in Appendix~\ref{sec:empirical_analysis_of_gradip_phenomenon_proof}, demonstrates distinct patterns in the dynamics of gradient norms based on data heterogeneity. While IID client gradient norms exhibit fluctuations, those of extremely Non-IID clients decay and converge towards zero. The GradIP definition depends on the fixed pre-training gradient norm, local client gradient norm, and the angle $\theta$ between them. Given the data dissimilarity, we hypothesize that $\theta$ between these two gradient vectors is nearly orthogonal. This leads us to expect a different manifestation of the GradIP Phenomenon when comparing IID and extremely Non-IID clients, primarily influenced by their differing local gradient norm trajectories.

\begin{wrapfigure}[30]{R}{0.53\textwidth}
\vspace{-8mm}
  \begin{minipage}{0.53\textwidth}
    \begin{algorithm}[H]
    \setlength{\intextsep}{0pt}
    \setlength{\floatsep}{0pt}
    \caption{\textsc{Meerkat-vp}}
    \label{algo:Meerkat_vp}
    \footnotesize
    \begin{algorithmic}[1]
        \STATE {\bfseries Input:} calibration step $T_{\textsf{cali}}$, pre-training gradients $\nabla f_{\text{C4}}$, projected gradients 
        $\{g^{1}_k,\dots,g_k^{T_{\textsf{cali}}}\}$,
        seed $s_r^t$, sparse mask $\mathbf{m}$, initial phase steps $T_{\textsf{init}}$, later phase steps $T_{\textsf{later}}$, convergence threshold $\sigma$, Initial to later ratio $\rho_{\textsf{later}}$, quiescen step ratio $\rho_{\textsf{quie}}$
        \STATE \textbf{Step 1: Virtual Path Reconstruction \& GradIP Calculation}
        \STATE Generate $\mathbf{z}_k^t$ using $s_r^t$. 
        \STATE Compute $\hat{\nabla}f_k^t = g_k^t \cdot (\mathbf{z}_k^t \odot \mathbf{m})$ 
        \STATE Compute $\mathsf{Gradip} = \hat{\nabla} f_k^t \cdot \nabla f_{\text{C4}}$ (Definition~\ref{def:gradip}). 
        \STATE \textbf{Step 2: Identify Extremely Non-IID Clients}
        \STATE Compute the average value of $\mathsf{Gradip}$ over the initial-phase steps.
        \vspace{-4pt}
        \begin{equation*}
        \mathsf{Gradip}_{\text{init\_avg}} = \frac{1}{T_{\textsf{init}}} \sum_{t=1}^{T_{\textsf{init}}} \mathsf{Gradip}_t
        \end{equation*}
        \vspace{-8pt}
        \STATE Compute the average value of $\mathsf{Gradip}$ over the later-phase steps.
        \vspace{-4pt}
        \begin{equation*}
        \mathsf{Gradip}_{\text{later\_avg}} = \frac{1}{T_{\textsf{later}}} \sum_{t=1}^{T_{\textsf{later}}} \mathsf{Gradip}_t
        \end{equation*}
        \vspace{-8pt}
        \STATE Compute the client's Initial to later ratio $\rho_{\textsf{later\_client}}$ and quiescent step ratio $\rho_{\textsf{quie\_client}}$
        \vspace{0pt}
        \begin{equation*}
        \rho_{\textsf{quie\_client}} = \frac{\{s \in \{1,2,\ldots,T_{\textsf{later}}\} \mid \mathsf{Gradip}_s < \sigma\}}{T_{\textsf{later}}}
        \end{equation*}
        \vspace{0pt}
        \begin{equation*}
        \rho_{\textsf{later\_client}} = \frac{\mathsf{Gradip}_{\text{init\_avg}}}{\mathsf{Gradip}_{\text{later\_avg}}}
        \end{equation*}
        \vspace{-4pt}
        \STATE Record client IDs whose $\rho_{\textsf{later\_client}}$ or $\rho_{\textsf{quie\_client}}$ exceed $\rho_{\textsf{later}}$ or $\rho_{\textsf{quie}}$. 
        \STATE \textbf{Step 3: Early Stopping}
        \STATE Require these identified clients to only perform one local training step.
    \end{algorithmic}
    \end{algorithm}
  \end{minipage}
\end{wrapfigure}
\subsection{Claim 3: Virtual Path Client Selection via GradIP Analysis}\label{subsec:vpcsclaim}

Building upon the traceable virtual path capability introduced in \textsc{Meerkat}, we claim that VPCS, by leveraging GradIP analysis, effectively identifies and manages clients with extremely Non-IID data distribution, thereby improving global model performance and convergence.
As established in Section~\ref{subsec:gradip_phenomenon}, the GradIP score, computable by the server through virtual path reconstruction, provides a effective signal to identify such clients. VPCS utilizes this GradIP signal to detect extremely Non-IID clients. By analyzing the GradIP score trajectory and its behavior over local steps during a calibration phase, using metrics defined in Appendix table~\ref{tab:meerkat_vp_parameter_notation}, the server empirically identifies clients exhibiting the characteristic diminishing GradIP behavior associated with extremely Non-IID data distribution. Upon identification via GradIP analysis, VPCS applies early stopping: these clients perform only one local training step per communication round. To ensure full data utilization over training, a data pointer tracks the batch processed, allowing clients to resume from that point in subsequent rounds. This strategy mitigates client drift from skewed data while ensuring their entire dataset is eventually processed. Algorithm~\ref{algo:Meerkat_vp} outlines the detailed procedure, and Figure~\ref{fig:vpoverview} illustrates the workflow. Our previous theoretical analysis of \textsc{Meerkat-vp} suggests that early stopping on extremely Non-IID clients can lead to improved global model performance.

\section{Experiment}
\label{sec:experiment}

In this section, we aim to validate the effectiveness of \textsc{Meerkat} and \textsc{Meerkat-vp}. We aim to address the following research questions in response to claims in Section~\ref{sec:method}.

\begin{itemize}[nosep,leftmargin=*]
    \item \textbf{RQ 1 for Claim 1 (\ref{subsec:meerkat_outperform}):} Is \textsc{Meerkat} more effective than full parameter federated ZO under the same synchronization frequency, especially in heterogeneous environments?
    \item \textbf{RQ 2 for Claim 2 (\ref{subsec:gradip_phenomenon}):} Can the empirical GradIP phenomenon, observed via the virtual path, effectively reveal data heterogeneity by showing distinct behaviors for IID and Non-IID clients?
    \item \textbf{RQ 3 for Claim 3 (\ref{subsec:vpcsclaim}):} Can \textsc{Meerkat-vp}, leveraging GradIP analysis, mitigate the impact of extreme Non-IID data compared to \textsc{Meerkat}?
\end{itemize}

We focus on models Gemma-2-2b~\cite{gemma_2024}, Qwen2-1.5B~\cite{qwen2}, Llama-3.2-1B~\cite{dubey2024llama}. We conduct experiments on SST2~\cite{socher2013recursive}, AG’s News~\cite{zhang2015character}, Yelp polarity (yelp)~\cite{zhang2015character}, RTE~\cite{wang2018glue}, BoolQ~\cite{clark2019boolq}, WSC~\cite{levesque2012winograd}, WiC~\cite{pilehvar2018wic} datasets. The datasets are partitioned across clients following a Dirichlet distribution to simulate clients with Non-IID data. For more experimental settings, we refer the readers to Appendix~\ref{sec:experiment_settings}.

\begin{table*}[!htbp]
\centering
\captionsetup{width=\textwidth, justification=centering}
\caption{Performance comparison of \textsc{Meerkat} and Full-FedZO on tasks SST-2, AgNews, Yelp, BoolQ, RTE, WSC, WIC under an Non-IID setting. 
``Acc'' is the average test accuracy across tasks. Bold numbers indicate the highest value in each row.}
\label{tab:comparison_full_vs_sparse_noniid}
\resizebox{0.9\textwidth}{!}{
\begin{tabular}{llc cccccccc}
\toprule
& \textbf{Methods} & \textbf{Local Step} & \textbf{SST-2} & \textbf{AgNews} & \textbf{Yelp} & \textbf{BoolQ} & \textbf{RTE} & \textbf{WSC} & \textbf{WIC} & \textbf{Acc} \\
\midrule
\multirow{8}{*}{\textbf{LLaMA-3.2-1B}}
& Full-FedZO & 10     & 0.909 & 0.705 & 0.940 & 0.641 & 0.542 & 0.634 & 0.523 & 0.699 \\
& Weight Magnitude & 10     & 0.902 & 0.857 & 0.951 & 0.696 & 0.551 & 0.519 & 0.546 & 0.717 \\
& Lora-FedZO & 10     & 0.901 & 0.749 & 0.96 & 0.649 & 0.524 & 0.634 & 0.59 & 0.715 \\
& \textsc{Meerkat} & 10 & \textbf{0.916} & \textbf{0.872} & \textbf{0.964} & 0.695 & \textbf{0.600} & \textbf{0.653} & \textbf{0.614} & \textbf{0.759} \\
\cmidrule(lr){2-11}
& Full-FedZO & 30     & 0.904 & 0.706 & 0.935 & 0.636 & 0.533 & 0.634 & 0.539 & 0.698 \\
& Weight Magnitude & 30     & 0.902 & 0.84 & 0.946 & 0.674 & 0.542 & 0.556 & 0.550 & 0.716 \\
& Lora-FedZO & 30     & 0.904 & 0.556 & 0.964 & 0.652 & 0.533 & 0.634 & 0.545 & 0.684 \\
& \textsc{Meerkat} & 30 & 0.897 & \textbf{0.862} & \textbf{0.965} & 0.646 & \textbf{0.577} & \textbf{0.644} & \textbf{0.583} & \textbf{0.739} \\
\cmidrule(lr){2-11}
& Full-FedZO & 50 & 0.889 & 0.696 & 0.935 & 0.633 & 0.542 & 0.634 & 0.529 & 0.694 \\
& Weight Magnitude & 50     & 0.897 & 0.838 & 0.948 & 0.662 & 0.551 & 0.562 & 0.554 & 0.716 \\
& Lora-FedZO & 50     & 0.876 & 0.447 & 0.967 & 0.639 & 0.541 & 0.634 & 0.562 & 0.667 \\
& \textsc{Meerkat} & 50 & \textbf{0.909} & 0.827 & \textbf{0.965} & 0.647 & \textbf{0.595} & 0.634 & \textbf{0.567} & \textbf{0.734} \\
\cmidrule(lr){2-11}
& Full-FedZO & 100  & 0.901 & 0.705 & 0.939 & 0.632 & 0.533 & 0.634 & 0.525 & 0.695 \\
& Weight Magnitude & 100     & 0.885 & 0.83 & 0.946 & 0.66 & 0.56 & 0.534 & 0.548 & 0.709 \\
& Lora-FedZO & 100     & 0.868 & 0.247 & 0.953 & 0.642 & 0.521 & 0.634 & 0.529 & 0.628 \\
& \textsc{Meerkat} & 100 & 0.896 & 0.777 & \textbf{0.961} & 0.658 & \textbf{0.577} & \textbf{0.644} & \textbf{0.573} & \textbf{0.726} \\
\midrule
\multirow{8}{*}{\textbf{Qwen2-1.5b}}
& Full-FedZO & 10 & 0.888 & 0.700 & 0.928 & 0.694 & 0.808 & 0.673 & 0.639 & 0.761 \\
& Weight Magnitude & 10     & 0.881 & 0.84 & 0.939 & 0.681 & 0.795 & 0.672 & 0.623 & 0.776 \\
& Lora-FedZO & 10     & 0.939 & 0.847 & 0.944 & 0.667 & 0.795 & 0.663 & 0.521 & 0.768 \\
& \textsc{Meerkat} & 10 & \textbf{0.949} & \textbf{0.881} & 0.934 & \textbf{0.752} & \textbf{0.813} & \textbf{0.682} & 0.628 & \textbf{0.805} \\
\cmidrule(lr){2-11}
& Full-FedZO & 30  & 0.892 & 0.699 & 0.926 & 0.708 & 0.791 & 0.663 & 0.594 & 0.753 \\
& Weight Magnitude & 30     & 0.88 & 0.843 & 0.939 & 0.681 & 0.786 & 0.673 & 0.594 & 0.771 \\
& Lora-FedZO & 30     & 0.923 & 0.843 & 0.948 & 0.666 & 0.777 & 0.673 & 0.519 & 0.764 \\
& \textsc{Meerkat} & 30 & \textbf{0.944} & \textbf{0.878} & 0.928 & \textbf{0.734} & \textbf{0.800} & 0.663 & \textbf{0.624} & \textbf{0.795} \\
\cmidrule(lr){2-11}
& Full-FedZO & 50     & 0.868 & 0.696 & 0.922 & 0.707 & 0.773 & 0.663 & 0.594 & 0.746 \\
& Weight Magnitude & 50     & 0.883 & 0.855 & 0.938 & 0.703 & 0.768 & 0.673 & 0.595 & 0.774 \\
& Lora-FedZO & 50     & 0.934 & 0.834 & 0.941 & 0.679 & 0.76 & 0.653 & 0.510 & 0.759 \\
& \textsc{Meerkat} & 50 & \textbf{0.948} & \textbf{0.872} & 0.926 & \textbf{0.746} & \textbf{0.795} & 0.663 & 0.594 & \textbf{0.792} \\
\cmidrule(lr){2-11}
& Full-FedZO & 100    & 0.864 & 0.691 & 0.917 & 0.675 & 0.777 & 0.653 & \textbf{0.620} & 0.742 \\
& Weight Magnitude & 100     & 0.888 & 0.842 & 0.934 & 0.695 & 0.768 & 0.656 & 0.579 & 0.766 \\
& Lora-FedZO & 100     & 0.934 & 0.785 & 0.937 & 0.664 & 0.786 & 0.653 & 0.512 & 0.753 \\
& \textsc{Meerkat} & 100 & \textbf{0.936} & \textbf{0.878} & 0.925 & \textbf{0.741} & \textbf{0.795} & \textbf{0.663} & 0.610 & \textbf{0.792} \\
\midrule
\multirow{8}{*}{\textbf{Gemma2-2b}}
& Full-FedZO & 10     & 0.928 & 0.721 & 0.943 & 0.731 & 0.564 & 0.644 & 0.595 & 0.732 \\
& Weight Magnitude & 10     & 0.931 & 0.849 & 0.955 & 0.778 & 0.711 & 0.634 & 0.595 & 0.779 \\
& Lora-FedZO & 10     & 0.936 & 0.853 & 0.966 & 0.763 & 0.568 & 0.663 & 0.605 & 0.765 \\
& \textsc{Meerkat} & 10 & \textbf{0.939} & \textbf{0.869} & 0.96 & \textbf{0.804} & 0.591 & 0.634 & \textbf{0.609} & 0.772 \\
\cmidrule(lr){2-11}
& Full-FedZO & 30     & 0.927 & 0.802 & 0.932 & 0.725 & 0.568 & 0.634 & 0.581 & 0.738 \\
& Weight Magnitude & 30     & 0.935 & 0.851 & 0.951 & 0.771 & 0.653 & 0.634 & 0.598 & 0.770 \\
& Lora-FedZO & 30     & 0.932 & 0.804 & 0.966 & 0.671 & 0.551 & 0.634 & 0.589 & 0.735 \\
& \textsc{Meerkat} & 30 & \textbf{0.94} & \textbf{0.855} & 0.947 & 0.734 & 0.568 & \textbf{0.644} & \textbf{0.601} & 0.756 \\
\cmidrule(lr){2-11}
& Full-FedZO & 50     & 0.932 & 0.791 & 0.943 & 0.712 & 0.582 & \textbf{0.634} & 0.567 & 0.737 \\
& Weight Magnitude & 50     & 0.936 & 0.851 & 0.941 & 0.745 & 0.591 & 0.628 & 0.597 & 0.756 \\
& Lora-FedZO & 50     & 0.91 & 0.779 & 0.942 & 0.664 & 0.557 & \textbf{0.634} & 0.597 & 0.726 \\
& \textsc{Meerkat} & 50 & \textbf{0.945} & \textbf{0.857} & \textbf{0.966} & \textbf{0.767} & \textbf{0.613} & \textbf{0.634} & \textbf{0.623} & \textbf{0.772} \\
\cmidrule(lr){2-11}
& Full-FedZO & 100    & 0.925 & 0.818 & 0.933 & 0.672 & 0.533 & 0.615 & 0.567 & 0.723 \\
& Weight Magnitude & 100     & 0.922 & 0.839 & 0.942 & 0.723 & 0.568 & 0.644 & 0.592 & 0.747 \\
& Lora-FedZO & 100     & 0.922 & 0.247 & 0.942 & 0.62 & 0.541 & 0.634 & 0.573 & 0.640 \\
& \textsc{Meerkat} & 100 & \textbf{0.94} & \textbf{0.851} & \textbf{0.951} & \textbf{0.745} & 0.551 & 0.634 & 0.574 & \textbf{0.749} \\
\bottomrule
\end{tabular}
}
\vspace{-6 mm}
\end{table*}

\subsection{Answer to RQ1: Superiority of \textsc{Meerkat} Compared to Full-FedZO in FL}
\label{subsec:rq1_results}

This section experimentally validates Claim 1 (Section~\ref{subsec:meerkat_outperform}), demonstrating \textsc{Meerkat}'s superiority over full-parameter Federated ZO under the same synchronization frequency and its effectiveness in mitigating Non-IID challenges via high-frequency synchronization.

First, to support the argument for the Advantages of Sparsity, we compared \textsc{Meerkat} against full-parameter Full-FedZO and other sparsity methods (Weight Magnitude, LoRA-FedZO) under equivalent synchronization frequencies (fixed local steps $T=10, 30, 50, 100$). \textsc{Meerkat}'s inherent $0.1\%$ sparsity provides over 1000x communication savings compared to Full-FedZO. Results (Tables~\ref{tab:comparison_full_vs_sparse_noniid}, \ref{tab:comparison_full_vs_sparse_iid}) consistently show \textsc{Meerkat} outperforming Full-FedZO on most tasks. \textsc{Meerkat} also outperforms other sparsity methods on many tasks. These findings validate its superiority at the same communication frequency. Notably, under the same setting, \textsc{Meerkat} demonstrated better performance compared to DeComFL \cite{li2024achievingdimensionfreecommunicationfederated}. The detailed result in Table~\ref{tab:meerkat_vs_decomfl}

Next, we validate our claims by evaluating performance under an extreme communication regime, specifically by setting the local step $T$ to 1. In this high-frequency setup, we compared \textsc{Meerkat} against baselines (Full-FedZO and LoRA-FedZO) under both IID and Non-IID data distributions. For the Non-IID case, we used Dirichlet distributions with $\alpha=0.5, 0.3,$ and $0.1$. Figure~\ref{fig:highfrequency_avg} presents results for $\alpha=0.5$, while results for $\alpha=0.3$ and $0.1$ are available in Appendix~\ref{sec:additional_experiment_results} figure~\ref{fig:high_freq_alpha03_01}. Specifically, Figure~\ref{fig:highfrequency_avg} reveals a remarkable finding: on the Qwen2-1.5b model, \textsc{Meerkat}'s average test accuracy over seven tasks under Non-IID data matches that under IID. Beyond this exact match, results show that at a local step of $T=1$, \textsc{Meerkat} effectively bridges the performance gap between IID and Non-IID settings, achieving nearly comparable test accuracy across both data distributions, and consistently outperforms baselines. We further investigate the performance of \textsc{Meerkat} under high-frequency communication with varying sparsity ratios, as shown in Table~\ref{tab:outlier_percentage} in Appendix~\ref{sec:additional_experiment_results}. Notably, even at extreme sparsity levels such as $10^{-3}$ and $10^{-4}$, \textsc{Meerkat} maintains strong performance. This significantly reduces the memory demand of local computation resources, making it highly suitable for resource-constrained federated settings. Our experimental findings collectively validate Claim 1, demonstrating \textsc{Meerkat}'s superiority over baselines under equivalent communication budgets. The combined use of high-frequency communication and sparsity mechanisms proves effective in mitigating challenges posed by Non-IID data distributions. We also explored sensitive parameter selection using downstream task data. Since performance remained comparable under identical communication frequencies and sparsity levels, we prioritized pre-training data to better preserve client privacy. See Appendix~\ref{sec:additional_experiment_results}, Tables~\ref{tab:c4_vs_taskdata}, \ref{tab:c4_vs_taskdata_IId}, and \ref{tab:meerkat_vs_taskdata_multistep} for details.

\begin{figure*}[!h]
    \centering
    \includegraphics[width=0.9\textwidth]{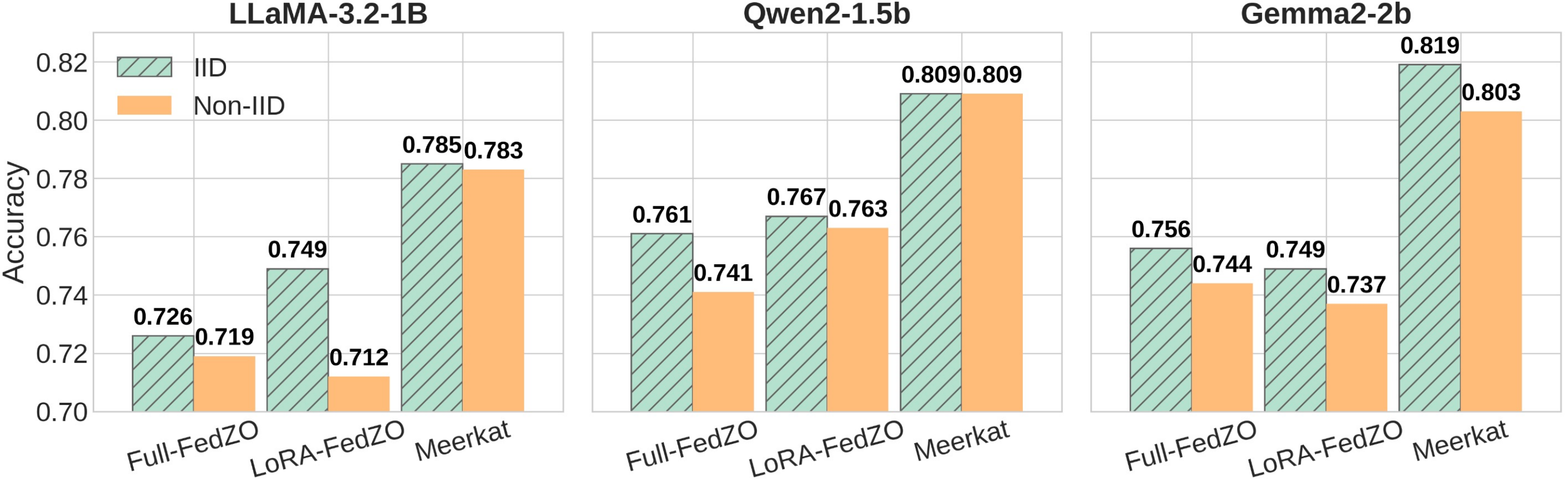}
    \caption{This figure compares three methods—Full-FedZO, LoRA-FedZO, and \textsc{Meerkat}—on three LLMs: LLaMA-3.2-1B, Qwen2-1.5b, and Gemma2-2b. The x-axis shows the different methods, and each method has two bars indicating performance under IID and Non-IID settings. The Non-IID results are obtained under a Dirichlet distribution with $\alpha = 0.5$ .The y-axis represents the average test accuracy across multiple downstream tasks—SST2, AgNews, Yelp, BoolQ, RTE, WSC, and WiC. All detailed results for these tasks are provided in Appendix~\ref{sec:additional_experiment_results}, Table~\ref{tab:outlier_percentage}. }
    \label{fig:highfrequency_avg}
    \vspace{-4pt}
\end{figure*}

\subsection{Answer to RQ2: GradIP Trajectories as Effective Indicators of Data Heterogeneity}
\label{subsec:rq2_results}
\begin{wrapfigure}[17]{R}{0.4\textwidth}
  \vspace{-0pt}
  \centering
  \includegraphics[width=0.4\textwidth]{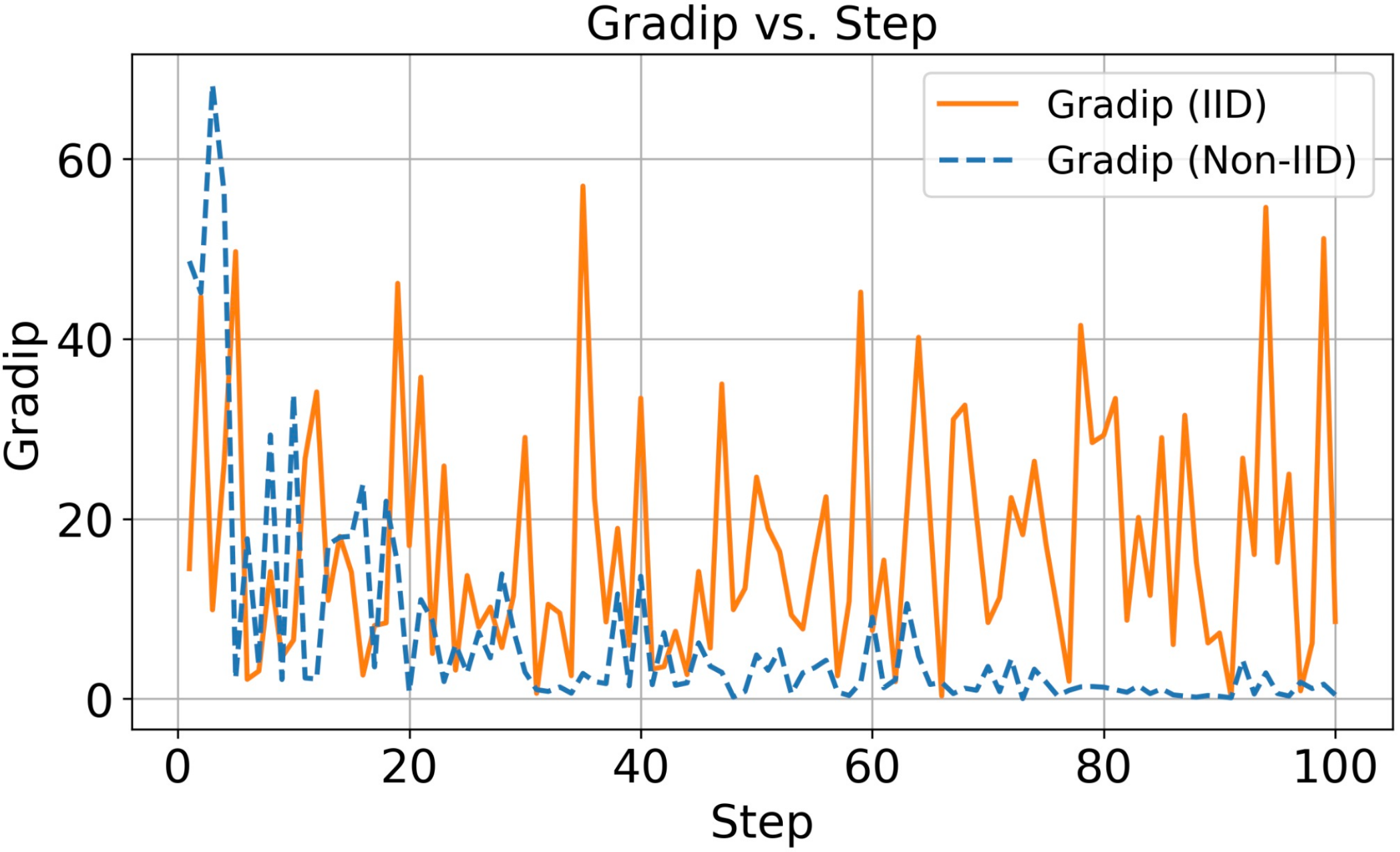}
  \caption{Under a density ratio of $5\times10^{-3}$, we track the GradIP (see Definition~\ref{def:gradip}) over 100 local training steps on the SST-2 dataset using LLaMA-3.2-1B model, comparing a client with IID data to a client with Non-IID data.}
  \label{fig:inner_product_phenomenon}
  \vspace{-0pt}
\end{wrapfigure}
This section experimentally validates Claim 2 (Section~\ref{subsec:gradip_phenomenon}), investigating GradIP trajectories as indicators of data heterogeneity. Based on our theoretical analysis assuming single-label Non-IID data (Section~\ref{sec:empirical_analysis_of_gradip_phenomenon_proof}), we study the dynamics of gradient-related metrics during local training.
We first compare two extremes: IID clients vs. clients with single-label (extreme Non-IID) data. We track three metrics: GradIP score, local gradient norm, and cosine value between the local and pre-training gradients. As shown in Figures~\ref{fig:inner_product_phenomenon} and \ref{fig:gradip_main_fig}, GradIP for extreme Non-IID clients steadily decays to zero over 100 steps, while for IID clients it fluctuates persistently. To understand this, we analyze its components: Figure~\ref{fig:general_phenomenon_of_gradient}(a) shows cosine value stays near zero (i.e., gradients are nearly orthogonal) for both settings, suggesting the gradient norm is the key factor. Indeed, Figure~\ref{fig:general_phenomenon_of_gradient}(b) shows that the gradient norm mirrors GradIP’s behavior across the two settings. Moreover, in later stages, GradIP declines more sharply for Non-IID clients than for IID ones, making this stage-wise mean difference an additional criterion for identifying Non-IID clients. We further extend our analysis to more general Non-IID scenarios (Figure~\ref{fig:gradip_real_noniid_agnews_boolq}, Figure~\ref{fig:model_comparison_gradip_ratiocompare_boolq}, Figure~\ref{fig:model_comparison_gradip_ratiocompare_agnews}), where GradIP exhibits similar dynamics that correlate with the degree of heterogeneity. 

\subsection{Answer to RQ3: VPCS Early Stopping Clients with Extremely Non-IID Data}

This section experimentally validates Claim 3 (Section~\ref{subsec:vpcsclaim}). As established in Section~\ref{subsec:rq2_results}, GradIP trajectories provide an effective signal for identifying clients with extremely Non-IID data, exhibiting distinct behaviors. Leveraging this signal, VPCS detects extremely Non-IID clients during a calibration phase and applies early stopping, limiting them to one local training step per communication round (Algorithm~\ref{algo:Meerkat_vp}). To validate the effectiveness of this VPCS strategy in improving performance, we compared \textsc{Meerkat-vp} with \textsc{Meerkat} and Random Client Selection, which randomly selects the same number of clients for early stopping as VPCS, under Non-IID data distributions dirichlet $\alpha=0.5$ and the same communication frequencies. Crucially, for the same model, dataset, and communication frequency, the three methods employed the same sparsity level. Figure~\ref{fig:meerkatvp_performance} illustrates the average test accuracy across multiple downstream tasks for \textsc{Meerkat-vp} compared to \textsc{Meerkat} and \textsc{Random Client Selection}. Detailed results for individual tasks are presented in Appendix~\ref{sec:additional_experiment_results} Table~\ref{tab:comparison_sparse_vs_virtual_path}. As shown in Figure~\ref{fig:meerkatvp_performance}, \textsc{Meerkat-vp} consistently outperforms both \textsc{Meerkat} and \textsc{Random Client Selection} in different communication frequencies. These experimental results strongly validate Claim 3, confirming that VPCS effectively leverages GradIP analysis to manage extremely Non-IID clients, leading to improved performance for ZO federated LLM fine-tuning.
\begin{figure}[!h]
    \centering
    \includegraphics[width=0.9\textwidth]{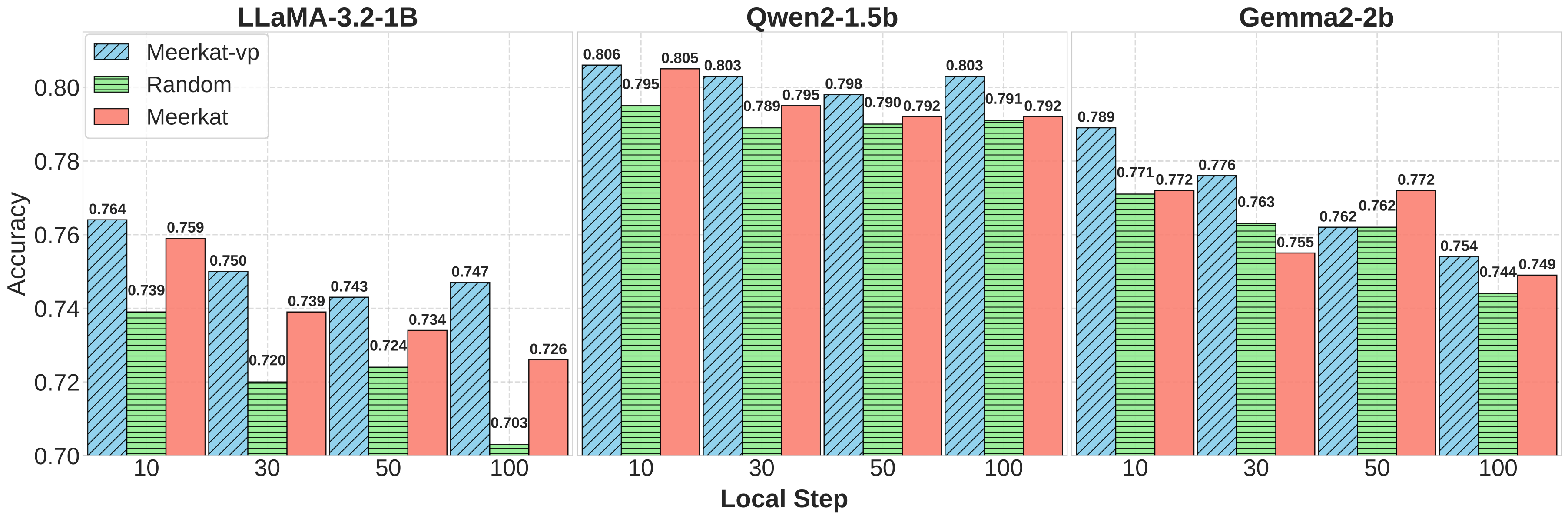}
    \caption{This figure compares two methods—\textsc{Meerkat-vp}, \textsc{Meerkat} and Random Client Selection—across three LLMs: LLaMA-3.2-1B, Qwen2-1.5b, and Gemma2-2b. The x-axis shows the local step values ($10$, $30$, $50$, $100$), while the y-axis indicates the average test accuracy over multiple downstream tasks—SST-2, AgNews, Yelp, BoolQ, RTE, WSC, and WiC—in a Non-IID setting. All detailed results for these tasks are presented in Appendix~\ref{sec:additional_experiment_results} Table~\ref{tab:comparison_sparse_vs_virtual_path}.}
    \label{fig:meerkatvp_performance}
    \vspace{-8pt}
\end{figure}

\section{Related Work}

Our research leverages advances in ZO federated optimization, sparsity techniques for LLMs, and communication frequency adjustments strategies for addressing data heterogeneity. ZO methods significantly reduce computational and communication overhead. Integrating sparsity into LLM fine-tuning amplifies these benefits, substantially decreasing resource demands during training and inference. Concurrently, communication frequency adjustments mitigate performance degradation induced by Non-IID data, emphasizing a crucial trade-off between communication budget and global model performance. A detailed discussion is provided in Appendix~\ref{app:detailed_related_work}.

\section{Conclusion}\label{sec:conclusion}
In this paper, we introduce \textsc{Meerkat}, a sparse zeroth-order federated fine-tuning methodology. Experiments show \textsc{Meerkat} outperforms Full-FedZO and other sparsity methods on most tasks at equivalent communication frequencies. \textsc{Meerkat}'s efficiency enables high-frequency communication, effectively mitigating Non-IID drift. Moreover, we propose \textsc{Meerkat-vp}. This methodology utilizes VPCS, which analyzes GradIP via virtual paths to enable the selective early stopping of extreme Non-IID clients. This approach is shown to improve model performance. Our work thus offers effective methods for efficient ZO federated LLM fine-tuning under varying network conditions and data heterogeneity. Given the technical focus of this work on algorithm, there are no direct negative societal consequences inherent to it that need to be emphasized; potential negative impacts would arise from the specific applications where these methods are deployed.

\bibliographystyle{plainnat}
\bibliography{sections/ref}

\newpage
\appendix

\section*{Appendix}
In Section~\ref{app:detailed_related_work}, we present the related work relevant to this study. In Section~\ref{sec:theoretical_proof}, we present the theoretical convergence analysis of \textsc{Meerkat}, including its high-frequency communication method. Additionally, we analyze the convergence of \textsc{Meerkat-VP} and demonstrate its superior performance compared to \textsc{Meerkat}. We further prove that under extreme Non-IID settings, the gradient norm gradually vanishes during convergence, whereas in IID settings, it tends to oscillate. In Section~\ref{sec:experiment_details}, we provide details on experimental hyperparameters and report supplementary results.
\section{Review of Related Works}\label{app:detailed_related_work}
\textbf{Federated Zeroth-Order Optimization.}
Zeroth-order optimization~\cite{malladi2023fine,zhang2024revisiting} has gained increasing attention in federated learning~\cite{fang2022communication, zhang2021desirable}, particularly for addressing challenges in training costs, privacy, and communication overhead. Fine-Grained ~\cite{chen2024fine} demonstrates how clients can reduce upload overhead by sending estimated gradients rather than full model parameters to the server, though download costs remain significant due to complete model weight transfers. DeComFL~\cite{li2024achievingdimensionfreecommunicationfederated} further advances this approach by using gradient scalars for both uploads and downloads, substantially reducing bidirectional communication costs. However, it does not address the challenges posed by data heterogeneity (Non-IID) in federated learning. The integration of AirComp wireless technology enables direct over-the-air aggregation of model updates~\cite{fang2022communication}. In black-box settings where pre-trained language model parameters are inaccessible, FedBPT~\cite{sun2023fedbptefficientfederatedblackbox} employs ZO to optimize prompt vectors, achieving efficient distributed optimization with reduced computational and communication overhead. FedMeZO~\cite{li2020convergencefedavgnoniiddata} analyzes the convergence properties of ZO for federated LLM fine-tuning.

\noindent \textbf{Sparsity in LLM.}
Current research on sparsity in LLMs explores techniques such as pruning, contextual sparsity prediction, and structured sparsity~\cite{zhang2024dynamicsparsetrainingtrainingfree, liu2023deja,liu2023Scissorhands, lu2024spp, zheng2024learn, shao2024one, wang2019structured, huang2024pruninglargelanguagemodels,zhou2024sirius,su2024defense,xu2024soft}. These methods enhance both training and inference by improving computational efficiency, reducing memory usage, and enabling deployment in resource-constrained environments.  
Sparsity has also proven particularly effective in zeroth-order (ZO) optimization~\cite{guo2024zerothorderfinetuningllmsextreme, liu2024sparsemezoparametersbetter}, especially when combined with weight quantization for fine-tuning LLMs.  
Building on this, our work investigates the role of sparsity in resource-frugal federated fine-tuning of LLMs.  

\noindent \textbf{High-Frequency Communication for Non-IID Federated Learning.} Data heterogeneity across clients is a major challenge in Federated Learning, significantly degrading performance compared to IID settings. Increasing communication frequency, by reducing local training steps per round, is explored as a strategy to mitigate this issue. Early work showed that merely reducing local steps had limited improvements in extreme non-IID scenarios \cite{https://doi.org/10.48550/arxiv.1806.00582}. Theoretical analysis later confirmed that smaller local training steps can improve convergence speed under Non-IID conditions, but at the cost of increased communication budget, highlighting a critical trade-off \cite{li2020convergencefedavgnoniiddata}. To effectively handle challenges arising from non-IID data that often necessitate higher communication, various algorithms have been proposed: SCAFFOLD \cite{karimireddy2021scaffoldstochasticcontrolledaveraging} highlights the 'client-drift' problem in FedAvg, noting it's exacerbated by increased local training steps (reduced communication frequency), and proposes using control variates to mitigate this drift, enabling improved convergence; FedDyn \cite{acar2021federatedlearningbaseddynamic} guarantees consistent convergence to the global optimum even with a larger number of local training steps (lower communication frequency). This overcomes the limitation of traditional methods where high communication frequency is needed to compensate for local-global optimum inconsistency. Empirical studies further demonstrate that performance is highly sensitive to the number of local training steps under different non-IID distributions, and the optimal communication frequency depends on the specific data heterogeneity \cite{li2021federatedlearningnoniiddata}. These works underscore the complex interplay between data heterogeneity, local computation, and communication frequency. This complexity motivates the development of algorithmic solutions to improve efficiency and robustness in FL under Non-IID settings.

\section{Theoretical and Algorithm Analysis}\label{sec:theoretical_proof}

\subsection{Notations and Definitions}

In this subsection, we formally define the assumptions, notations and concepts used in the convergence analysis of \textsc{Meerkat} and \textsc{Meerkat-vp}. Table~\ref{tab:notations_meerkat} summarizes the key symbols.

\begin{table}[H]
\centering
\caption{Notations used in our theoretical analysis.}
\label{tab:notations_meerkat}
\begin{tabular}{|c|l|}
\hline
\textbf{Notation} & \textbf{Meaning} \\
\hline
$\mathbf{w}$ & global model parameter \\
\hline
$K$ & total number of clients in the federated system \\
\hline
$p_k$ & probability or weight assigned to client $k$ \\
\hline
$f_k(\mathbf{w})$ & total loss computed over all data samples of the client k. \\
\hline
$f(\mathbf{w})$ & global loss function evaluated by the global model over all data \\
\hline
$T$ & number of local update steps per communication round \\
\hline
$r$ & communication round \\
\hline
$t$ & local update time step \\
\hline
$\eta$ & local learning rate \\
\hline
$\epsilon$ & perturbation magnitude in ZO estimation \\
\hline
$\mathbf{z}_k^t$ & standard Gaussian vector for client $k$ at local step $t$ from $ \mathcal{N}(0, I_d)$ \\
\hline
$\mathbf{m}$ & binary sparse mask vector ($\mathbf{m} \in \{0,1\}^d$) \\
\hline
$d$ & model dimension \\
\hline
$R$ & federated learning training round \\
\hline
$u$ & sparsity ratio \\
\hline
$c$ & gradient coverage \\
\hline
$g_k^t$ & projected gradient estimate for client $k$ at local step $t$ \\
\hline
$\hat{\nabla} f_k^t$ & zeroth-order gradient of client $k$ at local step $t$ \\
\hline
$L$ & Lipschitz smoothness (Assumption 1) \\
\hline
$\mu$ & PL inequality (Assumption 2) \\
\hline
$f^*$ & minimal global loss achieved by optimizing the global model \\
\hline
$f_k^*$ & minimal client loss achieved by optimizing the local model on client $k$ \\
\hline
$c_h$ and $\sigma_h^2$ & heterogeneity-induced variance (Assumption 4) \\
\hline
$\|\cdot\|_{\text{op}}$ & operator norm of a matrix \\
\hline
$\sigma^2$ & variance of the sparse ZO gradient estimator(Assumption 6) \\
\hline
$\gamma$ & The clients with balanced data distributions contribute to the global model during training. \\
\hline
\end{tabular}
\end{table}

\subsection{Assumptions}

We introduce the assumptions used in the convergence analysis of \textsc{Meerkat} and \textsc{Meerkat-vp}.

\begin{assumption}[ 
 Lipschitz smoothness]\label{assumption:assumption1}
We assume that each client \(k\)'s local objective function \(f_k(\mathbf{w})\) is differentiable and has \(L\)-Lipschitz continuous gradients:
\[
\|\nabla f_k(\mathbf{w}_1) - \nabla f_k(\mathbf{w}_2)\| \leq L \|\mathbf{w}_1 - \mathbf{w}_2\|, \quad \forall \mathbf{w}_1, \mathbf{w}_2 \in \mathbb{R}^d.
\]
Consequently, the global loss \(f(\mathbf{w}) = \sum_{k=1}^K p_k f_k(\mathbf{w})\) is also \(L\)-smooth.
\end{assumption}

\begin{assumption}[PL inequality]\label{assumption:assumption2}
We assume that \(f(\mathbf{w})\) satisfies the Polyak-\L{}ojasiewicz (PL) condition:
\[
f(\mathbf{w}) - f^* \leq \frac{1}{2\mu} \|\nabla f(\mathbf{w})\|^2, \quad \forall \mathbf{w} \in \mathbb{R}^d,
\]
\(\mu > 0\) is the PL constant. This condition holds for a broad class of non-convex objectives and is commonly used in analyzing convergence of gradient-based and zeroth-order methods.
\end{assumption}

\begin{assumption}[Global–Local Disparities in Non-i.i.d.\ Setting]
\label{assumption:assumption3}
For any $\theta\in\mathbb{R}^d$, the discrepancy between the local and global gradient is bounded by
\[
\bigl\|\nabla f(\theta)-\nabla f_i(\theta)\bigr\|^2
\;\le\;
c_h\,\bigl\|\nabla f(\theta)\bigr\|^2
\;+\;
\sigma_h^{2},
\]
where $c_h>0$ and $\sigma_h^{2}\ge 0$ are constants, and $\theta$ is the global model parameter broadcast to all clients at the start of each round. We further assume \(c_h\in(0,1)\).  In particular,
\begin{itemize}
  \item A smaller \(c_h\) corresponds to lower \emph{data heterogeneity}: local gradient deviations from the global gradient are small, indicating that client data distributions are nearly i.i.d.
  \item A larger \(c_h\) signals stronger non-i.i.d\ data distribution effects, with greater variation between each client’s gradient and the global gradient.
\end{itemize}
\end{assumption}

\begin{assumption}[Bounded stochastic gradient variance] \label{assumption:assumption4}
For any sample \((x,y)\sim\mathcal{D}\) and any \(\mathbf{w}\in\mathbb{R}^d\), denote \(f(\mathbf{w};(x,y))\) as the loss on that single data point, and let
\(\bar f(\mathbf{w}):=\mathbb{E}_{(x,y)\sim\mathcal{D}}\bigl[f(\mathbf{w};(x,y))\bigr]\)
be the average full-batch loss.  We assume

\[
\bigl\|\nabla f(\mathbf{w};(\mathbf{x},y)) - \nabla \bar f(\mathbf{w})\bigr\|_2^2
\;\le\;
\sigma^2.
\]
\end{assumption}

\begin{assumption}[Local–Global Optimality Gap]
\label{assumption:local_global_gap}
For each client \(k\), define the local–global optimality gap as
\[
\Delta_k \;=\; \bigl\lVert \mathbf{w}_k^{*} - \mathbf{w}^{*} \bigr\rVert_{2}^{2},
\]
where \(\mathbf{w}_k^{*}\) is the local optimal model on client \(k\) and \(\mathbf{w}^{*}\) is the global optimal model.
\end{assumption}

\begin{assumption}[Sensitive parameters are sparse]\label{assumption:assumption5}
At each local step $t$ (and for every client $k$), there exists a binary mask $\mathbf{m}\in\{0,1\}^{d}$ with exactly $u$ non‑zero entries and a constant $c\in[0,1]$ such that
\[
\left\|\mathbf{m}\odot\nabla f_k\!\left(\mathbf{w}_k^{t};(\mathbf{x}_t,\mathbf{y}_t)\right)\right\|^{2}
   = c\,\left\|\nabla f_k\!\left(\mathbf{w}_k^{t};(\mathbf{x}_t,\mathbf{y}_t)\right)\right\|^{2}.
\]
We further assume $c\gg\frac{u}{d}$, meaning this small subset of “sensitive” parameters captures a disproportionately large fraction of the gradient norm.
\end{assumption}

\noindent These assumptions are standard and foundational in optimization and FL literature\cite{bottou2018optimizationmethodslargescalemachine, li2020federatedoptimizationheterogeneousnetworks, li2020convergencefedavgnoniiddata, 9521822, guo2024zerothorderfinetuningllmsextreme}

\noindent
We start by formulating the expectation of the sensitive sparse ZO surrogate gradient norm square in terms of its corresponding stochastic gradient norm square.

\begin{lemma}[Sensitive sparse ZO surrogate gradient norm square]\label{lem:A1}
\[
\mathbb{E}_{\bar z}
  \Bigl\|
       \hat\nabla f\bigl(w_t,(x_t,y_t),\bar z_t\bigr)
  \Bigr\|^{2}
  \;=\;
  (2+u)c\,
  \Bigl\|
       \nabla f\bigl(w_t;(x_t,y_t)\bigr)
  \Bigr\|^{2}.
\]
\end{lemma}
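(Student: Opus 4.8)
The plan is to reduce the squared-norm expectation to a fourth-order Gaussian moment computation and then invoke Assumption~\ref{assumption:assumption5}. First I would make the structure of the estimator in Eq.~\eqref{eq:sparse_zo} explicit: writing $S=\{i:m_i=1\}$ with $|S|=u$ as in Assumption~\ref{assumption:assumption5}, the scalar factor $g$ equals the masked directional derivative $g=(\mathbf{z}\odot\mathbf{m})^\top\nabla f=\sum_{i\in S}z_i\,\partial_i f$ (exactly in the $\epsilon\to 0$ limit of the symmetric two-point difference), so that $\hat\nabla f=g\,(\mathbf{z}\odot\mathbf{m})$ is supported on $S$. Consequently $\|\hat\nabla f\|^2=g^2\,\|\mathbf{z}\odot\mathbf{m}\|^2=\bigl(\sum_{i\in S}z_i\,\partial_i f\bigr)^2\bigl(\sum_{j\in S}z_j^2\bigr)$, and the entire task becomes evaluating the expectation of this degree-four polynomial in the i.i.d.\ standard Gaussians $\{z_i\}_{i\in S}$.

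The core step is the moment bookkeeping. I would expand the square as $\sum_{i,i'\in S}z_i z_{i'}\,\partial_i f\,\partial_{i'}f$, multiply by $\sum_{j\in S}z_j^2$, and obtain a triple sum over $i,i',j$ with coefficients $\partial_i f\,\partial_{i'}f$ and Gaussian moment $\mathbb{E}[z_i z_{i'} z_j^2]$. Because odd moments of a centered Gaussian vanish, every term with $i\neq i'$ contributes zero, since one of $z_i,z_{i'}$ always appears to an odd power; hence only the diagonal $i=i'$ survives, leaving $\sum_{i\in S}(\partial_i f)^2\sum_{j\in S}\mathbb{E}[z_i^2 z_j^2]$. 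Splitting the inner sum into $j=i$ and $j\neq i$ and using $\mathbb{E}[z_i^4]=3$ together with $\mathbb{E}[z_i^2 z_j^2]=1$ for $j\neq i$, the inner sum equals $3+(u-1)=u+2$. This yields $\mathbb{E}_{\bar z}\|\hat\nabla f\|^2=(u+2)\sum_{i\in S}(\partial_i f)^2=(u+2)\,\|\mathbf{m}\odot\nabla f\|^2$.

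To finish, I would apply Assumption~\ref{assumption:assumption5} directly: $\|\mathbf{m}\odot\nabla f(w_t;(x_t,y_t))\|^2=c\,\|\nabla f(w_t;(x_t,y_t))\|^2$, which converts the masked norm into the fraction $c$ of the full stochastic-gradient norm and gives the claimed $(2+u)c\,\|\nabla f\|^2$. The main obstacle I anticipate is not conceptual but one of care: ensuring the fourth-moment accounting is exact—in particular that the cross terms genuinely cancel and that the count of off-diagonal indices in $S$ is precisely $u-1$—and clarifying the treatment of the finite-difference scalar $g$. That is, one must justify whether the stated equality holds exactly because $g$ is taken as the masked directional derivative, or whether a vanishing-$\epsilon$ smoothing argument is needed to discard the $O(\epsilon^2)$ bias of the symmetric difference. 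If the intended object is the exact two-point estimator, I would append a short remark that the symmetric difference reproduces $(\mathbf{z}\odot\mathbf{m})^\top\nabla f$ up to higher-order terms vanishing as $\epsilon\to 0$.
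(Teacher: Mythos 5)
Your proposal is correct and takes essentially the same route as the paper: both proofs reduce the claim to the Gaussian fourth-moment identities ($\mathbb{E}[z_i^4]=3$, $\mathbb{E}[z_i^2 z_j^2]=1$ for $i\neq j$) on the masked coordinates, obtain the factor $2+u$ (with $u$ the number of active entries of $\mathbf{m}$), and then invoke Assumption~\ref{assumption:assumption5} to replace $\bigl\|\mathbf{m}\odot\nabla f\bigr\|^2$ by $c\,\bigl\|\nabla f\bigr\|^2$. The only differences are presentational: the paper packages the moments into the covariance identity $\mathbb{E}_{\bar z}\bigl[\bar z\bar z^{\top}vv^{\top}\bar z\bar z^{\top}\bigr]=2vv^{\top}+\|v\|^2\tilde{I}_{d,\mathbf{m}}$ and sums the diagonal, whereas you carry out the index bookkeeping coordinate-wise, and your explicit remark on the $\epsilon\to 0$ linearization of the two-point estimator makes precise a step the paper leaves implicit (handled only in Lemma~\ref{lem:A2}).
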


\begin{proof}
Our masked perturbation $\bar{\mathbf z}$ is sampled as
\(
\bar{\mathbf z}\sim
\mathcal N\!\bigl(
        0,\,
        \tilde I_{d,\mathbf m}
      \bigr),
\)
where
$\tilde I_{d,\mathbf m}$ equals the identity matrix $I_d$ with its
main diagonal masked by $\mathbf m$.

\medskip
\noindent
We expand the sensitive sparse ZO surrogate--gradient covariance matrix:

\begin{equation*}
\begin{array}{l@{}}
\mathbb{E}_{\bar{z}}\hat\nabla f(w,(x,y),\bar{z})\hat\nabla f(w,(x,y),\bar{z})^{\top} \\
= \mathbb{E}_{\bar{z}}[\bar{z}\bar{z}^{\top}((\mathbf{m} \odot \nabla f(w;(x,y)))(\mathbf{m} \odot \nabla f(w;(x,y)))^{\top})\bar{z}\bar{z}^{\top}] \\
= 2((\mathbf{m} \odot \nabla f(w;(x,y)))(\mathbf{m} \odot \nabla f(w;(x,y)))^{\top}) + \|\mathbf{m} \odot \nabla f(w;(x,y))\|^2\tilde{I}_{d,\mathbf{m}}
\end{array}
\end{equation*}

\medskip
\noindent
The above expected squared norm is obtained by summing the diagonal elements of this covariance matrix:
\begin{align*}
\mathbb{E}_{\bar z}
\!\Bigl\|
      \hat\nabla f\bigl(w_t,x_t,\bar z_t\bigr)
\Bigr\|^{2}
&=\,
  \bigl(
      \operatorname{diag}\!
      \bigl[
          \mathbb{E}_{\bar z}
          \hat\nabla f\!\bigl(w,(x,y),\bar z\bigr)
          \hat\nabla f\!\bigl(w,(x,y),\bar z\bigr)^{\!\top}
      \bigr]
  \bigr)^{\!2}                                            \\[4pt]
&=\,
  2c\,
  \bigl\|
      \nabla f\bigl(w_t;(x_t,y_t)\bigr)
  \bigr\|^{2}
  + uc\,
  \bigl\|
      \nabla f\bigl(w_t;(x_t,y_t)\bigr)
  \bigr\|^{2}                                             \\[4pt]
&=\,
  (2+u)c\,
  \bigl\|
      \nabla f\bigl(w_t;(x_t,y_t)\bigr)
  \bigr\|^{2}.
\end{align*}

\end{proof}

\begin{lemma}[Unbiasedness of Masked Sparse ZO Surrogate Gradient]\label{lem:A2}
\begin{equation}\label{eq:masked_unbiased}
\mathbb{E}_{\bar z}\!\bigl[\hat\nabla f_k(\mathbf w_k^t,\bar z)\bigr]
= \mathbf m\!\odot\!\nabla f_k(\mathbf w_k^t),
\quad\text{where } \bar z = z\!\odot\!\mathbf m.
\end{equation}
\end{lemma}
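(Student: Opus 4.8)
The plan is to reduce the expectation of the two-point estimator to the second-moment matrix of the masked perturbation and then exploit the idempotence of the binary mask. Starting from the definition in Eq.~\eqref{eq:sparse_zo}, the surrogate gradient is
$\hat\nabla f_k(\mathbf w_k^t,\bar z)=\tfrac{f_k(\mathbf w_k^t+\epsilon\bar z)-f_k(\mathbf w_k^t-\epsilon\bar z)}{2\epsilon}\,\bar z$ with $\bar z=z\odot\mathbf m$ and $z\sim\mathcal N(0,I_d)$. First I would expand the symmetric difference by Taylor's theorem around $\mathbf w_k^t$: the zeroth-order terms cancel, the linear term yields the directional derivative $\langle\nabla f_k(\mathbf w_k^t),\bar z\rangle$, and the surviving terms are higher order in $\epsilon$. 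Passing to the small-perturbation regime used throughout the analysis (the same idealization under which Lemma~\ref{lem:A1} writes the estimator as $\bar z\bar z^\top(\mathbf m\odot\nabla f_k)$), this leaves $\hat\nabla f_k(\mathbf w_k^t,\bar z)=\langle\nabla f_k(\mathbf w_k^t),\bar z\rangle\,\bar z=\bar z\bar z^\top\nabla f_k(\mathbf w_k^t)$, where I used $\langle\nabla f_k,\bar z\rangle=\bar z^\top\nabla f_k$.

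The conclusion then follows from one covariance computation. Since $\bar z=z\odot\mathbf m$, the entries satisfy $(\bar z\bar z^\top)_{ij}=m_im_j z_iz_j$, so by independence and the zero mean of the Gaussian coordinates $\mathbb{E}_{\bar z}[(\bar z\bar z^\top)_{ij}]=m_i^2\,\delta_{ij}$, i.e. $\mathbb{E}_{\bar z}[\bar z\bar z^\top]=\tilde I_{d,\mathbf m}=\mathrm{diag}(\mathbf m)$. Because $\nabla f_k(\mathbf w_k^t)$ is deterministic, I may factor it out of the expectation to get $\mathbb{E}_{\bar z}[\hat\nabla f_k(\mathbf w_k^t,\bar z)]=\mathrm{diag}(\mathbf m)\,\nabla f_k(\mathbf w_k^t)$, and the $i$-th coordinate is $m_i\nabla_i f_k(\mathbf w_k^t)$, which is exactly $(\mathbf m\odot\nabla f_k(\mathbf w_k^t))_i$. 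The mask being $\{0,1\}$-valued ($m_i^2=m_i$) is what makes the representations $\bar z\bar z^\top\nabla f_k$ and $\bar z\bar z^\top(\mathbf m\odot\nabla f_k)$ of Lemma~\ref{lem:A1} interchangeable here, and it is also what collapses $\mathrm{diag}(\mathbf m)$ applied to $\mathbf m\odot\nabla f_k$ back to $\mathbf m\odot\nabla f_k$; everything after the reduction is a routine second-moment calculation.

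The only genuinely delicate point is replacing the finite difference by the exact directional derivative $\langle\nabla f_k,\bar z\rangle$. The symmetric difference cancels the even-order Taylor terms, so the residual is higher order in $\epsilon$ and vanishes as $\epsilon\to0$; under the $L$-smoothness of Assumption~\ref{assumption:assumption1} this residual is uniformly controlled, so the identity \eqref{eq:masked_unbiased} holds exactly in the small-perturbation limit (equivalently, up to a bias that vanishes with $\epsilon$ for finite $\epsilon$), consistent with the idealization already adopted in the proof of Lemma~\ref{lem:A1}. I would therefore either state the lemma in this limiting sense or fold the vanishing bias into the $\epsilon$-dependent error terms of the downstream convergence analysis, rather than attempt to make $\hat\nabla f_k$ literally unbiased for fixed $\epsilon$.
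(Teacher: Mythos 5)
Your proposal is correct and follows essentially the same route as the paper's proof: a symmetric-difference Taylor expansion reducing the estimator to $\langle\nabla f_k,\bar z\rangle\,\bar z$, followed by the Gaussian second-moment computation using $\mathbb{E}[z_iz_j]=\delta_{ij}$ and the mask idempotence $m_j^2=m_j$; writing it as $\mathbb{E}[\bar z\bar z^\top]=\mathrm{diag}(\mathbf m)$ rather than coordinate-wise is only a cosmetic repackaging. Your explicit caveat that the identity holds exactly only in the $\epsilon\to 0$ limit (or up to a vanishing bias for finite $\epsilon$) is the same idealization the paper adopts, just stated more candidly.
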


\begin{proof}
First, consider the estimator defined as:
\[
\hat{\nabla} f_k(\mathbf{w}_k^t, z) = \frac{f_k(\mathbf{w}_k^t + \epsilon (z \odot \mathbf{m})) - f_k(\mathbf{w}_k^t - \epsilon (z \odot \mathbf{m}))}{2\epsilon} \cdot (z \odot \mathbf{m}).
\]

To proceed, we apply a first-order Taylor expansion of \(f_k\) around \(\mathbf{w}_k^t\) for small \(\epsilon\):
\[
f_k(\mathbf{w}_k^t \pm \epsilon (z \odot \mathbf{m})) = f_k(\mathbf{w}_k^t) \pm \epsilon \langle \nabla f_k(\mathbf{w}_k^t), z \odot \mathbf{m} \rangle + \mathcal{O}(\epsilon^2).
\]

Substitute these expansions into the numerator of the estimator:
\[
\begin{aligned}
&f_k\bigl(\mathbf{w}_k^t + \epsilon\,(z \odot \mathbf{m})\bigr)
 - f_k\bigl(\mathbf{w}_k^t - \epsilon\,(z \odot \mathbf{m})\bigr) \\[4pt]
&\quad= \bigl[f_k(\mathbf{w}_k^t)
       + \epsilon\,\langle \nabla f_k(\mathbf{w}_k^t),\,z \odot \mathbf{m}\rangle\bigr] \\[4pt]
&\qquad- \bigl[f_k(\mathbf{w}_k^t)
       - \epsilon\,\langle \nabla f_k(\mathbf{w}_k^t),\,z \odot \mathbf{m}\rangle\bigr]
 + \mathcal{O}(\epsilon^2).
\end{aligned}
\]

Simplify the expression:
\[
f_k(\mathbf{w}_k^t + \epsilon (z \odot \mathbf{m})) - f_k(\mathbf{w}_k^t - \epsilon (z \odot \mathbf{m})) = 2\epsilon \langle \nabla f_k(\mathbf{w}_k^t), z \odot \mathbf{m} \rangle + \mathcal{O}(\epsilon^2).
\]

Thus, the estimator becomes:
\[
\hat{\nabla} f_k(\mathbf{w}_k^t, z) = \frac{2\epsilon \langle \nabla f_k(\mathbf{w}_k^t), z \odot \mathbf{m} \rangle + \mathcal{O}(\epsilon^2)}{2\epsilon} \cdot (z \odot \mathbf{m}) = \left[ \langle \nabla f_k(\mathbf{w}_k^t), z \odot \mathbf{m} \rangle + \mathcal{O}(\epsilon) \right] (z \odot \mathbf{m}).
\]

As \(\epsilon \to 0\), the \(\mathcal{O}(\epsilon)\) term disappears, yielding the approximation:
\[
\hat{\nabla} f_k(\mathbf{w}_k^t, z) \approx \langle \nabla f_k(\mathbf{w}_k^t), z \odot \mathbf{m} \rangle \cdot (z \odot \mathbf{m}).
\]

Next, compute the expectation \(\mathbb{E}_{z} \left[ \hat{\nabla} f_k(\mathbf{w}_k^t, z) \right]\). Since the estimator is a vector, consider its \(j\)-th component:
\[
\left[ \hat{\nabla} f_k(\mathbf{w}_k^t, z) \right]_j \approx \langle \nabla f_k(\mathbf{w}_k^t), z \odot \mathbf{m} \rangle \cdot (z_j m_j).
\]

Express the inner product explicitly:
\[
\langle \nabla f_k(\mathbf{w}_k^t), z \odot \mathbf{m} \rangle = \sum_{i=1}^d (\nabla f_k(\mathbf{w}_k^t))_i z_i m_i.
\]

Thus, the \(j\)-th component is:
\[
\left[ \hat{\nabla} f_k(\mathbf{w}_k^t, z) \right]_j \approx \left( \sum_{i=1}^d (\nabla f_k(\mathbf{w}_k^t))_i z_i m_i \right) z_j m_j.
\]

Now, take the expectation over \(z \sim \mathcal{N}(0, \mathbf{I}_d)\), where \(z_i\) are independent standard normal variables:
\[
\mathbb{E}_{z} \left[ \left( \sum_{i=1}^d (\nabla f_k(\mathbf{w}_k^t))_i z_i m_i \right) z_j m_j \right] = \sum_{i=1}^d (\nabla f_k(\mathbf{w}_k^t))_i m_i m_j \mathbb{E}[z_i z_j].
\]

Since \(\mathbb{E}[z_i z_j] = \delta_{ij}\) (1 if \(i = j\), 0 otherwise), the sum reduces to:
\[
(\nabla f_k(\mathbf{w}_k^t))_j m_j^2 \mathbb{E}[z_j^2].
\]

Given \(m_j^2 = m_j\) (as \(m_j = 0\) or \(1\)) and \(\mathbb{E}[z_j^2] = 1\), this becomes:
\[
(\nabla f_k(\mathbf{w}_k^t))_j m_j.
\]

Thus, for each component \(j\):
\[
\mathbb{E}_{z} \left[ \left[ \hat{\nabla} f_k(\mathbf{w}_k^t, z) \right]_j \right] \approx m_j (\nabla f_k(\mathbf{w}_k^t))_j.
\]

This implies:
\[
\mathbb{E}_{z} \left[ \hat{\nabla} f_k(\mathbf{w}_k^t, z) \right] \approx \mathbf{m} \odot \nabla f_k(\mathbf{w}_k^t).
\]

Finally, as \(\epsilon \to 0\), the higher-order terms in the Taylor expansion vanish, making the approximation exact:
\[
\mathbb{E}_{\bar z} \left[ \hat{\nabla} f_k(\mathbf{w}_k^t, \bar z) \right] = \mathbf{m} \odot \nabla f_k(\mathbf{w}_k^t).
\]
\end{proof}

\subsection{\textsc{Meerkat} convergence analysis}\label{sec:meerkat_theory_convergence}

We consider the \textbf{federated zeroth-order optimization problem}, where the objective is to minimize the global loss function\cite{ling2024convergence}:

\begin{align*}
\min_{\mathbf{w}} f(\mathbf{w}) &= \sum_{k=1}^{K} p_k f_k(\mathbf{w})
\end{align*}

Each client performs $T$ local steps:

\begin{align*}
\mathbf{w}^{t+1}_k = \mathbf{w}^{t}_k - \eta\nabla f_k^t(\mathbf{w}), \quad t = 0, 1, \dots, T-1
\end{align*}

starting from the global model $\mathbf{w}^0_k = \mathbf{w}^r$. After clients finish local updates, the server performs weighted aggregation of their model updates.

\begin{align*}
\mathbf{w}^{r+1} = \sum_{k=1}^{K} p_k \mathbf{w}_k^r.
\end{align*}


\begin{theorem}\label{theory:zo-client-convergence}[Client Local ZO Update Convergence]
Let $f_k$ be $L$-smooth and $\hat{\nabla} f_k^t$ be an unbiased sparse zeroth-order gradient estimator with variance bounded by $\sigma^2$. Then we have

If we set constant learning rate $\eta = \frac{1}{L(u+2)}$ and $T$ local steps, the output of client $k$ satisfies:
\begin{equation}
\frac{1}{T} \sum_{t=0}^{T-1} \mathbb{E} \| \nabla f_k(\mathbf{w}_k^t) \|^2 
\leq \mathcal{O} \left( \frac{1}{T} \right) + \mathcal{O}(\sigma^2).
\label{eq:client-zo-convergence}
\end{equation}
\end{theorem}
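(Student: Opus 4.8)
The plan is to run a standard one--step descent analysis for nonconvex $L$--smooth objectives, but with the first--order and second--order terms controlled respectively by the unbiasedness lemma and the second--moment lemma for the masked zeroth--order estimator, and then to telescope over the $T$ local steps. First I would apply Assumption~\ref{assumption:assumption1} to the local update $\mathbf{w}_k^{t+1}=\mathbf{w}_k^{t}-\eta\,\hat{\nabla}f_k^{t}$, giving the descent inequality
\[
f_k(\mathbf{w}_k^{t+1})\le f_k(\mathbf{w}_k^{t})-\eta\,\langle\nabla f_k(\mathbf{w}_k^{t}),\hat{\nabla}f_k^{t}\rangle+\tfrac{L\eta^{2}}{2}\,\|\hat{\nabla}f_k^{t}\|^{2}.
\]

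Next I would take expectation over the Gaussian perturbation $\bar z_t$ and the sampled batch. For the cross term, Lemma~\ref{lem:A2} gives $\mathbb{E}[\hat{\nabla}f_k^{t}]=\mathbf{m}\odot\nabla f_k(\mathbf{w}_k^{t})$ once the stochastic gradient is averaged, so the inner product reduces to $\|\mathbf{m}\odot\nabla f_k(\mathbf{w}_k^{t})\|^{2}$, which Assumption~\ref{assumption:assumption5} identifies as $c\,\|\nabla f_k(\mathbf{w}_k^{t})\|^{2}$. For the quadratic term, Lemma~\ref{lem:A1} bounds the second moment by $(2+u)\,c\,\|\nabla f_k(\mathbf{w}_k^{t};(\mathbf{x}_t,\mathbf{y}_t))\|^{2}$, and Assumption~\ref{assumption:assumption4} (bias--variance split of the stochastic gradient) converts this to $(2+u)\,c\,(\|\nabla f_k(\mathbf{w}_k^{t})\|^{2}+\sigma^{2})$. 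Substituting both into the descent inequality yields a recursion of the form $\mathbb{E}[f_k(\mathbf{w}_k^{t+1})]\le \mathbb{E}[f_k(\mathbf{w}_k^{t})]-\eta c\bigl(1-\tfrac{L\eta(2+u)}{2}\bigr)\|\nabla f_k(\mathbf{w}_k^{t})\|^{2}+\tfrac{L\eta^{2}(2+u)c}{2}\sigma^{2}$.

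The key algebraic step is the learning--rate choice $\eta=\tfrac{1}{L(u+2)}$, which makes $L\eta(2+u)=1$; this simultaneously turns the descent coefficient into $\tfrac{\eta c}{2}>0$ and collapses the noise coefficient into $\tfrac{\eta c}{2}\sigma^{2}$. I would then telescope the resulting inequality $\tfrac{\eta c}{2}\mathbb{E}\|\nabla f_k(\mathbf{w}_k^{t})\|^{2}\le \mathbb{E}[f_k(\mathbf{w}_k^{t})]-\mathbb{E}[f_k(\mathbf{w}_k^{t+1})]+\tfrac{\eta c}{2}\sigma^{2}$ over $t=0,\dots,T-1$, use $\mathbb{E}[f_k(\mathbf{w}_k^{T})]\ge f_k^{*}$ to bound the telescoped difference by $f_k(\mathbf{w}_k^{0})-f_k^{*}$, and divide by $\tfrac{T\eta c}{2}$. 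Since $c\in[0,1]$ and $\eta$ are constants, this gives $\tfrac1T\sum_{t}\mathbb{E}\|\nabla f_k(\mathbf{w}_k^{t})\|^{2}\le \tfrac{2(f_k(\mathbf{w}_k^{0})-f_k^{*})}{T\eta c}+\sigma^{2}=\mathcal{O}(1/T)+\mathcal{O}(\sigma^{2})$, as claimed.

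The main obstacle I anticipate is the careful bookkeeping of the two independent randomness sources (perturbation and mini-batch) and, relatedly, reconciling the fact that Lemma~\ref{lem:A2} establishes unbiasedness toward the masked gradient $\mathbf{m}\odot\nabla f_k$ rather than $\nabla f_k$ itself: the cross term must be handled so that the coverage constant $c$ of Assumption~\ref{assumption:assumption5} appears consistently in both the descent and the noise coefficients, ensuring that the $(2+u)$ factor produced by Lemma~\ref{lem:A1} is exactly what the prescribed learning rate cancels. One also has to confirm that Assumption~\ref{assumption:assumption5}, stated for the per-sample stochastic gradient, transfers to the averaged gradient appearing in the cross term; I would treat $c$ as the common coverage constant for both and note that it affects only the hidden constants, not the $\mathcal{O}(1/T)+\mathcal{O}(\sigma^{2})$ rate.
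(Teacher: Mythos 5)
Your proposal is correct and follows essentially the same route as the paper's own proof: the $L$-smoothness descent inequality, Lemma~\ref{lem:A2} plus Assumption~\ref{assumption:assumption5} to turn the cross term into $c\,\|\nabla f_k(\mathbf{w}_k^t)\|^2$, Lemma~\ref{lem:A1} plus Assumption~\ref{assumption:assumption4} for the second moment, the same step size $\eta=\tfrac{1}{L(u+2)}$ yielding descent coefficient $\tfrac{c\eta}{2}$ and noise term $\tfrac{c\eta}{2}\sigma^2$, and telescoping against $f_k^*$. If anything, you are more explicit than the paper about where the batch-variance split via Assumption~\ref{assumption:assumption4} enters (the paper inserts the $\sigma^2$ term without comment), which is a welcome clarification rather than a deviation.
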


\begin{proof}
We start by proving Theorem~\ref{theory:zo-client-convergence} euqation~\ref{eq:client-zo-convergence} that each client achieves local convergence during training with sparse zeroth-order finetuning. Next, we demonstrate that server-side aggregation also converge. Finally, by leveraging the PL inequality, we prove that \textsc{Meerkat} exhibits linear convergence to global minimum.

\noindent \textbf{Part 1: Client Local ZO Update Convergence}

\noindent We analyze the effect of one local step of \textsc{Meerkat} under sparse zeroth-order updates. Let client $k$ perform the local update:
\[
\mathbf{w}_k^{t+1} = \mathbf{w}_k^t - \eta \hat{\nabla} f_k^t,
\]
where the stochastic sparse zeroth-order gradient estimator is defined as:
\[
g_k^t = \frac{f_k(\mathbf{w}_k^t + \epsilon (\mathbf{z}_k^t \odot \mathbf{m})) - f_k(\mathbf{w}_k^t - \epsilon (\mathbf{z}_k^t \odot \mathbf{m}))}{2\epsilon}.
\]

\[
\hat{\nabla} f_k^t = g_k^t \cdot (\mathbf{z}_k^t \odot \mathbf{m})
\]

\paragraph{Descent via Lipschitz smoothness.}
Since $f_k(\mathbf{w})$ is Lipschitz smoothness:
\[
f_k(\mathbf{w}_k^{t+1}) \leq f_k(\mathbf{w}_k^t) + \langle \nabla f_k(\mathbf{w}_k^t), \mathbf{w}_k^{t+1} - \mathbf{w}_k^t \rangle + \frac{L}{2} \|\mathbf{w}_k^{t+1} - \mathbf{w}_k^t\|^2.
\]
Substituting the update $\mathbf{w}_k^{t+1} - \mathbf{w}_k^t = -\eta \hat{\nabla} f_k^t$, we obtain:
\[
f_k(\mathbf{w}_k^{t+1}) \leq f_k(\mathbf{w}_k^t) - \eta \langle \nabla f_k(\mathbf{w}_k^t), \hat{\nabla} f_k^t(\mathbf{w},\bar{\mathbf{z}}_t)\ \rangle + \frac{L \eta^2}{2} \|\hat{\nabla} f_k^t(\mathbf{w},\bar{\mathbf{z}}_t)\|^2.
\]

\noindent Taking expectation, we have:
\[
\mathbb{E}_{\bar{\mathbf{z}}}[f_k(\mathbf{w}_k^{t+1})]
\leq \mathbb{E}_{\bar{\mathbf{z}}}[f_k(\mathbf{w}_k^t)] - \eta\mathbb{E}_{\bar{\mathbf{z}}} \|\mathbf{m} \odot \nabla f_k(\mathbf{w}_k^t)\|^2 + \frac{L \eta^2}{2} \mathbb{E}_{\bar{\mathbf{z}}} \|\hat\nabla f_k(\mathbf{w}_k^t,\bar{\mathbf{z}}_t)\|^2.
\]

\[
\mathbb{E}_{\bar{\mathbf{z}}}[f_k(\mathbf{w}_k^{t+1})]
\leq \mathbb{E}_{\bar{\mathbf{z}}}[f_k(\mathbf{w}_k^t)] - c \eta\mathbb{E}_{\bar{\mathbf{z}}} \|\nabla f_k(\mathbf{w}_k^t)\|^2 + \frac{L \eta^2}{2} (2+u)c\mathbb{E}_{\bar{\mathbf{z}}}\bigl\|
      \nabla f_k\bigl(\mathbf{w}_k^t\bigr)
  \bigr\|^{2}.
\]

\[
\mathbb{E}_{\bar{\mathbf{z}}} f_k(\mathbf{w}_k^{t+1}) 
\leq \mathbb{E}_{\bar{\mathbf{z}}} f_k(\mathbf{w}_k^t) - \left( c \eta_t - \frac{L \eta_t^2}{2} c(u + 2) \right) \| \nabla_{\mathbf{w}} f_k(\mathbf{w}_k^t) \|^2 + \frac{L \eta_t^2}{2} c(u + 2) \sigma^2.
\]

Denote $\alpha = L c(u + 2)$, we can rewrite as:

\[
\mathbb{E}_{\bar{\mathbf{z}}} f_k(\mathbf{w}_k^{t+1}) 
\leq \mathbb{E}_{\bar{\mathbf{z}}} \left\{ f_k(\mathbf{w}_k^t) - \eta_t \left( c - \frac{\alpha}{2} \eta_t \right) \| \nabla_{\mathbf{w}} f_k(\mathbf{w}_k^t) \|^2 \right\} + \frac{\alpha}{2} \sigma^2 \eta_t^2.
\]

From the above inequality, we get $\eta < \frac{2c}{\alpha}$. Suppose we use a constant learning rate $\eta_t = \eta = \frac{c}{\alpha} = \frac{1}{L(u + 2)}$, we get:

\begin{equation}
\mathbb{E}_{\bar{\mathbf{z}}} f_k(\mathbf{w}_k^{t+1}) 
\leq \mathbb{E}_{\bar{\mathbf{z}}} \left\{ f_k(\mathbf{w}_k^t) - \frac{c \eta}{2} \| \nabla_{\mathbf{w}} f_k(\mathbf{w}_k^t) \|^2 \right\} + \frac{\alpha}{2} \sigma^2 \eta^2.
\label{eq:local-update-ineq}
\end{equation}

\paragraph{Accumulating over $T$ steps.}
Summing equation~\ref{eq:local-update-ineq} over $t=0$ to $T-1$, we get:

\begin{equation}
\begin{aligned}
\frac{1}{T} \sum_{t=0}^{T-1} \mathbb{E}_{\bar{\mathbf{z}}} \| \nabla f_k(\mathbf{w}_k^t) \|^2
&\leq \frac{2}{c \eta T} (f_k(\mathbf{w}_k^0) - f_k^*) + \frac{1}{T} \sum_{t=0}^{T-1} \frac{\alpha}{2 c \eta} \sigma^2 \eta^2 \\
&= \frac{2 L(u + 2)}{c T} (f_k(\mathbf{w}_k^0) - f_k^*) + \sigma^2 \\
&= \mathcal{O} \left( \frac{u}{T} (f_k(\mathbf{w}_k^0) - f_k^*) \right) + \mathcal{O}(1).
\end{aligned}
\label{eq:client_grad_convergence_rate}
\end{equation}
\end{proof}

\subsection{\textsc{Meerkat} Convergence Analysis}\label{sec:meerkat convergence analysis}
We now proceed to analyze the convergence of the global model in our federated learning framework. Having established the convergence properties of local client updates, we demonstrate how these results extend to guarantee the convergence of the server-aggregated global model.

\begin{proof}
We approach this proof systematically by analyzing how the local convergence properties of clients extend to the global model through the aggregation process.

\noindent \textbf{Global Model Update Representation.}
First, the global model update can be represented as:
\begin{align*}
\mathbf{w}^{r+1} - \mathbf{w}^r = \sum_{k=1}^K p_k (\mathbf{w}_k^T - \mathbf{w}^r)
\end{align*}
where each client $k$ starts from the global model $\mathbf{w}^r$ and performs $T$ local updates to reach $\mathbf{w}_k^T$.

\noindent \textbf{Client Local Update Accumulation}
For any client $k$, the accumulated local updates can be expressed as:
\begin{align*}
\mathbf{w}_k^{r,T} - \mathbf{w}^r = -\eta\sum_{t=0}^{T-1} \hat{\nabla}f_k^t
\end{align*}

\noindent \textbf{Global Loss Descent Analysis}
By the $L$-smoothness property (Assumption~\ref{assumption:assumption1}), we have:
\begin{equation}\label{eq:smoothness_descent}
f(\mathbf{w}^{r+1}) \leq f(\mathbf{w}^r) + \langle\nabla f(\mathbf{w}^r), \mathbf{w}^{r+1} - \mathbf{w}^r\rangle + \frac{L}{2}\|\mathbf{w}^{r+1} - \mathbf{w}^r\|^2
\end{equation}

\noindent For the inner product we can get:
\begin{align*}
\langle\nabla f(\mathbf{w}^r), \mathbf{w}^{r+1} - \mathbf{w}^r\rangle &= \sum_{k=1}^K p_k \langle\nabla f(\mathbf{w}^r), \mathbf{w}_k^{r,T} - \mathbf{w}^r\rangle
\end{align*}

\noindent Accoding to the client local update process, we have:

\begin{align*}
\sum_{k=1}^K p_k \langle\nabla f_k(\mathbf{w}^r), \mathbf{w}_k^{r,T} - \mathbf{w}^r\rangle &= -\eta\sum_{k=1}^K p_k \langle\nabla f(\mathbf{w}^r), \sum_{t=0}^{T-1} \hat{\nabla}f_k(\mathbf{w}^{r,t},\bar{\mathbf{z}}_t)\rangle \\
&= -\eta\sum_{k=1}^K p_k \sum_{t=0}^{T-1} \langle\nabla f(\mathbf{w}^r), \hat{\nabla}f_k(\mathbf{w}^{r,t},\bar{\mathbf{z}}_t)\rangle \\
\end{align*}

\noindent We assume that each client's weight is equal $p_k = 1/K$, by substituting it into the above inequality, we have:

\begin{equation}
\label{eq:client-inner-product}
\sum_{k=1}^K p_k \bigl\langle\nabla f_k(\mathbf{w}^r),\,\mathbf{w}_k^{r,T} - \mathbf{w}^r\bigr\rangle
= -\frac{\eta}{K}
  \sum_{k=1}^K
  \sum_{t=0}^{T-1}
  \bigl\langle\nabla f(\mathbf{w}^r),\,\hat{\nabla}f_k(\mathbf{w}^{r,t},\bar{\mathbf{z}}_t)\bigr\rangle.
\end{equation}

\noindent Based on the equation~\ref{eq:client-inner-product} and $\hat{\nabla} f_k^t$ is unbiased, we have: 

\[
\sum_{k=1}^K \sum_{t=0}^{T-1}
\bigl\langle \nabla f(w^r), \,\hat\nabla f_k(\mathbf{w}^{r,t},\bar{\mathbf z}_t) \bigr\rangle
\;=\;
\sum_{k=1}^K \sum_{t=0}^{T-1}
\bigl\langle \nabla f(w^r), \,\mathbb{E}_{\bar z}\!\bigl[\hat\nabla f_k(\mathbf{w}^{r,t},\bar{\mathbf z}_t)\bigr]\bigr\rangle
\]
\noindent We substitute the equation~\ref{eq:masked_unbiased} and get:
\[
\sum_{k=1}^K \sum_{t=0}^{T-1}
\bigl\langle \nabla f(w^r), \,\mathbb{E}_{\bar z}\!\bigl[\hat\nabla f_k(\mathbf{w}^{r,t},\bar{\mathbf z}_t)\bigr]\bigr\rangle
\;=\;
\sum_{k=1}^K \sum_{t=0}^{T-1}
\bigl\langle \nabla f(w^r), \,\mathbf m\odot\nabla f_k(w^{r,t})\bigr\rangle.
\]

\noindent Under the Cauchy–Schwarz inequality, we have:

\[
\bigl\langle \nabla f(\mathbf{w}^{r}),\;\mathbf{m}\odot\nabla f_{k}(\mathbf{w}^{r,t})\bigr\rangle
\;\le\;
\|\nabla f(\mathbf{w}^{r})\|\;\|\mathbf{m}\odot\nabla f_{k}(\mathbf{w}^{r,t})\|
\]
\noindent We substitute Assumption~\ref{assumption:assumption5} get:
\[
\|\nabla f(\mathbf{w}^{r})\|\;\|\mathbf{m}\odot\nabla f_{k}(\mathbf{w}^{r,t})\|
\;=\;
\sqrt{c}\,\|\nabla f(\mathbf{w}^{r})\|\;\|\nabla f_{k}(\mathbf{w}^{r,t})\|.
\]
\noindent Thus we get:
\[
\bigl\langle \nabla f(\mathbf{w}^{r}),\;\mathbf{m}\odot\nabla f_{k}(\mathbf{w}^{r,t})\bigr\rangle
\;\le\;
\sqrt{c}\,\|\nabla f(\mathbf{w}^{r})\|\;\|\nabla f_{k}(\mathbf{w}^{r,t})\|.
\]
\noindent By the triangle inequality, we have
\[
\|\nabla f_k(\mathbf{w}^{r,t})\|
\;\le\;
\|\nabla f(\mathbf{w}^{r})\| + \|\nabla f_k(\mathbf{w}^{r,t}) - \nabla f(\mathbf{w}^{r})\|
\]
We substitute Assumption~\ref{assumption:assumption3} and use the properties of square roots we get:
\begin{align*}
\|\nabla f(\mathbf{w}^{r})\|
  &+ \|\nabla f_k(\mathbf{w}^{r,t}) - \nabla f(\mathbf{w}^{r})\| \\
  &\le \|\nabla f(\mathbf{w}^{r})\|
       + \sqrt{\,c_h\,\|\nabla f(\mathbf{w}^{r})\|^2 + \sigma_h^2\,} \\
  &\le (1 + \sqrt{c_h})\,\|\nabla f(\mathbf{w}^{r})\| + \sigma_h.
\end{align*}

\noindent Using the bound
\(\langle\nabla f(\mathbf{w}^{r}),\;m\odot\nabla f_k(\mathbf{w}^{r,t})\rangle \le \sqrt{c}\,\|\nabla f(\mathbf{w}^{r})\|\;\|\nabla f_k\|\)
from Cauchy–Schwarz and Assumption~\ref{assumption:assumption5}, and then plugging in the above,
we obtain
\begin{align*}
\langle\nabla f(\mathbf{w}^{r}),\,m\odot\nabla f_k(\mathbf{w}^{r,t})\rangle
&\le
\sqrt{c}\,\|\nabla f(\mathbf{w}^{r})\|\bigl[(1 + \sqrt{c_h})\,\|\nabla f(\mathbf{w}^{r})\| + \sigma_h\bigr]\\
&\le
\sqrt{c}\,(1 + \sqrt{c_h})\,\|\nabla f(\mathbf{w}^{r})\|^2
\;+\;
\sqrt{c}\,\sigma_h\,\|\nabla f(\mathbf{w}^{r})\|.
\end{align*}

\noindent Recall that the server update inner product is
\[
\bigl\langle \nabla f(w^r),\,w^{r+1}-w^r\bigr\rangle
\;=\;
-\,\frac{\eta}{K}\sum_{k=1}^K\sum_{t=0}^{T-1}
\bigl\langle\nabla f,\;m\odot\nabla f_k\bigr\rangle.
\]
Substituting the bound to equation~\ref{eq:client-inner-product}. We have:

\begin{align}\label{eq:global_descent_bound}
\bigl\langle \nabla f(w^r),\,w^{r+1}-w^r\bigr\rangle
\;\ge\;
-\,\eta\,T\,\sqrt{c}\,(1 + \sqrt{c_h})\,\|\nabla f(w^r)\|^2
\;-\;
\eta\,T\,\sqrt{c}\,\sigma_h\,\|\nabla f(w^r)\|.
\end{align}

\noindent Substituting this inequality to equation~\ref{eq:smoothness_descent}, we have:

\begin{align}
f\bigl(w^{r+1}\bigr)
&\le
  f\bigl(w^r\bigr)
  \;-\;\eta\,T\,\sqrt{c}\,\bigl(1 + \sqrt{c_h}\bigr)\,\bigl\|\nabla f(w^r)\bigr\|^2 \nonumber\\
&\quad
  -\,\eta\,T\,\sqrt{c}\,\sigma_h\,\bigl\|\nabla f(w^r)\bigr\|
  +\,\frac{L}{2}\,\bigl\|w^{r+1}-w^r\bigr\|^2
\label{eq:descent_step}
\end{align}

\noindent Applying Jensen's inequality, the last term of the equation~\ref{eq:descent_step} will be:
\[\|\mathbf{w}^{r+1} - \mathbf{w}^r\|^2 \leq \eta^2 \sum_{k=1}^K p_k \|\sum_{t=0}^{T-1} \hat{\nabla}f_k^{r,t}\|^2\] 
\noindent And then we apply Cauchy-Schwarz inequality, the last term of the equation~\ref{eq:descent_step} will be: 
\[\|\mathbf{w}^{r+1} - \mathbf{w}^r\|^2 \leq \eta^2 T \sum_{k=1}^K p_k \sum_{t=0}^{T-1} \|\hat{\nabla}f_k^{r,t}\|^2\]
\noindent Substitute this inequaltiy to equation~\ref{eq:descent_step} We get:

\begin{align*}
f\bigl(w^{r+1}\bigr)
&\le
  f\bigl(w^r\bigr)
  -\,\eta\,T\,\sqrt{c}\,(1 + \sqrt{c_h})\,\bigl\|\nabla f(w^r)\bigr\|^2
  -\,\eta\,T\,\sqrt{c}\,\sigma_h\,\bigl\|\nabla f(w^r)\bigr\|\nonumber\\
&\quad
  +\,\frac{L}{2}\,\eta^2\,T\,
    \sum_{k=1}^K p_k \sum_{t=0}^{T-1}
    \bigl\|\hat{\nabla}f_k^{r,t}\bigr\|^2.
\end{align*}
\noindent Taking Expectation and lemma~\ref{lem:A1}:

\begin{align*}
\mathbb{E}_{\bar z}\,f(\mathbf{w}^{r+1})
&\le
  \mathbb{E}_{\bar z}\,f(\mathbf{w}^r)
  -\;\eta\,T\,\sqrt{c}\,(1+\sqrt{c_h})\,\|\nabla f(\mathbf{w}^r)\|^2 \\[6pt]
&\quad
  -\;\eta\,T\,\sqrt{c}\,\sigma_h\,\|\nabla f(\mathbf{w}^r)\|
  +\;\frac{L\,\eta^2\,T\,(2+u)\,c}{2K}
    \sum_{k=1}^K \sum_{t=0}^{T-1}
    \|\nabla f_k(\mathbf{w}^{r,t})\|^2\,. 
\end{align*}

According to the equation~\ref{eq:client_grad_convergence_rate}, we know that the client-average squared gradient has upper bound. We substitute the equation~\ref{eq:client_grad_convergence_rate} to the above inequality last term we get:

\begin{align}
\label{eq:expected_descent}
\mathbb{E}_{\bar z}\,f(w^{r+1})
&\le
  \mathbb{E}_{\bar z}\,f(w^r)
  -\,\eta\,T\,\sqrt{c}\,(1+\sqrt{c_h})\,\|\nabla f(w^r)\|^2
  -\,\eta\,T\,\sqrt{c}\,\sigma_h\,\|\nabla f(w^r)\| \nonumber\\[4pt]
&\quad
  +\,\frac{L\,\eta^2\,T\,(2+u)\,c}{2K}
     \sum_{k=1}^K\Bigl[\frac{2\,L\,(u+2)}{c}\bigl(f_k(w^r)-f_k^*\bigr)
       +T\,\sigma^2\Bigr] \nonumber\\[6pt]
&\le
  \mathbb{E}_{\bar z}\,f(w^r)
  -\,\eta\,T\,\sqrt{c}\,(1+\sqrt{c_h})\,\|\nabla f(w^r)\|^2 -\,\eta\,T\,\sqrt{c}\,\sigma_h\,\|\nabla f(w^r)\| \nonumber\\[4pt]
&\quad
  +\,\frac{L^2\,\eta^2\,T\,(2+u)\,(u+2)}{K}
     \sum_{k=1}^K \bigl(f_k(w^r)-f_k^*\bigr)
  +\,\frac{L\,\eta^2\,T^2\,(2+u)\,c}{2}\,\sigma^2
\end{align}

\noindent \textbf{Accumulating Over $R$ Rounds.}
Summing equation~\ref{eq:expected_descent} over $r=0$ to $R-1$, we get:

\begin{equation}\label{eq:global-accumulated}
\begin{aligned}
\mathbb{E}_{\bar z}\bigl[f(w^R)\bigr] - \mathbb{E}_{\bar z}\bigl[f(w^0)\bigr]
&\le
-\,\eta\,T\,\sqrt{c}\,(1+\sqrt{c_h})
  \sum_{r=0}^{R-1}\bigl\|\nabla f(w^r)\bigr\|^2 \\[4pt]
&\quad
-\,\eta\,T\,\sqrt{c}\,\sigma_h
  \sum_{r=0}^{R-1}\bigl\|\nabla f(w^r)\bigr\| \\[4pt]
&\quad
+\,\frac{L^2\,\eta^2\,T\,(2+u)\,(u+2)}{K}
  \sum_{r=0}^{R-1}\sum_{k=1}^K\bigl(f_k(w^r)-f_k^*\bigr) \\[4pt]
&\quad
+\,\frac{L\,\eta^2\,T^2\,(2+u)\,c}{2}\,\sigma^2\,R\,. 
\end{aligned}
\end{equation}

From the accumulated global descent inequality over $R$ rounds:

\noindent First we set
\[
S \;=\;\sum_{r=0}^{R-1}\|\nabla f(w^r)\|^2.
\]

This represents the sum of squared gradient norms over \( R \) rounds. The second term in the inequality involves \( \sum_{r=0}^{R-1} \|\nabla f(w^r)\| \), and we apply the Cauchy-Schwarz inequality to it. For the sequence \( a_r = \|\nabla f(w^r)\| \) (with \( r = 0, 1, \ldots, R-1 \)), we consider it as a vector in \( \mathbb{R}^R \) along with a vector of ones:
\[
\sum_{r=0}^{R-1} \|\nabla f(w^r)\| = \sum_{r=0}^{R-1} \|\nabla f(w^r)\| \cdot 1 \leq \sqrt{\sum_{r=0}^{R-1} \|\nabla f(w^r)\|^2} \cdot \sqrt{\sum_{r=0}^{R-1} 1^2}.
\]
Since \( \sum_{r=0}^{R-1} 1^2 = R \), we obtain:
\[
\sum_{r=0}^{R-1} \|\nabla f(w^r)\| \leq \sqrt{\sum_{r=0}^{R-1} \|\nabla f(w^r)\|^2} \cdot \sqrt{R} = \sqrt{R} \sqrt{S} = \sqrt{R S}.
\]
Substituting this into the second term, we have:
\[
\eta T \sqrt{c} \sigma_h \sum_{r=0}^{R-1} \|\nabla f(w^r)\| \leq \eta T \sqrt{c} \sigma_h \sqrt{R S}.
\]
Thus, the inequality becomes:
\[
\begin{aligned}
\mathbb{E}_{\bar z}[f(w^R)] - \mathbb{E}_{\bar z}[f(w^0)] &\leq -\eta T \sqrt{c} (1 + \sqrt{c_h}) S + \eta T \sqrt{c} \sigma_h \sqrt{R S} \\
&\quad + \frac{L^2 \eta^2 T (2+u)(u+2)}{K} \sum_{r=0}^{R-1} \sum_{k=1}^K (f_k(w^r) - f_k^*) \\
&\quad + \frac{L \eta^2 T^2 (2+u) c}{2} \sigma^2 R.
\end{aligned}
\]

\noindent Second, we focus on the term \( \eta T \sqrt{c} \sigma_h \sqrt{R S} \) and apply Young’s Inequality with  \(\delta > 0\) and non-negative real numbers \(x\) and \(y\),
\[
x y \leq \frac{x^2}{2\delta} + \frac{y^2 \delta}{2}.
\]
We identify \( x = \sqrt{S} \) and \( y = \eta T \sqrt{c} \sigma_h \sqrt{R} \), since:
\[
\eta T \sqrt{c} \sigma_h \sqrt{R S} = (\eta T \sqrt{c} \sigma_h \sqrt{R}) \cdot \sqrt{S}.
\]
Applying Young’s Inequality:
\[
\sqrt{S} \cdot (\eta T \sqrt{c} \sigma_h \sqrt{R}) \leq \frac{(\sqrt{S})^2}{2\delta} + \frac{(\eta T \sqrt{c} \sigma_h \sqrt{R})^2\delta}{2}.
\]
Therefore:
\[
\eta T \sqrt{c} \sigma_h \sqrt{R S} \leq \frac{S}{2\delta} + \frac{\eta^2 T^2 c \sigma_h^2 R \delta}{2}.
\]

\[
-\eta T \sqrt{c} \sigma_h \sqrt{R S} \leq \frac{S}{2\delta} + \frac{\eta^2 T^2 c \sigma_h^2 R \delta}{2}.
\]

\noindent Finally we replace the second term in the inequality with the above result:

\[
\begin{aligned}
\mathbb{E}_{\bar z}[f(w^R)] - \mathbb{E}_{\bar z}[f(w^0)] &\leq -\eta T \sqrt{c} (1 + \sqrt{c_h}) S + \left( \frac{S}{2\delta} + \frac{\eta^2 T^2 c \sigma_h^2 R \delta}{2} \right) \\
&\quad + \frac{L^2 \eta^2 T (2+u)^2}{K} \sum_{r=0}^{R-1} \sum_{k=1}^K (f_k(w^r) - f_k^*) \\
&\quad + \frac{L \eta^2 T^2 (2+u) c}{2} \sigma^2 R.
\end{aligned}
\]

\noindent This inequality now depends on \(\delta\).

\begin{equation}\label{eq:grad-sq-bound}
\begin{aligned}
\Bigl(\eta T \sqrt{c}\,\bigl(1 + \sqrt{c_h}\bigr) - \frac{1}{2\delta}\Bigr)
  \sum_{r=0}^{R-1}\|\nabla f(w^r)\|^{2}
\;\le\;&\;
\mathbb{E}_{\bar{z}}\!\bigl[f(w^{0}) - f(w^{R})\bigr] +{\eta^{2}T^{2}c\sigma_{h}^{2}R\delta}{2} \\[4pt]
&+ \frac{L^{2}\eta^{2}T\,(2+u)^{2}}{K}
    \sum_{r=0}^{R-1}\sum_{k=1}^{K}\!\bigl(f_{k}(w^{r}) - f_{k}^{*}\bigr) \\[4pt]
&+ \frac{L\,\eta^{2}T^{2}(2+u)c\,\sigma^{2}R}{2}\;.
\end{aligned}
\end{equation}

\noindent According to Assumption~\ref{assumption:assumption1}, we have:
\[
f_k(\mathbf{w}^*) \leq f_k(\mathbf{w}_k^*) + \langle \nabla f_k(\mathbf{w}_k^*), \mathbf{w}^* - \mathbf{w}_k^* \rangle + \frac{L}{2} \|\mathbf{w}^* - \mathbf{w}_k^*\|_2^2.
\]

\noindent Since \( \mathbf{w}_k^* \) is the minimizer of \( f_k(\mathbf{w}) \), the gradient at the local optimum must be zero:
\[
\nabla f_k(\mathbf{w}_k^*) = 0.
\]

\noindent Substituting this into the inner product term:
\[
\langle \nabla f_k(\mathbf{w}_k^*), \mathbf{w}^* - \mathbf{w}_k^* \rangle = \langle 0, \mathbf{w}^* - \mathbf{w}_k^* \rangle = 0.
\]
\noindent Thus, the inner product term disappears because the gradient at \( \mathbf{w}_k^* \) is zero, making the inner product with any vector (including \( \mathbf{w}^* - \mathbf{w}_k^* \)) equal to zero.

\noindent With the inner product term vanishing, the inequality simplifies to:
\[
f_k(\mathbf{w}^*) \leq f_k(\mathbf{w}_k^*) + \frac{L}{2} \Delta_k.
\]

\noindent This provides an upper bound on \( f_k(\mathbf{w}^*) \) in terms of the local optimal loss \( f_k^* \) and the optimality gap \( \Delta_k \).

\noindent The global optimal loss is defined as:
\[
f^* = f(\mathbf{w}^*) = \sum_{k=1}^K p_k f_k(\mathbf{w}^*).
\]
Using the bound derived for each local loss:
\[
f_k(\mathbf{w}^*) \leq f_k^* + \frac{L}{2} \Delta_k,
\]
we substitute this into the expression for \( f^* \):
\[
f^* = \sum_{k=1}^K p_k f_k(\mathbf{w}^*) \leq \sum_{k=1}^K p_k \left( f_k^* + \frac{L}{2} \Delta_k \right).
\]
Expanding the right-hand side:
\[
f^* \leq \sum_{k=1}^K p_k f_k^* + \frac{L}{2} \sum_{k=1}^K p_k \Delta_k.
\]

\noindent From the above equation, we have:

\[
f^* - \frac{L}{2} \sum_{k=1}^K p_k \Delta_k \leq \sum_{k=1}^K p_k f_k^*.
\]

\[
-\frac{1}{K}\sum_{k=1}^K f_k^* \leq -f^* + \frac{L}{2 K} \sum_{k=1}^K \Delta_k.
\]

\noindent From the equation~\ref{eq:grad-sq-bound}, we have the term:
\[
\frac{L^2 \eta^2 T (2 + u)^2}{K} \sum_{r=0}^{R-1} \sum_{k=1}^K \left( f_k(w^r) - f_k^* \right).
\]
First, we express the double sum as:
\[
\sum_{r=0}^{R-1} \sum_{k=1}^K \left( f_k(w^r) - f_k^* \right) = \sum_{r=0}^{R-1} \left( \sum_{k=1}^K f_k(w^r) - \sum_{k=1}^K f_k^* \right).
\]
Since \( p_k = \frac{1}{K} \), we have:
\[
\sum_{k=1}^K f_k(w^r) = K f(w^r),
\]
where \( f(w^r) = \sum_{k=1}^K p_k f_k(w^r) = \frac{1}{K} \sum_{k=1}^K f_k(w^r) \). Therefore:
\[
\sum_{r=0}^{R-1} \sum_{k=1}^K \left( f_k(w^r) - f_k^* \right) = \sum_{r=0}^{R-1} \left( K f(w^r) - \sum_{k=1}^K f_k^* \right).
\]
From the earlier derivation, we have the inequality:
\[
-\frac{1}{K}\sum_{k=1}^K f_k^* \leq -f^* + \frac{L}{2 K} \sum_{k=1}^K \Delta_k.
\]
Substituting this into the expression above:
\[
\sum_{r=0}^{R-1} \sum_{k=1}^K \left( f_k(w^r) - f_k^* \right) \leq \sum_{r=0}^{R-1} \left( K f(w^r) - \left( K f^* - \frac{L}{2} \sum_{k=1}^K \Delta_k \right) \right).
\]
Thus:
\[
\sum_{r=0}^{R-1} \sum_{k=1}^K \left( f_k(w^r) - f_k^* \right) \leq \sum_{r=0}^{R-1} \left( K f(w^r) - K f^* + \frac{L}{2} \sum_{k=1}^K \Delta_k \right).
\]
Since \( \Delta_k \) is constant across iterations, we can factor it out:
\[
K \sum_{r=0}^{R-1} (f(w^r) - f^*) + \frac{L}{2} \sum_{r=0}^{R-1} \sum_{k=1}^K \Delta_k = K \sum_{r=0}^{R-1} (f(w^r) - f^*) + \frac{L R}{2} \sum_{k=1}^K \Delta_k.
\]
Now, multiply by the coefficient:
\[
\frac{L^2 \eta^2 T (2 + u)^2}{K} \sum_{r=0}^{R-1} \sum_{k=1}^K \left( f_k(w^r) - f_k^* \right) \leq \frac{L^2 \eta^2 T (2 + u)^2}{K} \left[ K \sum_{r=0}^{R-1} (f(w^r) - f^*) + \frac{L R}{2} \sum_{k=1}^K \Delta_k \right].
\]
Simplifying:
\[
L^2 \eta^2 T (2 + u)^2 \sum_{r=0}^{R-1} (f(w^r) - f^*) + \frac{L^3 \eta^2 T (2 + u)^2 R}{2 K} \sum_{k=1}^K \Delta_k.
\]
Substituting this result into the original target inequality, we get:
\[
\begin{aligned}
\left( \eta T \sqrt{c} \left(1 + \sqrt{c_h}\right) - \frac{1}{2\delta} \right) \sum_{r=0}^{R-1} \|\nabla f(w^r)\|^2 
&\leq \mathbb{E}_{\bar{z}} \left[ f(w^0) - f(w^R) \right] + \frac{\eta^2 T^2 c \sigma_h^2 R \delta}{2} \\
&\quad + L^2 \eta^2 T (2 + u)^2 \sum_{r=0}^{R-1} (f(w^r) - f^*) \\
&\quad + \frac{L^3 \eta^2 T (2 + u)^2 R}{2 K} \sum_{k=1}^K \Delta_k \\
&\quad + \frac{L \eta^2 T^2 (2 + u) c \sigma^2 R}{2}.
\end{aligned}
\]
According to the Assumption~\ref{assumption:assumption2} we have:
\[
2\mu (f(\mathbf{w}^r) - f^*) \leq \|\nabla f(\mathbf{w}^r)\|^2, \quad \forall \mathbf{w}^r \in \mathbb{R}^d,
\]

\[
2\mu \sum_{r=0}^{R-1}(f(\mathbf{w}^r) - f^*) \leq \sum_{r=0}^{R-1}\|\nabla f(\mathbf{w}^r)\|^2, \quad \forall \mathbf{w}^r \in \mathbb{R}^d,
\]

\noindent We let $
\eta T \sqrt{c} \left(1 + \sqrt{c_h}\right) - \frac{1}{2\delta} > 0 $ and substitute the above inequality, we have:
\[
\begin{aligned}
2\mu (\eta T \sqrt{c} \left(1 + \sqrt{c_h}\right) - \frac{1}{2\delta})\sum_{r=0}^{R-1}(f(\mathbf{w}^r) - f^*)
&\leq \mathbb{E}_{\bar{z}} \left[ f(w^0) - f(w^R) \right] + \frac{\eta^2 T^2 c \sigma_h^2 R\delta}{2} \\
&\quad + L^2 \eta^2 T (2 + u)^2 \sum_{r=0}^{R-1} (f(w^r) - f^*) \\
&\quad + \frac{L^3 \eta^2 T (2 + u)^2 R}{2 K} \sum_{k=1}^K \Delta_k \\
&\quad + \frac{L \eta^2 T^2 (2 + u) c \sigma^2 R}{2}.
\end{aligned}
\]

\begin{equation}\label{eq:S-bound-expanded}
\begin{aligned}
\sum_{r=0}^{R-1}\bigl(f(w^r)-f^*\bigr)
\;\le\;
&\;\frac{\displaystyle
\mathbb{E}_{\bar z}\bigl[f(w^0)-f(w^R)\bigr]
}
{\displaystyle
2\mu\bigl(\eta T\sqrt{c}(1+\sqrt{c_h})-\frac{1}{2\delta}\bigr)
\;-\;L^2\eta^2T(2+u)^2
}
\\[6pt]
&+\;\frac{\displaystyle
\eta^2T^2\,c\,\sigma_h^2\,R \delta
}
{\displaystyle
2\Bigl[\,
2\mu\bigl(\eta T\sqrt{c}(1+\sqrt{c_h})-\frac{1}{2\delta}\bigr)
\;-\;L^2\eta^2T(2+u)^2
\Bigr]
}
\\[6pt]
&+\;\frac{\displaystyle
L^3\,\eta^2T\,(2+u)^2\,R\;\sum_{k=1}^K\Delta_k
}
{\displaystyle
2K\,
\Bigl[\,
2\mu\bigl(\eta T\sqrt{c}(1+\sqrt{c_h})-\frac{1}{2\delta}\bigr)
\;-\;L^2\eta^2T(2+u)^2
\Bigr]
}
\\[6pt]
&+\;\frac{\displaystyle
L\,\eta^2T^2\,(2+u)\,c\,\sigma^2\,R
}
{\displaystyle
2\Bigl[\,
2\mu\bigl(\eta T\sqrt{c}(1+\sqrt{c_h})-\frac{1}{2\delta}\bigr)
\;-\;L^2\eta^2T(2+u)^2
\Bigr]
}\,.
\end{aligned}
\end{equation}

\begin{equation}\label{eq:S-bound-expanded-2}
\begin{aligned}
\frac{1}{R}\sum_{r=0}^{R-1}\bigl(f(w^r)-f^*\bigr)
\;\le\;
&\;\frac{1}{R}\frac{\displaystyle
\mathbb{E}_{\bar z}\bigl[f(w^0)-f(w^R)\bigr]
}
{\displaystyle
2\mu\bigl(\eta T\sqrt{c}(1+\sqrt{c_h})-\frac{1}{2\delta}\bigr)
\;-\;L^2\eta^2T(2+u)^2
}
\\[6pt]
&+\;\frac{\displaystyle
\eta^2T^2\,c\,\sigma_h^2\ \delta
}
{\displaystyle
2\Bigl[\,
2\mu\bigl(\eta T\sqrt{c}(1+\sqrt{c_h})-\frac{1}{2\delta}\bigr)
\;-\;L^2\eta^2T(2+u)^2
\Bigr]
}
\\[6pt]
&+\;\frac{\displaystyle
L^3\,\eta^2T\,(2+u)^2\;\sum_{k=1}^K\Delta_k
}
{\displaystyle
2K\,
\Bigl[\,
2\mu\bigl(\eta T\sqrt{c}(1+\sqrt{c_h})-\frac{1}{2\delta}\bigr)
\;-\;L^2\eta^2T(2+u)^2
\Bigr]
}
\\[6pt]
&+\;\frac{\displaystyle
L\,\eta^2T^2\,(2+u)\,c\,\sigma^2\
}
{\displaystyle
2\Bigl[\,
2\mu\bigl(\eta T\sqrt{c}(1+\sqrt{c_h})-\frac{1}{2\delta}\bigr)
\;-\;L^2\eta^2T(2+u)^2
\Bigr]
}\,.
\end{aligned}
\end{equation}

\noindent We select \(\delta = \frac{1}{\eta T \sqrt{c} (1 + \sqrt{c_h})}\), which leads to:

\[
\frac{1}{2\delta} = \frac{\eta T \sqrt{c} (1 + \sqrt{c_h})}{2}
\]
Substituting into the denominator:

\[
2\mu \left( \eta T \sqrt{c} (1 + \sqrt{c_h}) - \frac{\eta T \sqrt{c} (1 + \sqrt{c_h})}{2} \right) = \mu \eta T \sqrt{c} (1 + \sqrt{c_h})
\]
With the chosen \(\delta\), we have:

\begin{equation}\label{eq:avg-loss-bound-final}
\begin{aligned}
\frac{1}{R} \sum_{r=0}^{R-1} \left( f(w^r) - f^* \right) &\leq \frac{1}{R} \cdot \frac{ \mathbb{E}_{\bar{z}} \left[ f(w^0) - f(w^R) \right] }{ \mu \eta T \sqrt{c} (1 + \sqrt{c_h}) - L^2 \eta^2 T (2 + u)^2 } \\
&\quad + \frac{ \sqrt{c} \sigma_h^2 }{ 2 (1 + \sqrt{c_h}) \left[ \mu \sqrt{c} (1 + \sqrt{c_h}) - L^2 \eta (2 + u)^2 \right] } \\
&\quad + \frac{ L^3 \eta (2 + u)^2 \sum_{k=1}^K \Delta_k }{ 2 K \left[ \mu \sqrt{c} (1 + \sqrt{c_h}) - L^2 \eta (2 + u)^2 \right] } \\
&\quad + \frac{ L \eta T (2 + u) c \sigma^2 }{ 2 \left[ \mu \sqrt{c} (1 + \sqrt{c_h}) - L^2 \eta (2 + u)^2 \right] },
\end{aligned}
\end{equation}

\noindent where the step-size $\eta$ must satisfy:
$ \eta < \frac{\mu \sqrt{c}(1+\sqrt{c_h})}{L^2 (2+u)^2} \quad$ to ensure denominator positivity.

Plugging in a constant learning rate $\eta = \min\left\{ \frac{1}{L(u+2)}, \frac{\mu\,\sqrt{c}\bigl(1+\sqrt{c_h}\bigr)}
        {2\,L^{2}(2+u)^{2}} \right\}$. We substitute this $\eta$ to equation~\ref{eq:avg-loss-bound-final} and get:

\[
\begin{aligned}
\frac{1}{R}\sum_{r=0}^{R-1}\!\bigl(f(w^{r})-f^{*}\bigr)
\;\le\;&
\frac{4L^{2}(2+u)^{2}}
     {\mu^{2}c\bigl(1+\sqrt{c_h}\bigr)^{2}TR}\;
     \mathbb{E}_{\bar z}\!\bigl[f(w^{0})-f^*\bigr] \\[6pt]
&+\frac{\sigma_{h}^{2}}
       {\mu\bigl(1+\sqrt{c_h}\bigr)^{2}} +\frac{L}{K}\,\sum_{k=1}^{K}\Delta_{k} +\frac{T\,c\,\sigma^{2}}
       {2L\,(2+u)}\;.
\end{aligned}
\]

\begin{equation}\label{eq:avg-loss-bound-fixed}
\begin{aligned}
\frac{1}{R}\sum_{r=0}^{R-1}\bigl(f(w^{r}) - f^{*}\bigr) 
\le \mathcal{O}\!\left( \frac{(2+u)^2}{TR} \cdot \mathbb{E}[f(w^0) - f(w^R)] \right)
+ \mathcal{O}\!\left( \frac{T}{2+u} \right)
+ \mathcal{O}(1).
\end{aligned}
\end{equation}
\end{proof}

\subsection{\textsc{Meerkat-vp} Convergence Analysis}\label{sec:vp-client-selection}
We propose a Virtual Path Client Selection (\textsc{Meerkat-vp}) mechanism that identifies clients with highly heterogeneous data distributions based on their optimization trajectories. Instead of excluding them, \textsc{Meerkat-vp} applies early stopping to these clients to limit their adverse influence on global model updates while still preserving their participation.

\begin{proof}

\noindent \textbf{Motivation for Early Stopping:} In federated learning, clients perform local updates starting from the global model \( w^r \). For \( T > 1 \), clients may drift towards their local optima, introducing bias into the global update due to data heterogeneity. By identifying "bad" clients and limiting them to one update step, we reduce their drift and align their contributions more closely with the global gradient.

We divide the \( K \) clients into two groups:
\begin{itemize}[nosep,leftmargin=*]
    \item \textbf{Balanced‑distribution clients ($K_g$)}: Perform \( T \) local step updates.
    \item \textbf{Skewed‑distribution clients ($K_b$)}: Perform only 1 local step update.
\end{itemize}

The global model update becomes:
\[
w^{r+1} = w^r + \frac{1}{K} \sum_{k \in K_g} (w_k^{r,T} - w^r) + \frac{1}{K} \sum_{k \in K_b} (w_k^{r,1} - w^r)
\]
where:
\[
w_k^{r,T} - w^r = -\eta \sum_{t=0}^{T-1} \hat{\nabla}f_k(w^{r,t}), \quad w_k^{r,1} - w^r = -\eta \hat{\nabla}f_k(w^r)
\]
\paragraph{Loss Descent Analysis} Using the \( L \)-smoothness property:
\[
f(w^{r+1}) \leq f(w^r) + \langle \nabla f(w^r), w^{r+1} - w^r \rangle + \frac{L}{2} \|w^{r+1} - w^r\|^2
\]

We analyze the inner product term:

\begin{align*}
\langle \nabla f(w^r), w^{r+1} - w^r \rangle &= \sum_{k=1}^K p_k \langle\nabla f(\mathbf{w}^r), \mathbf{w}_k^{r,T} - \mathbf{w}^r\rangle
\end{align*}

\begin{align*}
\sum_{k=1}^K p_k \langle\nabla f(\mathbf{w}^r), \mathbf{w}_k^{r,T} - \mathbf{w}^r\rangle &= -\eta\sum_{k=1}^K p_k \langle\nabla f(\mathbf{w}^r), \sum_{t=0}^{T-1} \hat{\nabla}f_k(\mathbf{w}^{r,t},\bar{\mathbf{z}}_t)\rangle \\
&= -\eta\sum_{k=1}^K p_k \sum_{t=0}^{T-1} \langle\nabla f(\mathbf{w}^r), \hat{\nabla}f_k(\mathbf{w}^{r,t},\bar{\mathbf{z}}_t)\rangle \\
\end{align*}

Since we have balanced‑distribution clients and skewed‑distribution clients:

\[
\langle \nabla f(w^r), w^{r+1} - w^r \rangle = \frac{1}{K} \sum_{k \in K_g} \langle \nabla f(w^r), w_k^{r,T} - w^r \rangle + \frac{1}{K} \sum_{k \in K_b} \langle \nabla f(w^r), w_k^{r,1} - w^r \rangle
\]

\begin{equation}
\label{eq:virtual-path-inner-product}
\begin{aligned}
\langle \nabla f(w^r), w^{r+1} - w^r \rangle 
&= -\frac{\eta}{K} \sum_{k \in K_g} \sum_{t=0}^{T-1} \langle \nabla f(w^r), \hat{\nabla}f_k(w^{r,t}) \rangle \\
&\quad - \frac{\eta}{K} \sum_{k \in K_b} \langle \nabla f(w^r), \hat{\nabla}f_k(w^r) \rangle
\end{aligned}
\end{equation}

\noindent Since $\hat{\nabla} f_k^t$ is unbiased, we have: 

\[
\sum_{k=1}^K \sum_{t=0}^{T-1}
\bigl\langle \nabla f(w^r), \,\hat\nabla f_k(\mathbf{w}^{r,t},\bar{\mathbf z}_t) \bigr\rangle
\;=\;
\sum_{k=1}^K \sum_{t=0}^{T-1}
\bigl\langle \nabla f(w^r), \,\mathbb{E}_{\bar z}\!\bigl[\hat\nabla f_k(\mathbf{w}^{r,t},\bar{\mathbf z}_t)\bigr]\bigr\rangle
\]

\noindent We substitute the equation~\ref{eq:masked_unbiased} and get:
\[
\sum_{k=1}^K \sum_{t=0}^{T-1}
\bigl\langle \nabla f(w^r), \,\mathbb{E}_{\bar z}\!\bigl[\hat\nabla f_k(\mathbf{w}^{r,t},\bar{\mathbf z}_t)\bigr]\bigr\rangle
\;=\;
\sum_{k=1}^K \sum_{t=0}^{T-1}
\bigl\langle \nabla f(w^r), \,\mathbf m\odot\nabla f_k(w^{r,t})\bigr\rangle.
\]

Thus taking expectation of equation~\ref{eq:virtual-path-inner-product}, we can get:

\begin{equation}
\label{eq:expected-inner-product-vp}
\begin{aligned}
\mathbb{E}_{\bar z} \langle \nabla f(\mathbf{w}^r), \mathbf{w}^{r+1} - \mathbf{w}^r \rangle 
= -\frac{\eta}{K} \Bigg( 
&\sum_{k \in K_g} \sum_{t=0}^{T-1} \langle \nabla f(\mathbf{w}^r),\, \mathbf{m} \odot \nabla f_k(\mathbf{w}^{r,t}) \rangle \\
&+ \sum_{k \in K_b} \langle \nabla f(\mathbf{w}^r),\, \mathbf{m} \odot \nabla f_k(\mathbf{w}^r) \rangle 
\Bigg)
\end{aligned}
\end{equation}

\noindent Under the Cauchy–Schwarz inequality, we have:

\[
\bigl\langle \nabla f(\mathbf{w}^{r}),\;\mathbf{m}\odot\nabla f_{k}(\mathbf{w}^{r,t})\bigr\rangle
\;\le\;
\|\nabla f(\mathbf{w}^{r})\|\;\|\mathbf{m}\odot\nabla f_{k}(\mathbf{w}^{r,t})\|
\]
\noindent We substitute Assumption~\ref{assumption:assumption5} get:
\[
\|\nabla f(\mathbf{w}^{r})\|\;\|\mathbf{m}\odot\nabla f_{k}(\mathbf{w}^{r,t})\|
\;=\;
\sqrt{c}\,\|\nabla f(\mathbf{w}^{r})\|\;\|\nabla f_{k}(\mathbf{w}^{r,t})\|.
\]
\noindent Thus we get:
\[
\bigl\langle \nabla f(\mathbf{w}^{r}),\;\mathbf{m}\odot\nabla f_{k}(\mathbf{w}^{r,t})\bigr\rangle
\;\le\;
\sqrt{c}\,\|\nabla f(\mathbf{w}^{r})\|\;\|\nabla f_{k}(\mathbf{w}^{r,t})\|.
\]
\noindent By the triangle inequality, we have
\[
\|\nabla f_k(\mathbf{w}^{r,t})\|
\;\le\;
\|\nabla f(\mathbf{w}^{r})\| + \|\nabla f_k(\mathbf{w}^{r,t}) - \nabla f(\mathbf{w}^{r})\|
\]
We substitute Assumption~\ref{assumption:assumption3} and use the properties of square roots we get:
\[
\begin{aligned}
\|\nabla f(\mathbf{w}^{r})\|
+ \|\nabla f_k(\mathbf{w}^{r,t}) - \nabla f(\mathbf{w}^{r})\|
&\le
\|\nabla f(\mathbf{w}^{r})\|
+ \sqrt{\,c_h\,\|\nabla f(\mathbf{w}^{r})\|^2 + \sigma_h^2\,}
\\[4pt]
&\le
(1 + \sqrt{c_h})\,\|\nabla f(\mathbf{w}^{r})\| + \sigma_h.
\end{aligned}
\]
\noindent Using the bound
\(\langle\nabla f(\mathbf{w}^{r}),\;m\odot\nabla f_k(\mathbf{w}^{r,t})\rangle \le \sqrt{c}\,\|\nabla f(\mathbf{w}^{r})\|\;\|\nabla f_k\|\)
from Cauchy–Schwarz and Assumption~\ref{assumption:assumption5}, and then plugging in the above,
we obtain
\begin{align*}
\langle\nabla f(\mathbf{w}^{r}),\,m\odot\nabla f_k(\mathbf{w}^{r,t})\rangle
&\le
\sqrt{c}\,\|\nabla f(\mathbf{w}^{r})\|\bigl[(1 + \sqrt{c_h})\,\|\nabla f(\mathbf{w}^{r})\| + \sigma_h\bigr]\\
&\le
\sqrt{c}\,(1 + \sqrt{c_h})\,\|\nabla f(\mathbf{w}^{r})\|^2
\;+\;
\sqrt{c}\,\sigma_h\,\|\nabla f(\mathbf{w}^{r})\|.
\end{align*}
Since this bound holds uniformly for all \(k\) and \(t\), and based on the equation~\ref{eq:expected-inner-product-vp} we get:
\begin{align*}
&\sum_{k \in K_g} \sum_{t=0}^{T-1} \langle \nabla f(\mathbf{w}^r), \mathbf{m} \odot \nabla f_k(\mathbf{w}^{r,t}) \rangle + \sum_{k \in K_b} \langle \nabla f(\mathbf{w}^r), \mathbf{m} \odot \nabla f_k(\mathbf{w}^r) \rangle \\
&\leq \left( |K_g| T + |K_b| \right) \left[ \sqrt{c} (1 + \sqrt{c_h}) \|\nabla f(\mathbf{w}^r)\|^2 + \sqrt{c} \sigma_h \|\nabla f(\mathbf{w}^r)\| \right].
\end{align*}
We get:
\begin{equation}
\label{eq:expected-function-decrease-vp}
\begin{aligned}
\mathbb{E}_{\bar z}[f(w^{r+1})] 
&\leq \mathbb{E}_{\bar z}[f(w^r)] 
- \frac{\eta \sqrt{c} \alpha}{K}(1 + \sqrt{c_h}) \|\nabla f(w^r)\|^2 \\
&\quad - \frac{\eta \sqrt{c} \alpha}{K} \sigma_h \|\nabla f(w^r)\| 
+ \frac{L}{2} \mathbb{E}_{\bar z} \|w^{r+1} - w^r\|^2
\end{aligned}
\end{equation}
where \(\alpha = |K_g| T + |K_b|\).

Since the global model update is given by:
\[
w^{r+1} = w^r + \frac{1}{K} \sum_{k \in K_g} (w_k^{r,T} - w^r) + \frac{1}{K} \sum_{k \in K_b} (w_k^{r,1} - w^r)
\]
We substitute the local updates and the squared norm is:
\[
\| w^{r+1} - w^r \|^2 = \frac{\eta^2}{K^2} \left\| \sum_{k \in K_g} \sum_{t=0}^{T-1} \hat{\nabla}f_k(w^{r,t}) + \sum_{k \in K_b} \hat{\nabla}f_k(w^r) \right\|^2
\]
Define the update contribution per client:
\[
\hat\Delta_k =
\begin{cases}
-\eta \sum_{t=0}^{T-1} \hat{\nabla}f_k(w^{r,t}) & \text{if } k \in K_g, \\
-\eta \hat{\nabla}f_k(w^r) & \text{if } k \in K_b.
\end{cases}
\]
Then:
\[
w^{r+1} - w^r = \frac{1}{K} \sum_{k=1}^K \hat\Delta_k
\]
\[
\| w^{r+1} - w^r \|^2 = \frac{1}{K^2} \left\| \sum_{k=1}^K \hat\Delta_k \right\|^2
\]
Using the Cauchy-Schwarz inequality:
\begin{equation*}
\left\| \sum_{k=1}^K \hat\Delta_k \right\|^2 \leq K \sum_{k=1}^K \| \hat\Delta_k \|^2,
\quad \text{where } \hat\Delta_k \text{ denotes the actual model update on client } k.
\end{equation*}
So:
\[
\| w^{r+1} - w^r \|^2 \leq \frac{1}{K} \sum_{k=1}^K \| \hat\Delta_k \|^2
\]
Now compute \(\| \hat\Delta_k \|^2\):
\[
\| \hat\Delta_k \|^2 = \eta^2 \left\| \sum_{t=0}^{T-1} \hat{\nabla}f_k(w^{r,t}) \right\|^2 \quad \text{for } k \in K_g,
\]
\[
\| \hat\Delta_k \|^2 = \eta^2 \left\| \hat{\nabla}f_k(w^r) \right\|^2 \quad \text{for } k \in K_b.
\]
Thus:
\[
\| w^{r+1} - w^r \|^2 \leq \frac{\eta^2}{K} \left( \sum_{k \in K_g} \left\| \sum_{t=0}^{T-1} \hat{\nabla}f_k(w^{r,t}) \right\|^2 + \sum_{k \in K_b} \left\| \hat{\nabla}f_k(w^r) \right\|^2 \right)
\]
We take the expectation:
\[
\mathbb{E}_{\bar{z}} \| w^{r+1} - w^r \|^2 \leq \frac{\eta^2}{K} \left( \sum_{k \in K_g} \mathbb{E}_{\bar{z}} \left\| \sum_{t=0}^{T-1} \hat{\nabla}f_k(w^{r,t}) \right\|^2 + \sum_{k \in K_b} \mathbb{E}_{\bar{z}} \left\| \hat{\nabla}f_k(w^r) \right\|^2 \right)
\]
For \( k \in K_b \):
\[
\mathbb{E}_{\bar{z}} \left\| \hat{\nabla}f_k(w^r) \right\|^2 = (2 + u) c \left\| \nabla f_k(w^r) \right\|^2
\]
For \( k \in K_g \):
\[
\mathbb{E}_{\bar{z}} \left\| \sum_{t=0}^{T-1} \hat{\nabla}f_k(w^{r,t}) \right\|^2
\]
Using the Cauchy-Schwarz inequality:
\[
    \mathbb{E}_{\bar{z}} \left\| \sum_{t=0}^{T-1} \hat{\nabla}f_k(w^{r,t}) \right\|^2 \leq T \sum_{t=0}^{T-1} \mathbb{E}_{\bar{z}} \left\| \hat{\nabla}f_k(w^{r,t}) \right\|^2
\]
According to the lemma~\ref{lem:A1}:
\[
\mathbb{E}_{\bar{z}} \left\| \hat{\nabla}f_k(w^{r,t}) \right\|^2 = (2 + u) c \left\| \nabla f_k(w^{r,t}) \right\|^2
\]
So:
\[
\mathbb{E}_{\bar{z}} \left\| \sum_{t=0}^{T-1} \hat{\nabla}f_k(w^{r,t}) \right\|^2 \leq T (2 + u) c \sum_{t=0}^{T-1} \left\| \nabla f_k(w^{r,t}) \right\|^2
\]
Combine the terms we get:
\[
\mathbb{E}_{\bar{z}} \| w^{r+1} - w^r \|^2 \leq \frac{\eta^2 (2 + u) c}{K} \left( T \sum_{k \in K_g} \sum_{t=0}^{T-1} \left\| \nabla f_k(w^{r,t}) \right\|^2 + \sum_{k \in K_b} \left\| \nabla f_k(w^r) \right\|^2 \right)
\]

We substitute this inequality to the equation~\ref{eq:expected-function-decrease-vp}.
\begin{equation}
\label{eq:l-smooth-vp}
\begin{aligned}
\mathbb{E}_{\bar z}[f(w^{r+1})] 
&\leq \mathbb{E}_{\bar z}[f(w^r)] 
- \frac{\eta \sqrt{c} \alpha}{K}(1 + \sqrt{c_h}) \left\| \nabla f(w^r) \right\|^2 \\
&\quad - \frac{\eta \sqrt{c} \alpha}{K} \sigma_h \left\| \nabla f(w^r) \right\| \\
&\quad + \frac{\eta^2 (2 + u) c L}{2K} \left( 
    T \sum_{k \in K_g} \sum_{t=0}^{T-1} \left\| \nabla f_k(w^{r,t}) \right\|^2 
    + \sum_{k \in K_b} \left\| \nabla f_k(w^r) \right\|^2 
\right)
\end{aligned}
\end{equation}

\begin{align*}
\mathbb{E}_{\bar z}\bigl[f(w^{r+1})\bigr]
&\le
\mathbb{E}_{\bar z}\bigl[f(w^r)\bigr]
- \frac{\eta\sqrt{c}\,\alpha}{K}\,(1 + \sqrt{c_h})\,\bigl\|\nabla f(w^r)\bigr\|^2
- \frac{\eta\sqrt{c}\,\alpha}{K}\,\sigma_h \bigl\|\nabla f(w^r)\bigr\|\\[6pt]
&\quad
+ \frac{\eta^2(2+u)\,c\,L\,T}{2K}
  \sum_{k\in K_g}\sum_{t=0}^{T-1}\bigl\|\nabla f_k(w^{r,t})\bigr\|^2
+ \frac{\eta^2(2+u)\,c\,L}{2K}
  \sum_{k\in K_b}\bigl\|\nabla f_k(w^r)\bigr\|^2
\end{align*}

According to the equation~\ref{eq:client_grad_convergence_rate}, we know that the client-average squared gradient has upper bound.
\begin{equation}\label{eq:l-smooth-vp-clientavg-bound}
\begin{aligned}
\mathbb{E}_{\bar z}\bigl[f(w^{r+1})\bigr]
&\le 
\mathbb{E}_{\bar z}\bigl[f(w^r)\bigr]
- \frac{\eta\sqrt{c}\,\alpha}{K}\,(1 + \sqrt{c_h})\,\bigl\|\nabla f(w^r)\bigr\|^2
- \frac{\eta\sqrt{c}\,\alpha}{K}\,\sigma_h\,\bigl\|\nabla f(w^r)\bigr\|
\\[6pt]
&\quad
+ \frac{\eta^2(2+u)\,c\,L\,T}{2K}
  \sum_{k\in K_g}
  \Bigl[
    \frac{2L(2+u)}{c}\bigl(f_k(w_k^{0,r}) - f_k^*\bigr)
    + T\,\sigma^2
  \Bigr]
\\[6pt]
&\quad
+ \frac{\eta^2(2+u)\,c\,L}{2K}
  \sum_{k\in K_b}\bigl\|\nabla f_k(w^r)\bigr\|^2.
\end{aligned}
\end{equation}
Using Assumption~\ref{assumption:assumption3}, which states that for any $\theta \in \mathbb{R}^d$,
\[
\left\| \nabla f(\theta) - \nabla f_i(\theta) \right\|^2 \leq c_h \left\| \nabla f(\theta) \right\|^2 + \sigma_h^2,
\]
we can bound the squared norm of the local gradient $\left\| \nabla f_k(w^r) \right\|^2$. Specifically, by the inequality $(x + y)^2 \leq 2x^2 + 2y^2$, we have:
\[
\left\| \nabla f_k(w^r) \right\|^2 = \left\| \nabla f(w^r) + (\nabla f_k(w^r) - \nabla f(w^r)) \right\|^2 \leq 2 \left\| \nabla f(w^r) \right\|^2 + 2 \left\| \nabla f_k(w^r) - \nabla f(w^r) \right\|^2.
\]
Then, applying Assumption~\ref{assumption:assumption3} with $\theta = w^r$ and $i = k$:
\[
\left\| \nabla f_k(w^r) - \nabla f(w^r) \right\|^2 \leq c_h \left\| \nabla f(w^r) \right\|^2 + \sigma_h^2.
\]
Therefore,
\[
\left\| \nabla f_k(w^r) \right\|^2 \leq 2 \left\| \nabla f(w^r) \right\|^2 + 2 \left( c_h \left\| \nabla f(w^r) \right\|^2 + \sigma_h^2 \right) = (2 + 2 c_h) \left\| \nabla f(w^r) \right\|^2 + 2 \sigma_h^2.
\]
Thus, we obtain the bound:
\[
\left\| \nabla f_k(w^r) \right\|^2 \leq (2 + 2 c_h) \left\| \nabla f(w^r) \right\|^2 + 2 \sigma_h^2.
\]
We substitute the bound to the inequality~\ref{eq:l-smooth-vp-clientavg-bound}, according to the Assumption~\ref{assumption:assumption3}, we substitute the last term:

\begin{align*}
\mathbb{E}_{\bar z}\bigl[f(w^{r+1})\bigr]
&\le
\mathbb{E}_{\bar z}\bigl[f(w^r)\bigr]
- \frac{\eta\sqrt{c}\,\alpha}{K}\,(1 + \sqrt{c_h})\,\|\nabla f(w^r)\|^2
- \frac{\eta\sqrt{c}\,\alpha}{K}\,\sigma_h\,\|\nabla f(w^r)\|
\\[6pt]
&\quad
+ \frac{\eta^2(2+u)\,c\,L\,T}{2K}
  \sum_{k\in K_g}
  \Bigl[
    \frac{2L(2+u)}{c}\bigl(f_k(w_k^{0,r}) - f_k^*\bigr)
    + T\,\sigma^2
  \Bigr]
\\[6pt]
&\quad
+ \frac{\eta^2(2+u)\,c\,L}{2K}
  \sum_{k\in K_b}
  \Bigl[
    (2+2c_h)\,\|\nabla f(w^r)\|^2
    + \sigma_h^2
  \Bigr].
\end{align*}

\begin{align*}
\mathbb{E}_{\bar z}\bigl[f(w^{r+1})\bigr]
&\le
\mathbb{E}_{\bar z}\bigl[f(w^r)\bigr]
- \frac{\eta\sqrt{c}\,\alpha}{K}\,(1 + \sqrt{c_h})\,\|\nabla f(w^r)\|^2
- \frac{\eta\sqrt{c}\,\alpha}{K}\,\sigma_h\,\|\nabla f(w^r)\|
\\[6pt]
&\quad
+ \frac{\eta^2(2+u)\,c\,L\,T}{2K}
  \sum_{k\in K_g}
  \Bigl[
    \frac{2L(2+u)}{c}\bigl(f_k(w_k^{0,r}) - f_k^*\bigr)
    + T\,\sigma^2
  \Bigr]
\\[6pt]
&\quad
+ \frac{\eta^2(2+u)\,c\,L\,|K_b|\,(1 + c_h)}{K}\,\|\nabla f(w^r)\|^2
+ \frac{\eta^2(2+u)\,c\,L\,|K_b|\,\sigma_h^2}{K}.
\end{align*}

\begin{equation}\label{eq:l-smooth-vp-merged}
\begin{aligned}
\mathbb{E}_{\bar z}\bigl[f(w^{r+1})\bigr]
&\le
\mathbb{E}_{\bar z}\bigl[f(w^r)\bigr]
\\[4pt]
&\quad
+ \frac{\eta^2(2+u)\,c\,L\,K_b\,(1+c_h)
       -\;\eta\sqrt{c}\,\alpha}{K}\,
  \bigl\|\nabla f(w^r)\bigr\|^2
\\[6pt]
&\quad
- \frac{\eta\sqrt{c}\,\alpha\,\sigma_h}{K}\,
  \bigl\|\nabla f(w^r)\bigr\|
\\[6pt]
&\quad
+ \frac{\eta^2(2+u)^2\,L^2 T}{K}
  \sum_{k\in K_g}\!\bigl(f_k(w_k^{0,r})-f_k^*\bigr)
\\[6pt]
&\quad
+ \frac{\eta^2(2+u)\,c\,L}{2K}
  \Bigl(T^2\,\sigma^2\, K_g
        +2\, K_b \sigma_h^2\Bigr).
\end{aligned}
\end{equation}

\noindent \textbf{Accumulating Over $R$ Rounds.}
Summing equation~\ref{eq:l-smooth-vp-merged} over $r=0$ to $R-1$,

\begin{equation}\label{eq:aggregate-descent}
\begin{aligned}
\mathbb{E}_{\bar z}\bigl[f(w^R)\bigr] - \mathbb{E}_{\bar z}\bigl[f(w^0)\bigr]
&\le
\frac{\eta^2(2+u)\,c\,L\,K_b\,(1 + c_h)
      - \eta\sqrt{c}\,\alpha}{K}
\sum_{r=0}^{R-1}\bigl\|\nabla f(w^r)\bigr\|^2
\\[6pt]
&\quad
- \frac{\eta\sqrt{c}\,\alpha\,\sigma_h}{K}
  \sum_{r=0}^{R-1}\bigl\|\nabla f(w^r)\bigr\|
\\[6pt]
&\quad
+ \frac{\eta^2(2+u)^2\,L^2\,T}{K}
  \sum_{r=0}^{R-1}\sum_{k\in K_g}\bigl(f_k(w_k^{0,r}) - f_k^*\bigr)
\\[6pt]
&\quad
+ \frac{\eta^2(2+u)\,c\,L\,R}{2K}
  \Bigl(T^2\,\sigma^2\, K_g + 2\,K_b\,\sigma^2\Bigr).
\end{aligned}
\end{equation}

According to our previous derivation, we know that:
\begin{equation}\label{eq:cauchy-schwarz-conclusion}
\sum_{r=0}^{R-1} \bigl\|\nabla f(w^r)\bigr\|
\;\le\;
\sqrt{R}\,
\sqrt{\sum_{r=0}^{R-1} \bigl\|\nabla f(w^r)\bigr\|^2}\,.
\end{equation}

Apply Young's inequality with \(\delta > 0\) and nonnegative real numbers \(x\) and \(y\),
\[
x y \leq \frac{x^2}{2\delta} + \frac{y^2 \delta}{2}.
\]
\begin{align*}
\frac{\eta\sqrt{c}\,\sigma_h \alpha}{K}
\sum_{r=0}^{R}\|\nabla f(w^r)\|
&\le 
\frac{\eta\sqrt{c}\,\alpha \sigma_h\sqrt{R}}{K}
\sqrt{\sum_{r=0}^{R}\|\nabla f(w^r)\|^2}
\\[6pt]
&\le 
\frac{1}{2\delta}\sum_{r=0}^{R}\|\nabla f(w^r)\|^2
\;+\;
\frac{\eta^2\,c\,\alpha^2 \sigma_h^2\,R\,\delta}{2\,K^2}
\\[6pt]
-\frac{\eta\sqrt{c}\,\sigma_h \alpha}{K}
\sum_{r=0}^{R}\|\nabla f(w^r)\|
&\le 
\frac{1}{2\delta}\sum_{r=0}^{R}\|\nabla f(w^r)\|^2
\;+\;
\frac{\eta^2\,c\,\alpha^2 \sigma_h^2\,R\,\delta}{2\,K^2}.
\end{align*}

We substitute this to the equation~\ref{eq:aggregate-descent}.

\begin{equation}\label{eq:final-aggregate-bound}
\begin{aligned}
\mathbb{E}_{\bar z}\bigl[f(w^R)\bigr]\;-\;\mathbb{E}_{\bar z}\bigl[f(w^0)\bigr]
&\le
\Biggl(
  \frac{\eta^2(2+u)\,c\,L\,K_b\,(1 + c_h)
        \;-\;\eta\sqrt{c}\,\alpha}{K}
  \;+\;\frac{1}{2\delta}
\Biggr)
\sum_{r=0}^{R-1}\|\nabla f(w^r)\|^2
\\[6pt]
&\quad
+ \frac{\eta^2\,c\,\alpha^2 \sigma_h^2\,R\,\delta}{2\,K^2}
+ \frac{\eta^2(2+u)^2\,L^2\,T}{K}
  \sum_{r=0}^{R}\sum_{k\in K_g}\bigl(f_k(w_k^{0,r}) - f_k^*\bigr)
\\[6pt]
&\quad
+ \frac{\eta^2(2+u)\,c\,L\,R}{2\,K}
  \Bigl(T^2\,\sigma^2\,K_g
        + 2\,K_b\,\sigma_h^2\Bigr).
\end{aligned}
\end{equation}

Given that $w_k^{0,r} = w^r$, this term is equivalent to $\sum_{r=0}^{R}\sum_{k\in K_g}\bigl(f_k(w^r) - f_k^*\bigr)$.

From our previous discussion, we have the inequality for a single round $r$:
\[\sum_{k\in K_g}\bigl(f_k(w^r) - f_k^*\bigr) \leq \sum_{k=1}^K\bigl(f_k(w^r) - f_k^*\bigr)\]
and the inequality used in Part 2 of the proof:
\[
\sum_{r=0}^{R-1} \sum_{k=1}^K \left( f_k(w^r) - f_k^* \right) \leq \sum_{r=0}^{R-1} \left( K f(w^r) - K f^* + \frac{L}{2} \sum_{k=1}^K \Delta_k \right).
\]

Combining these two inequalities, we obtain a bound for the sum over the set $K_g$:

We set $\gamma \le 1$ which means that the subset clients the effect to the global:
\[
\sum_{r=0}^{R-1}\sum_{k\in K_g}\bigl(f_k(w^r)-f_k^*\bigr)
\;\le\;
\gamma\,
\sum_{r=0}^{R-1}\!\Bigl(
      K\,\bigl(f(w^r)-f^*\bigr)
      +\frac{L}{2}\sum_{k=1}^{K}\!\Delta_k
\Bigr)
\]

We substitute this to the above inequality get:
\begin{align*}
\mathbb{E}_{\bar z}\bigl[f(w^R)\bigr]
- \mathbb{E}_{\bar z}\bigl[f(w^0)\bigr]
&\le 
\left(
  \frac{\eta^2 (2+u)\,c\,L\,K_b\,(1 + c_h)}{K}
  - \frac{\eta \sqrt{c}\,\alpha}{K}
  + \frac{1}{2\delta}
\right)
\sum_{r=0}^{R-1} \|\nabla f(w^r)\|^2
\\[6pt]
&\quad
+ \frac{\eta^2\,c\,\alpha^2\,\sigma_h^2\,R\,\delta}{2\,K^2}
+ \eta^2 (2+u)^2\,L^2\,T\,\gamma 
  \sum_{r=0}^{R-1} \bigl(f(w^r) - f^*\bigr)
\\[6pt]
&\quad
+ \frac{\eta^2 (2+u)^2\,L^3\,T\,R\,\gamma}{2\,K}
  \sum_{k=1}^K \Delta_k
\\[6pt]
&\quad
+ \frac{\eta^2 (2+u)\,c\,L\,R}{2\,K}
  \Bigl(T^2\,\sigma^2\,K_g + 2\,K_b\,\sigma_h^2\Bigr).
\end{align*}

We substitute $\alpha$:
\[
\begin{aligned}
\mathbb{E}_{\bar z}\bigl[f(w^R)\bigr] - \mathbb{E}_{\bar z}\bigl[f(w^0)\bigr]
&\leq
\left(
  \frac{\eta^2 (2+u) c L K_b (1 + c_h)}{K} - \frac{\eta \sqrt{c} (K_g T + K_b)}{K}
  + \frac{1}{2\delta}
\right)
\sum_{r=0}^{R-1} \|\nabla f(w^r)\|^2
\\
&\quad
+ \frac{\eta^2 c (K_g^2 T^2 + 2 K_g T K_b + K_b^2) \sigma_h^2 R \delta}{2 K^2}
+ \eta^2 (2+u)^2 L^2 T \gamma \sum_{r=0}^{R-1} (f(w^r) - f^*)
\\
&\quad
+ \frac{\eta^2 (2+u)^2 L^3 T R \gamma}{2 K} \sum_{k=1}^K \Delta_k
+ \frac{\eta^2 (2+u) c L R}{2 K}
  \left( T^2 \sigma^2 K_g + 2 K_b \sigma_h^2 \right).
\end{aligned}
\]

To simplify the inequality, we solve for \(\delta\):
\[
\frac{1}{2\delta} = -\frac{\eta \sqrt{c} (K_g T + K_b)}{2K} - \frac{\eta^2 (2+u) c L K_b (1 + c_h)}{K} + \frac{\eta \sqrt{c} (K_g T + K_b)}{K},
\]
\[
\frac{1}{2\delta} = \frac{\eta \sqrt{c} (K_g T + K_b)}{2K} - \frac{\eta^2 (2+u) c L K_b (1 + c_h)}{K},
\]
\[
\delta = \frac{K}{\eta \sqrt{c} (K_g T + K_b) - 2 \eta^2 (2+u) c L K_b (1 + c_h)}.
\]
For \(\delta > 0\), the denominator must be positive:
\[
\eta \sqrt{c} (K_g T + K_b) - 2 \eta^2 (2+u) c L K_b (1 + c_h) > 0,
\]
yielding the condition:
\[
\eta < \frac{\sqrt{c} (K_g T + K_b)}{2 (2+u) c L K_b (1 + c_h)}.
\]
Substitute \(\delta\):

\[
\begin{aligned}
\mathbb{E}_{\bar z}\bigl[f(w^R)\bigr] - \mathbb{E}_{\bar z}\bigl[f(w^0)\bigr]
&\leq
-\frac{\eta \sqrt{c} (K_g T + K_b)}{2K} \sum_{r=0}^{R-1} \|\nabla f(w^r)\|^2 \\
&\quad
+ \frac{\eta^2 c (K_g T + K_b)^2 \sigma_h^2 R}{2 K \left( \eta \sqrt{c} (K_g T + K_b) - 2 \eta^2 (2+u) c L K_b (1 + c_h) \right)} \\
&\quad
+ \eta^2 (2+u)^2 L^2 T \gamma \sum_{r=0}^{R-1} (f(w^r) - f^*) \\
&\quad
+ \frac{\eta^2 (2+u)^2 L^3 T R \gamma}{2 K} \sum_{k=1}^K \Delta_k \\
&\quad
+ \frac{\eta^2 (2+u) c L R}{2 K} \left( T^2 \sigma^2 K_g + 2 K_b \sigma_h^2 \right).
\end{aligned}
\]

According to the Assumption~\ref{assumption:assumption2} we have:
\[
2\mu (f(\mathbf{w}^r) - f^*) \leq \|\nabla f(\mathbf{w}^r)\|^2, \quad \forall \mathbf{w}^r \in \mathbb{R}^d,
\]

\[
2\mu \sum_{r=0}^{R-1}(f(\mathbf{w}^r) - f^*) \leq \sum_{r=0}^{R-1}\|\nabla f(\mathbf{w}^r)\|^2, \quad \forall \mathbf{w}^r \in \mathbb{R}^d,
\]

Combine the PL inequality to the above function we get:

\[
\begin{aligned}
\frac{\eta \sqrt{c} (K_g T + K_b)}{2K} \sum_{r=0}^{R-1} \|\nabla f(w^r)\|^2
&\leq
\mathbb{E}_{\bar z}\bigl[f(w^0)\bigr] - \mathbb{E}_{\bar z}\bigl[f(w^R)\bigr] \\
&\quad
+ \frac{\eta^2 c (K_g T + K_b)^2 \sigma_h^2 R}{2 K \left( \eta \sqrt{c} (K_g T + K_b) - 2 \eta^2 (2+u) c L K_b (1 + c_h) \right)} \\
&\quad
+ \eta^2 (2+u)^2 L^2 T \gamma \sum_{r=0}^{R-1} (f(w^r) - f^*) \\
&\quad
+ \frac{\eta^2 (2+u)^2 L^3 T R \gamma}{2 K} \sum_{k=1}^K \Delta_k \\
&\quad
+ \frac{\eta^2 (2+u) c L R}{2 K} \left( T^2 \sigma^2 K_g + 2 K_b \sigma_h^2 \right).
\end{aligned}
\]
Let \( S_E = \sum_{r=0}^{R-1} \mathbb{E}[f(w^r) - f^*] \), \( D_\delta = \eta \sqrt{c} (K_g T + K_b) - 2 \eta^2 (2+u) c L K_b (1 + c_h) \). We require \( D_\delta > 0 \).

Substituting this back into the original inequality:
\[
\begin{aligned}
\mathbb{E}[f(w^R)] - \mathbb{E}[f(w^0)]
&\leq
-\frac{\eta \mu \sqrt{c} (K_g T + K_b)}{K} S_E + \eta^2 (2+u)^2 L^2 T \gamma S_E \\
&\quad
+ \frac{\eta^2 c (K_g T + K_b)^2 \sigma_h^2 R}{2 K D_\delta} \\
&\quad
+ \frac{\eta^2 (2+u)^2 L^3 T R \gamma}{2 K} \sum_{k=1}^K \Delta_k \\
&\quad
+ \frac{\eta^2 (2+u) c L R}{2 K} \left( T^2 \sigma^2 K_g + 2 K_b \sigma_h^2 \right).
\end{aligned}
\]
Collecting terms involving \( S_E \):
\[
\mathbb{E}[f(w^R)] - \mathbb{E}[f(w^0)] \leq \left( \eta^2 (2+u)^2 L^2 T \gamma - \frac{\eta \mu \sqrt{c} (K_g T + K_b)}{K} \right) S_E + \text{other terms}.
\]
Moving \( S_E \) to the left side:
\begin{align}
\Bigl(\frac{\eta\mu\sqrt{c}\,(K_gT+K_b)}{K}
       -\;\eta^2(2+u)^2L^2T\gamma\Bigr)\,S_E
&\le \mathbb{E}[f(w^0)] - \mathbb{E}[f(w^R)] \notag\\
&\quad
+\,\frac{\eta^2c\,(K_gT+K_b)^2\sigma_h^2\,R}
      {2K\,D_\delta} 
+ \frac{\eta^2(2+u)^2L^3T\,R\gamma}{2K}\sum_{k=1}^K\Delta_k
\notag\\
&\quad
+\,\frac{\eta^2(2+u)cL\,R}{2K}
    \bigl(T^2\sigma^2K_g + 2K_b\sigma_h^2\bigr).
\label{eq:shortened}
\end{align}
Since \(\mathbb{E}[f(w^R)] \geq f^*\) (typically \( f^* \) is the minimum), we have \(\mathbb{E}[f(w^0)] - \mathbb{E}[f(w^R)] \leq \mathbb{E}[f(w^0) - f^*]\). Let \( f_0^* = \mathbb{E}[f(w^0) - f^*] \) (the initial expected suboptimality). Let the coefficient of \( S_E \) be \( C'_S = \frac{\eta \mu \sqrt{c} (K_g T + K_b)}{K} - \eta^2 (2+u)^2 L^2 T \gamma \). To ensure \( C'_S > 0 \), we need \(\eta\) sufficiently small such that \(\eta < \frac{\mu \sqrt{c} (K_g T + K_b)}{K (2+u)^2 L^2 T \gamma}\). Then:
\[
\begin{aligned}
C'_S S_E
&\le f_0^* 
   + \frac{\eta^2 c (K_g T + K_b)^2 \,\sigma_h^2\,R}
          {2\,K\,D_\delta}
   + \frac{\eta^2 (2+u)^2\,L^3\,T\,R\,\gamma}{2\,K}
     \sum_{k=1}^K \Delta_k
\\[6pt]
&\quad
   + \frac{\eta^2 (2+u)\,c\,L\,R}{2\,K}
     \Bigl(T^2\,\sigma^2\,K_g + 2\,K_b\,\sigma_h^2\Bigr).
\end{aligned}
\]
Our goal is \(\frac{1}{R} S_E = \frac{1}{R} \sum_{r=0}^{R-1} \mathbb{E}[f(w^{r}) - f^{*}]\). Dividing both sides by \( R \):
\begin{align*}
C'_S\,\frac{1}{R}\sum_{r=0}^{R-1}\mathbb{E}[f(w^r)-f^*]
&\le 
\frac{f_0^*}{R}
+ \frac{\eta^2\,c\,(K_gT+K_b)^2\,\sigma_h^2}{2K\,D_\delta}
\\
&\quad
+ \frac{\eta^2\,(2+u)^2\,L^3\,T\,\gamma}{2K}\sum_{k=1}^K\Delta_k
\\
&\quad
+ \frac{\eta^2\,(2+u)\,c\,L}{2K}
    \bigl(T^2\sigma^2K_g + 2K_b\sigma_h^2\bigr).
\end{align*}
Finally, dividing both sides by \( C'_S \) (assuming \( C'_S > 0 \)):
\[
\begin{aligned}
\frac{1}{R} \sum_{r=0}^{R-1} \mathbb{E}_{\bar z}[f(w^{r}) - f^{*}] &\leq \frac{1}{C'_S} \left[ \frac{f_0^*}{R} + \frac{\eta^2 c (K_g T + K_b)^2 \sigma_h^2}{2 K \left( \eta \sqrt{c} (K_g T + K_b) - 2 \eta^2 (2+u) c L K_b (1 + c_h) \right)} \right. \\
&\quad \left. + \frac{\eta^2 (2+u)^2 L^3 T \gamma}{2 K} \sum_{k=1}^K \Delta_k + \frac{\eta^2 (2+u) c L}{2 K} \left( T^2 \sigma^2 K_g + 2 K_b \sigma_h^2 \right) \right],
\end{aligned}
\]
where 
\[
C'_S \;=\; \frac{\eta\,\mu\sqrt{c}\,(K_gT+K_b)}{K}
           \;-\;\eta^2(2+u)^2L^2T\,\gamma,
\qquad
D_\delta \;=\;\eta\sqrt{c}\,(K_gT+K_b)
           -2\eta^2(2+u)cL\,K_b(1+c_h).
\]
To ensure both \(C'_S>0\) and \(D_\delta>0\), we require
\[
\eta<\underbrace{\frac{\sqrt{c}(K_gT+K_b)}
                   {2(2+u)cL\,K_b(1+c_h)}}_{=\bar\eta_\delta},
\qquad
\eta<\underbrace{\frac{\mu\sqrt{c}(K_gT+K_b)}
                   {K(2+u)^2L^2T\,\gamma}}_{=\bar\eta_S}.
\]
Let 
\[
\eta_{\max}=\min\{\bar\eta_\delta,\;\bar\eta_S\},
\qquad
\theta\in\bigl(0,\tfrac12\bigr].
\]
Choosing \(\theta=\tfrac12\) gives
\[
\eta
=\tfrac12\,\eta_{\max}
=\frac{\mu\sqrt{c}\,(K_gT+K_b)}
     {2\,K\,(2+u)^2L^2T\,\gamma}.
\]

We select 
\[
\eta = \frac{\mu \sqrt{c} (K_g T + K_b)}{2 K (2+u)^2 L^2 T \gamma}
\]
And from previous client convergence conclusion, we pick a constant local learning rate
\[
\eta_{\rm client} = \frac{c}{\alpha} = \frac{1}{L\,(u+2)} < \frac{2c}{\alpha}
\]
Substituting the learning rate $\eta=\min\!\Biggl\{
\frac{1}{L\,(u + 2)},\;
\frac{\mu\,\sqrt{c}\,\bigl(K_g\,T + K_b\bigr)}
     {2\,K\,(2+u)^2\,L^2\,T\,\gamma}
\Biggr\}$, since $\eta$ is a small value, we neglect $\eta^2$.
\begin{align*}
\frac{1}{R}\sum_{r=0}^{R-1}\mathbb{E}_{\bar z}\bigl[f(w^r)-f^*\bigr]
&\le
\frac{4K^2(2+u)^2L^2T\gamma\,\mathbb{E}[f(w^0) - f^*]}
     {\mu^2\,c\,(K_gT+K_b)^2\,R}
\;+\;\frac{\sigma_h^2}{2}
\;+\;\frac{(2+u)\,L}{4K}\sum_{k=1}^K\Delta_k
\\[6pt]
&\quad
+\;\frac{c}{4K(2+u)L\,T\,\gamma}
\Bigl(T^2\sigma^2K_g + 2K_b\sigma_h^2\Bigr).
\end{align*}

\begin{equation}\label{eq:vpcs_convergence_bigO}
\begin{aligned}
\frac{1}{R}\sum_{r=0}^{R-1}\mathbb{E}_{\bar z}\bigl[f(w^r)-f^*\bigr]
&\le
O\!\biggl(\frac{K^2\,(2+u)^2\,\gamma\,T}{c\,(K_gT+K_b)^2\,R}\biggr) \\[4pt]
&\quad +\,O\!\biggl(\frac{1+u}{K}(\sum_{k=1}^{K_g}\Delta_{kg} + \sum_{k=1}^{K_b}\Delta_{kb})\biggr) \\[4pt]
&\quad +\,O\!\biggl(\frac{c\,T\,K_g}{K(1+u)\,\gamma}\biggr) \\[4pt]
&\quad +\,O\!\biggl(\frac{c\,K_b\,\sigma_h^2}{K(1+u)\,T\,\gamma}\biggr)
+O\!\bigl(1\bigr).
\end{aligned}
\end{equation}
\end{proof}

Define the error upper‐bounds for \textsc{Meerkat-vp} and the baseline \textsc{Meerkat} as follows:
\begin{align*}
E_{\mathrm{\textsc{Meerkat-vp}}}
&=
\underbrace{\frac{4K^2(2+u)^2L^2T\,\gamma}
      {\mu^2\,c\,(K_gT+K_b)^2}
      \,\frac{\mathbb{E}[f(w^0)-f^*]}{R}}_{\text{(I) Transient term}}
+
\underbrace{\biggl[
      \frac{\sigma_h^2}{2}
      +\frac{(2+u)L}{4K}\sum_{k=1}^K\Delta_k
      +\frac{c\bigl(T^2\sigma^2K_g+2K_b\sigma_h^2\bigr)}
            {4K(2+u)L\,T\,\gamma}
\biggr]}_{\text{(II) Steady‐state term}},
\\[6pt]
E_{\mathrm{\textsc{Meerkat}}}
&=
\underbrace{\frac{4L^2(2+u)^2}
      {\mu^2\,c\,(1+\sqrt{c_h})^2\,T}
      \,\frac{\mathbb{E}[f(w^0)-f^*]}{R}}_{\text{(I') Transient term}}
+
\underbrace{\biggl[
      \frac{\sigma_h^2}{\mu\,(1+\sqrt{c_h})^2}
      +\frac{L}{K}\sum_{k=1}^K\Delta_k
      +\frac{T\,c\,\sigma^2}{2L\,(2+u)}
\biggr]}_{\text{(II') Steady‐state term}}.
\end{align*}

\begin{itemize}[leftmargin=*]
  \item \textbf{Transient term ratio:}
    \[
      \frac{(I)}{(I')}
      \;\approx\;
      \gamma\,(1+\sqrt{c_h})^2
      \;<\;1,
      \quad
      \text{and as }c_h\to1,\;\gamma(1+\sqrt{c_h})^2\to0.
    \]
  \item \textbf{Noise term ratio:}
    \[
      \frac{\sigma_h^2/2}{\sigma_h^2/(\mu\,(1+\sqrt{c_h})^2)} = \frac{\mu\,(1+\sqrt{c_h})^2}{2}, \quad \text{which is } < 1 \text{ when } \mu\,(1+\sqrt{c_h})^2 < 2.
    \]
    Empirically $\mu < 1$, thus $\mu\,(1+\sqrt{c_h})^2 < 2$ is True. Additionally, VPCS includes an extra term \(\frac{c K_b \sigma_h^2}{2K (2 + u) L T \gamma}\), which decays as \(\frac{1}{T}\) and becomes negligible for large \(T\).
  \item \textbf{Heterogeneity and variance terms:}
    \[
      \frac{(2+u)L}{4K}\sum_{k=1}^K\Delta_k
      \;<\;
      \frac{L}{K}\sum_{k=1}^K\Delta_k,
      \quad
      \text{and the extra variance term decays as }1/K.
    \]
\end{itemize}

Therefore, under the same \(T\) and \(R\),  $E_{\mathrm{\textsc{Meerkat-vp}}}< E_{\mathrm{\textsc{Meerkat}}}$ and this gap widens as data heterogeneity \(c_h\) increases.
\subsection*{Remarks}
The analysis of the upper bound in Equation~\ref{eq:avg-loss-bound-final} reveals how the local training step \(T\), density level \(u\), and communication rounds \(R\) collectively influence the optimization dynamics through a balance of convergence rate, bias–variance trade-offs, and steady-state error control:

\begin{itemize}[nosep,leftmargin=*]
    \item \textbf{Impact of Local Update Steps $T$:}
    A smaller \(T\) amplifies the term \(\mathcal{O}\left( \frac{(2+u)^2}{TR} \cdot \mathbb{E}[f(w^0) - f(w^R)] \right)\), increasing the average optimality gap after \(R\) communication rounds when \(R\) is fixed. However, this effect can be mitigated by increasing \(R\), as the scaling factor \(\frac{1}{R}\) reduces the term's impact. Conversely, reducing \(T\) diminishes the variance term \(\mathcal{O}\left( \frac{T}{2+u} \right)\), leading to a smaller steady-state error. Thus, a smaller \(T\) may prolong the transient phase but ultimately achieves a tighter optimality gap relative to \(f^*\) after sufficient rounds.

    \item \textbf{Density Level \(u\).}  
    Reducing \(u\) (i.e., increasing sparsity) quadratically benefits the transient term, yet it also inflates the steady‑state term through the denominator \(2+u\).  Choosing \(u\) therefore amounts to balancing communication savings against the plateau error; aggressive sparsification should be coupled with smaller \(T\) to avoid performance degradation.

    \item \textbf{\textsc{Meerkat-vp} Client Selection Strategy:}
    By early-stopping extreme data-imbalance clients with a single local training step, \textsc{Meerkat-vp} effectively reduces Non-IID drift in zeroth-order federated llm fine-tuning. This strategy
    lowers the coefficient of the transient term and further reduces heterogeneity‐ and
    variance‐induced steady‐state error. Under fixed \(T\) and \(R\), these effects yield
    strictly faster convergence and a tighter optimality gap in Non-IID settings.
\end{itemize}

\noindent These conclusions illustrate how tuning \(T\), \(R\), \(u\), and the \textsc{Meerkat-vp} client selection
strategy can optimize performance in federated, sparse, and Non-IID learning scenarios.

\subsection{Empirical analysis of the GradIP Phenomenon}\label{sec:empirical_analysis_of_gradip_phenomenon_proof}
By Lemma~\ref{lem:A2}, the masked sparse zeroth‑order (ZO) surrogate gradient is an \emph{unbiased} estimator of the masked first‑order gradient.  
Building on this fact, we define the vector \( g_c(w; x, y) \) is obtained by computing the gradient of the cross-entropy loss for a single sample with respect to a small subset of parameters selected by a mask.

From logits to Softmax Probabilities we have:
\begin{itemize}[nosep,leftmargin=*]
    \item The model's final layer outputs a \textit{logit} for each class:
    \[
    h(x; w) = \bigl( h_1, \dots, h_C \bigr) \in \mathbb{R}^{C}.
    \]
    \item The softmax probabilities are given by:
    \[
    p_j(x; w) = \frac{e^{h_j}}{\sum_{r=1}^{C} e^{h_r}}.
    \]
\end{itemize}

The cross-entropy loss for a single sample is:
\[
\ell(w; x, y) = -\log p_{y}(x; w), \quad \text{where } y \in \{1, \dots, C\}.
\]

For each logit \( h_j \), the partial derivative is:
\[
\frac{\partial \ell}{\partial h_j} = p_j - \mathbf{1}_{\{y = j\}} = p_j - (e_y)_j,
\]
where \( e_y \) is the one-hot vector with 1 in the \( y \)-th component.

Since we are only interested in the sensitive parameters selected by the mask \( m \), the gradient with respect to the parameters can be written as:
\[
\begin{aligned}
g_c(w; x, y) &= \nabla_{w_m} \ell(w; x, y) \\
&= \sum_{j=1}^{C} \frac{\partial \ell}{\partial h_j} \; \nabla_{w_m} h_j(x; w) \\
&= \Bigl( p(x; w) - e_y \Bigr)^{\!\top} \nabla_{w_m} h(x; w).
\end{aligned}
\]
Here:
\begin{itemize}[nosep,leftmargin=*]
    \item \( \nabla_{w_m} h_j(x; w) \) is the gradient/Jacobian of the logit \( h_j \) with respect to the masked parameter \( w_m \).
    \item By collecting the coefficients \( p_j - \mathbf{1}_{y = j} \) into a vector, we obtain the compact form:
    \[
    g_c(w; x, y) = (p - e_y)^{\!\top} \nabla_{w_m} h(x; w).
    \]
\end{itemize}

In our existing local client convergence inequality and from the assumption~\ref{assumption:assumption4}, we can empirically write the key constant estimator variance:

\[
\sigma_k^2 = \frac{1}{d} \operatorname{Var}_{(x,y) \sim D_k} [g_c(w; x, y)].
\]

We write \(g_c\) in matrix form:
Define:
\[
\mathbf{J}(x; w) = \nabla_{w_m} h(x; w) \in \mathbb{R}^{d_m \times C}, \quad \mathbf{a}(x, y; w) = p(x; w) - \mathbf{e}_y \in \mathbb{R}^{C}.
\]
Thus:
\[
g_c(w; x, y) = \mathbf{J}^{\!\top}(x; w) \, \mathbf{a}(x, y; w) \in \mathbb{R}^{d_m}.
\]
We substitute this equation to the above estimator variance:
\[
\sigma_k^2 = \frac{1}{d_m} \underbrace{ \mathbb{E}_{(x,y)} \bigl\| g(w; x, y) - \nabla f_k(w) \bigr\|^2 }_{\text{total variance}} = \frac{1}{d_m} \operatorname{tr} \left( \mathbf{J}^{\!\top} \, \underbrace{ \operatorname{Cov}_{(x,y)} \bigl[ \mathbf{a}(x, y; w) \bigr] }_{\Sigma_{a}} \, \mathbf{J} \right).
\tag{1}
\]
Note:
\begin{itemize}[nosep,leftmargin=*]
    \item \(\Sigma_{a} \in \mathbb{R}^{C \times C}\) is determined solely by the \textbf{label distribution and prediction probabilities}.
    \item \(\mathbf{J}\) reflects the network structure and influences only a similarity coefficient.
\end{itemize}

\textbf{Analysis of Extreme Non-IID (Single Label \(y^\dagger\))}:

\begin{itemize}[nosep,leftmargin=*]
    \item The label is fixed, so \(\mathbf{1}_{y = j}\) is constant.
    \item If the model is mostly correct: \(p \approx \mathbf{e}_{y^\dagger}\), then \(\mathbf{a}(x, y; w) \approx \mathbf{0}\), yielding:
    \[
    \Sigma_{a} \approx \mathbf{0} \quad \Longrightarrow \quad \sigma_{\text{non}}^2 \approx \frac{1}{d_m} \operatorname{tr}(\mathbf{0}) = 0.
    \]
\end{itemize}

\textbf{Analysis of Approximate IID (Balanced Multi-Label)} 

\begin{itemize}[nosep,leftmargin=*]
    \item The label \(y\) varies across \(\{1, \dots, C\}\).
    \item Even as the loss decreases, \(p_j\) differs across classes. The covariance is:
    \[
    (\Sigma_{a})_{rs} = \mathbb{E} \left[ (p_r - \mathbf{1}_{y = r})(p_s - \mathbf{1}_{y = s}) \right] - \underbrace{ \left( \mathbb{E}[p_r - \mathbf{1}_{y = r}] \right) }_{=0} \underbrace{ \left( \mathbb{E}[p_s - \mathbf{1}_{y = s}] \right) }_{=0}.
    \]
    This matrix has diagonal elements \(\mathbb{E} \left[ (p_r - \mathbf{1}_{y = r})^2 \right] > 0\), making \(\Sigma_{a}\) positive definite or semi-definite but non-zero. Thus:
    \[
    \sigma_{\text{iid}}^2 = \frac{1}{d_m} \operatorname{tr} \left( \mathbf{J}^{\!\top} \Sigma_{a} \mathbf{J} \right) > 0.
    \]
\end{itemize}
Our local convergence bound is:

\[
\frac{1}{T} \sum_{t=0}^{T-1} \mathbb{E} \| \nabla f_k(w_k^t) \|^2 \leq O \left( \frac{1}{T} \right) + \sigma_k^2,
\]

which indicates that in the steady state, the upper bound of the gradient norm is determined by \( \sigma_k^2 \). Therefore,

\[
\sigma_{\text{iid}}^2 \gg \sigma_{\text{non-iid}}^2 \approx 0 \quad \Longrightarrow \quad
\begin{cases}
\text{IID clients: Gradient Norm oscillates significantly;} \\
\text{Non-IID clients: Gradient Norm decreases monotonically and approaches 0.}
\end{cases}
\]

\subsection*{Remarks}

In summary, by substituting the explicit form of the cross-entropy gradient into our sparse ZO convergence formula, we can empirically explain that due to the variance differences caused by label distributions, the Gradient Norms of IID clients maintains significant fluctuations, while that of extremely Non-IID clients rapidly decays and converges to zero.

\begin{algorithm}
    \caption{\textsc{Meerkat}: Sparse Zeroth-Order Optimization for Federated LLM Fine-Tuning}
\label{alg:fl-szogeneral}
\begin{algorithmic}
\STATE {\bfseries Input:} pre-trained weight $\mathbf{w}_0$, 
sparse mask $\mathbf{m}$,
learning rate $\eta$, 
perturbation scale $\epsilon$, 
number of rounds $R$, 
total number of \textit{clients} $K$
number of local steps $T$
\STATE \textit{Server} initiate seed list $\{s_1^1, \cdots, s^T_1\}$
\FOR{Round $r = 1$ to $R$}
   \STATE \textbf{Step 1. Local ZO update.}
  \FOR{each \textit{client} $k$ \textbf{in parallel}}
    \STATE Download model from \textit{server}: $\mathbf{w}_k \gets \mathbf{w}_{r-1}$
    \STATE Download seed list $\{s_r^1, \cdots, s^T_r\}$ from \textit{server}
    \FOR{local step $t = 1$ to $T$}
        \STATE Initialize $\mathbf{z}^{t}_k$ with seed $s_r^t$.
        \STATE Sample a batch $\mathcal{B}$ on \textit{client } dataset.
      \STATE $\tilde{\mathbf{w}}_{\text{k}}^{t} \gets \mathbf{w}_k^{t} + \epsilon \cdot( \mathbf{z}^{t}_k \odot \mathbf{m})$
      \STATE Compute loss $f_{+} \gets f(\tilde{\mathbf{w}}_{\text{k}}^{t}; \mathcal{B})$
      \STATE $\tilde{\mathbf{w}}_{\text{k}}^{t} \gets \mathbf{w}_k^{t} - 2\epsilon \cdot (\mathbf{z}^{t}_k \odot \mathbf{m} )$
      \STATE Compute loss: $f_{-} \gets f(\tilde{\mathbf{w}}_{\text{k}}^{t}; \mathcal{B})$
      \STATE Compute projected gradient: $$g^{t}_k \gets(f_{+} - f_{-})/2\epsilon$$
      \STATE Update \textit{client} model:
      \[
        \hat{\nabla} f_k^t \gets g^{t}_k\cdot (\mathbf{z}^t_k \odot \mathbf{m})
      \]
      \[
        \mathbf{w}^{t+1}_k \gets \mathbf{w}^{t}_k - \eta \hat{\nabla} f_k^t
      \]
    
    \ENDFOR
    \STATE Send projected gradients $\{g^{1}_k, g^{2}_k, \dots, g^{T}_k\}$ to \textit{server}.
    
  \ENDFOR

  \STATE \textbf{Step 2. \textit{Server} recover each \textit{client's} update with \textit{virtual path}.}
  \FOR{$k=1$ to $K$}
    \FOR{local step $t = 1$ to $T$}
        \STATE Generate $\mathbf{z}^{t}_k$ with seed $s_r^t$.
        \STATE Perform \textit{virtual path}:
        \[\hat{\nabla} f_k^t = g_k^t \cdot (\mathbf{z}_k^t \odot \mathbf{m})\]
        \[
          \mathbf{w}^{t+1}_k \gets \mathbf{w}^t_k - \eta \hat{\nabla} f_k^t
        \]
        \STATE Store recover \textit{client} model parameters $\mathbf{w}_k^{T}$
    \ENDFOR
  \ENDFOR

  \STATE \textbf{Step 3. \textit{Server} Aggregate reconstructed sparse model update.}
  \[
    \mathbf{w}_r \gets \frac{1}{K} \sum_{k=1}^{K} \mathbf{w}_k^T
  \]
  \STATE Generate new seed list $\{s_{r+1}^1, \cdots, s^T_{r+1}\}$.

\ENDFOR
\STATE {\bfseries Output:} $\mathbf{w}_R$
\end{algorithmic}
\end{algorithm}

\begin{figure*}[!h]
    \centering
    \includegraphics[width=\textwidth]{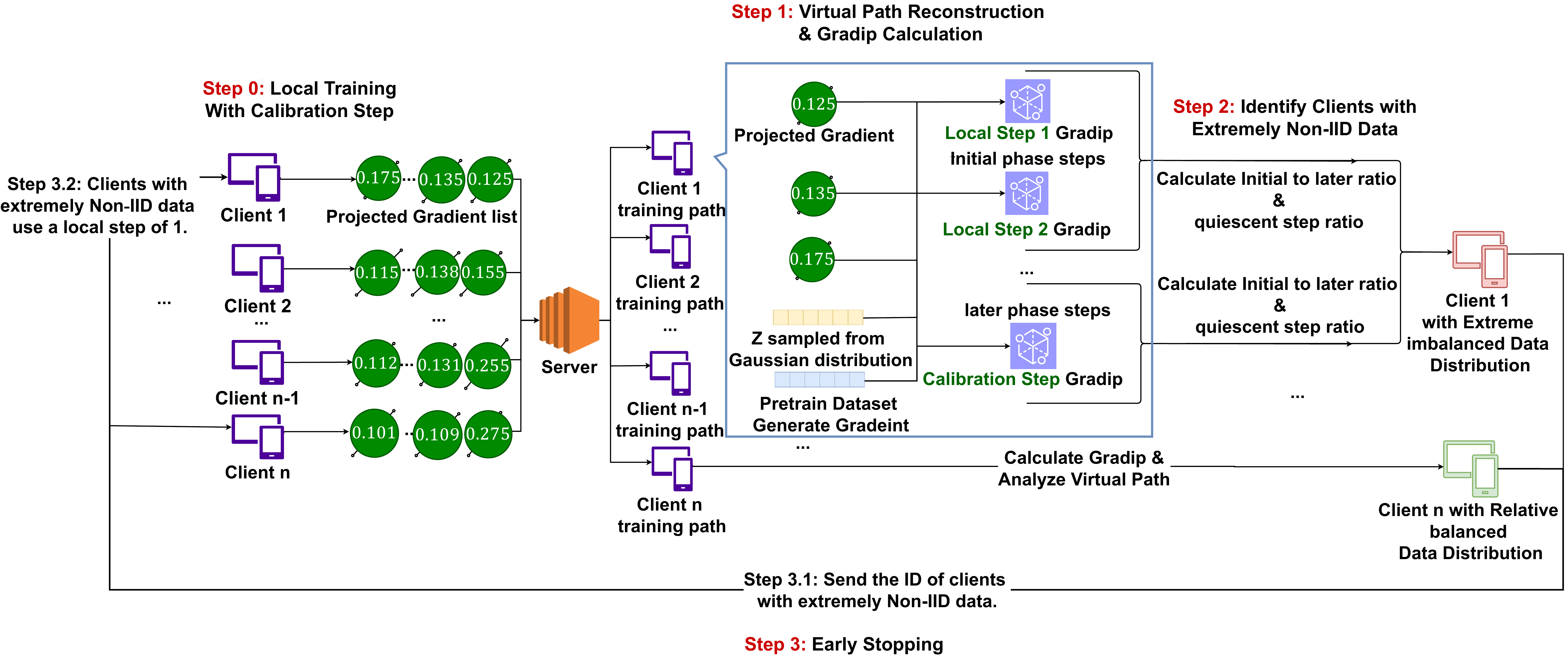}
    \caption{\textsc{Meerkat-vp}: Each client locally trains with a prescribed statistic step, yielding a sequence of projected gradients. The server leverages a randomly sampled vector $z_k^t$ from the Gaussian distribution $\mathcal{N}(\mathbf{0}, \mathbf{I}_d)$ to reconstruct $\nabla f_k^t$, and then computes GradIP (see Definition~\ref{def:gradip}) at every local training step. By analyzing the resulting GradIP values across all clients, the system distinguishes those clients with extremely Non-IID data from those that are relatively balanced. For the parameters 
    \textbf{later phase steps} ,
    \textbf{initial phase steps},
    \textbf{quiescent step ratio},
    and 
    \textbf{initial to later ratio},
    please refer to Table~\ref{tab:meerkat_vp_parameter_notation} in Appendix~\ref{sec:additional_experiment_settings}}
    \label{fig:vpoverview}
\end{figure*}

\begin{algorithm}
\caption{\textsc{Meerkat} with high frequency server-client synchronization}
\label{alg:fl-szofreq}
\begin{algorithmic}
   \STATE {\bfseries Input:} Seed $s$ and projected gradients $g_k^t$ from all clients, 
global model $\mathbf{m}$, 
learning rate $\eta$, 
number of clients $K$, sparse mask $\mathbf{m}$
\STATE \textbf{Aggregate projected gradients from all clients with same seed:}
\[
g \gets \frac{1}{K} \sum_{k=1}^{K} g_k
\]

\STATE \textbf{Calculate Zeroth-Order Gradients:}
\[
    \hat\nabla f \gets g\cdot (\mathbf{z} \odot \mathbf{m})
\]

\STATE \textbf{Update global model parameters:}
\[
\mathbf{w_{r+1}} \gets \mathbf{w_r} - \eta \, (\hat\nabla f \odot \mathbf{m})
\]
\STATE \textbf{Generate new seed $s\_new$}

\STATE {\bfseries Output:} \textbf{Send aggregated global projected gradients $g$ and seed $s\_new$ to all clients.}
\end{algorithmic}
\end{algorithm}

\section{More Experimental Details}\label{sec:experiment_details}

\subsection{Additional Experimental Settings}\label{sec:additional_experiment_settings}\label{sec:experiment_settings}

\noindent \textbf{Testbed.} All experiments are run on servers with the following configurations:
RTX A6000 Setup: Ubuntu 18.04.6 LTS with 2 NVIDIA RTX A6000 GPUs (each with 48GB GPU memory).
GH200 Setup: Ubuntu 20.04 with 1 NVIDIA GH200 GPU (97GB GPU memory).
A100 Setup: Ubuntu 22.04 with 1 NVIDIA A100 GPU (40GB GPU memory).

\noindent \textbf{Dataset.}
We conducted experiments using datasets from the GLUE and SuperGLUE benchmarks, including SST2, AgNews, Yelp, BoolQ, RTE, WSC, and WiC. To create IID clients data, we shuffle the entire dataset and evenly divide it among the clients. To create Non-IID clients data, we split the data using a Dirichlet distribution. For all tasks, the Dirichlet \(\alpha\) parameter is set to $0.5$ to control the degree of data heterogeneity.

\noindent \textbf{Evaluation metric.} 
In our experiments, test accuracy is used as the primary evaluation metric. Accuracy is computed as the proportion of correctly predicted labels across all evaluation samples., Additionally, we incorporate the GradIP score (see  Definition~\ref{def:gradip}) to analyze further the dynamics of local model training under IID and Non-IID client data settings. GradIP provides a metric to measure the quality of client training trajectories, particularly in heterogeneous data distributions.

\noindent \textbf{Notations.} We present the parameters definition used in \textsc{Meerkat-vp} in Table~\ref{tab:meerkat_vp_parameter_notation}.

\begin{table*}[htbp]
\centering
\caption{\textsc{Meerkat-vp} Parameters Notation}
\label{tab:meerkat_vp_parameter_notation}
\begin{tabular}{l l}
\toprule
\textbf{Term} & \textbf{Explanation} \\
\midrule
\textbf{calibration steps} $T_{\textsf{cali}}$ & Number of steps each client performs to measure GradIP. \\
\textbf{initial phase steps} $T_{\textsf{init}}$ & Number of earliest local steps used to measure the early-phase GradIP. \\
\textbf{later phase steps} $T_{\textsf{later}}$  & Number of latest local steps used to observe the late-phase GradIP. \\
\textbf{convergence threshold} $\sigma$ & Threshold indicating when GradIP is effectively zero. \\
\textbf{quiescent step ratio} $\rho_{\textsf{quie}}$ & Fraction of later phase where GradIP stays below threshold \\
\textbf{Initial to later ratio} $\rho_{\textsf{later}}$ & Ratio of average GradIP in the initial phase to that in the later phase. \\
\bottomrule
\end{tabular}
\end{table*}

\noindent \textbf{Hyper-parameters.} We use the following hyper-parameters in our experiments; see Table~\ref{tab:hyper_params}
\begin{table*}[htbp]
\centering
\caption{Hyper-parameters used in our experiments.}
\label{tab:hyper_params}
\begin{tabular}{l l}
\toprule
\textbf{Parameter} & \textbf{Value} \\
\midrule
\textsc{Meerkat} learning rate    & [2e-4, 2e-8]  \\
\textsc{Meerkat-vp} learning rate    & [2e-4, 2e-8]  \\
LoRA-FedZO learning rate & [2e-4, 2e-8] \\
Full-FedZO learning rate & [2e-4, 2e-8] \\
Batch size               & 16 \\
Dirichlet alpha          & 0.5, 0.3, 0.1 \\
LoRA rank                & 16 \\
LoRA alpha               & 16 \\
initial phase steps             & 20 \\
later phase steps              & 20 \\
convergence threshold            & 1 \\
quiescent step ratio              & [0.4, 0.5, 0.7] \\
Initial to later ratio      & [1.5, 2, 5, 10, 15] \\
calibration steps          & 100 \\
Total clients            & 10 \\
\bottomrule
\end{tabular}
\end{table*}

\subsection{Additional Experiment Results}\label{sec:additional_experiment_results}

In this section, we present additional experimental results to compare \textsc{Meerkat}, \textsc{Meerkat-vp}, Full-FedZO, and LoRA-FedZO under various settings. The results include five tables and three figures, providing a detailed evaluation of performance across different models, datasets and experiment settings. Table~\ref{tab:meerkat_vp_parameter_notation} provides a description of the parameters used in \textsc{Meerkat-vp}, and Table~\ref{tab:hyper_params} lists the experiment parameters used in this experiment.Table~\ref{tab:comparison_full_vs_sparse_iid} compares \textsc{Meerkat} and Full-FedZO on multiple tasks at the same communication frequency for Llama-3.2-1B, Qwen2-1.5B, and Gemma-2-2b models. Table~\ref{tab:comparison_sparse_vs_virtual_path} presents results in a Non-IID client data scenario, comparing \textsc{Meerkat-vp} and \textsc{Meerkat} under the same communication frequency and sparsity density, and demonstrating \textsc{Meerkat-vp} improved performance.
Table~\ref{tab:outlier_percentage} investigates the robustness of \textsc{Meerkat} by evaluating test accuracy with local step 1 across different sparsity densities. Table~\ref{tab:all_in_one} compares \textsc{Meerkat}, Full-FedZO and LoRA-FedZO under high communication frequency across IID and Non-IID client data settings. Figure~\ref{fig:gradip_main_fig} and Figure~\ref{fig:gradip_real_noniid_agnews_boolq} further illustrate the phenomenon of GradIP under IID and Non-IID client data settings. 

\begin{table*}[tbp]
\caption{Performance comparison of \textsc{Meerkat} and Full-FedZO on tasks SST-2, AgNews, Yelp, BoolQ, RTE, WSC, WIC under an IID client data setting. 
``Acc'' is the average test accuracy across tasks. Bold numbers indicate the highest value in each row.}
\label{tab:comparison_full_vs_sparse_iid}
\vspace{5pt}
\centering
\resizebox{0.8\textwidth}{!}{
\begin{tabular}{llc cccccccc}
\toprule
& \textbf{Methods} & \textbf{Local Step} & \textbf{SST-2} & \textbf{AgNews} & \textbf{Yelp} & \textbf{BoolQ} & \textbf{RTE} & \textbf{WSC} & \textbf{WIC} & \textbf{Acc} \\
\midrule
\multirow{8}{*}{\textbf{LLaMA-3.2-1B}}
&  Full-FedZO & 10   & 0.913 & 0.700 & 0.938 & 0.646 & 0.537 & 0.634 & 0.540 & 0.701 \\
& \textsc{Meerkat} & 10 & \textbf{0.925} & \textbf{0.881} & \textbf{0.964} & \textbf{0.751} & \textbf{0.684} & 0.634 & \textbf{0.648} & \textbf{0.784} \\
\cline{2-11}
&  Full-FedZO & 30 & 0.913 & 0.700 & 0.935 & 0.643 & 0.542 & 0.634 & 0.528 & 0.699 \\
& \textsc{Meerkat} & 30 & \textbf{0.919} & \textbf{0.865} & \textbf{0.967} & \textbf{0.729} & \textbf{0.644} & \textbf{0.663} & \textbf{0.617} & \textbf{0.772} \\
\cline{2-11}
&  Full-FedZO & 50 & 0.913 & 0.698 & 0.939 & 0.641 & 0.520 & 0.634 & 0.539 & 0.698 \\
& \textsc{Meerkat} & 50 & \textbf{0.920} & \textbf{0.871} & \textbf{0.966} & \textbf{0.734} & \textbf{0.648} & \textbf{0.653} & \textbf{0.614} & \textbf{0.772} \\
\cline{2-11}
&  Full-FedZO & 100 & 0.903 & 0.705 & 0.934 & 0.656 & 0.537 & 0.634 & 0.537 & 0.701 \\
& \textsc{Meerkat} & 100 & \textbf{0.913} & \textbf{0.842} & \textbf{0.945} & \textbf{0.722} & \textbf{0.573} & 0.634 & \textbf{0.595} & \textbf{0.746} \\
\midrule
\multirow{8}{*}{\textbf{Qwen2-1.5b}}
&  Full-FedZO & 10 & 0.891 & 0.701 & 0.931 & 0.696 & 0.800 & 0.682 & 0.579 & 0.754 \\
& \textsc{Meerkat} & 10 & \textbf{0.944} & \textbf{0.889} & \textbf{0.942} & \textbf{0.788} & \textbf{0.817} & \textbf{0.700} & \textbf{0.656} & \textbf{0.819} \\
\cline{2-11}
&  Full-FedZO & 30 & 0.902 & 0.702 & 0.930 & 0.709 & 0.817 & 0.663 & 0.583 & 0.758 \\
& \textsc{Meerkat} & 30 & \textbf{0.942} & \textbf{0.895} & \textbf{0.940} & \textbf{0.786} & \textbf{0.840} & \textbf{0.710} & \textbf{0.659} & \textbf{0.825} \\
\cline{2-11}
&  Full-FedZO & 50 & 0.902 & 0.705 & 0.929 & 0.701 & 0.808 & \textbf{0.663} & 0.590 & 0.757  \\
& \textsc{Meerkat} & 50 & \textbf{0.942} & \textbf{0.885} & \textbf{0.934} & \textbf{0.784} & \textbf{0.840} & 0.634 & \textbf{0.637} & \textbf{0.808}\\
\cline{2-11}
&  Full-FedZO & 100  & 0.899 & 0.714 & 0.928 & 0.705 & \textbf{0.831} & \textbf{0.682} & 0.594 & 0.765 \\
& \textsc{Meerkat} & 100 & \textbf{0.946} & \textbf{0.886} & \textbf{0.930} & \textbf{0.776} & 0.804 & 0.653 & \textbf{0.653} & \textbf{0.807} \\
\midrule
\multirow{8}{*}{\textbf{Gemma2-2b}}
&  Full-FedZO & 10  & 0.87 & 0.732 & 0.944 & 0.717 & 0.564 & 0.634 & 0.592 & 0.723 \\
& \textsc{Meerkat} & 10 & \textbf{0.943} & \textbf{0.892} & \textbf{0.97} & \textbf{0.817} & \textbf{0.724} & \textbf{0.653} & \textbf{0.636} & \textbf{0.805}\\
\cline{2-11}
&  Full-FedZO & 30  & 0.91 & 0.81 & 0.942 & 0.73 & 0.56 & 0.644 & 0.578 & 0.739 \\
& \textsc{Meerkat} & 30 & \textbf{0.943} & \textbf{0.887} & \textbf{0.973} & \textbf{0.812} & \textbf{0.617} & \textbf{0.663} & \textbf{0.608} & \textbf{0.786} \\
\cline{2-11}
&  Full-FedZO & 50     & 0.911 & 0.812 & 0.942 & 0.735 & 0.551 & 0.634 & 0.572 & 0.737 \\
& \textsc{Meerkat} & 50 & \textbf{0.94} & \textbf{0.873} & \textbf{0.964} & \textbf{0.812} & \textbf{0.604} & 0.634 & \textbf{0.617} & \textbf{0.778} \\
\cline{2-11}
&  Full-FedZO & 100    & 0.917 & 0.83 & 0.936 & 0.728 & 0.56 & \textbf{0.644} & 0.59 & 0.744 \\
& \textsc{Meerkat} & 100 & \textbf{0.949} & \textbf{0.87} & \textbf{0.954} & \textbf{0.815} & \textbf{0.568} & 0.634 & \textbf{0.592} & \textbf{0.769} \\
\bottomrule
\end{tabular}
}
\end{table*}

\begin{table*}[tbp]
\centering
\captionsetup{width=\textwidth, justification=centering}
\caption{Comparison of \textsc{Meerkat-vp} and \textsc{Meerkat} under Non-IID client data setting,
with the same local step and sparsity. 
Tasks include SST-2, AgNews, Yelp, BoolQ, RTE, WSC, and WIC. “Acc” indicates the average test accuracy across all tasks. Bold numbers highlight the best result in each row.}
\label{tab:comparison_sparse_vs_virtual_path}
\vspace{5pt}
\resizebox{0.8\textwidth}{!}{
\begin{tabular}{lcc cccccccc}
\toprule
& \textbf{Methods} & \textbf{Local Step} & \textbf{SST-2} & \textbf{AgNews} & \textbf{Yelp} & \textbf{BoolQ} & \textbf{RTE} & \textbf{WSC} & \textbf{WIC} & \textbf{Acc} \\
\midrule
\multirow{8}{*}{\textbf{LLaMA-3.2-1B}}
& \textsc{Meerkat-vp} & 10 & \textbf{0.922} & 0.864 & 0.962 & \textbf{0.713} & \textbf{0.617} & 0.644 & 0.625 & \textbf{0.764} \\
& \textsc{Meerkat} &10 & 0.916 & \textbf{0.872} & \textbf{0.964} & 0.695 & 0.600 & \textbf{0.653} & \textbf{0.614} & 0.759 \\
\cline{2-11}
& \textsc{Meerkat-vp}  &30  & \textbf{0.919} & 0.825 & 0.963 & \textbf{0.685} & \textbf{0.595} & 0.634 & \textbf{0.631} & \textbf{0.750} \\
& \textsc{Meerkat}  & 30 & 0.897 & \textbf{0.862} & \textbf{0.965} & 0.646 & 0.577 & \textbf{0.644} & 0.583 & 0.739 \\
\cline{2-11}
& \textsc{Meerkat-vp} & 50 & 0.909 & \textbf{0.836} & 0.959 & \textbf{0.691} & 0.577 & 0.615 & \textbf{0.615} & \textbf{0.743} \\
& \textsc{Meerkat} & 50 & 0.909 & 0.827 & \textbf{0.965} & 0.647 & \textbf{0.595} & \textbf{0.634} & 0.567 & 0.734  \\
\cline{2-11}
& \textsc{Meerkat-vp} & 100    & \textbf{0.904} & \textbf{0.824} & \textbf{0.962} & \textbf{0.684} & 0.577 & \textbf{0.653} & \textbf{0.630} & \textbf{0.747} \\
& \textsc{Meerkat} & 100 & 0.896 & 0.777 & 0.961 & 0.658 & 0.577 & 0.644 & 0.573 & 0.726 \\
\midrule
\multirow{8}{*}{\textbf{Qwen2-1.5b}}
& \textsc{Meerkat-vp}  & 10   & 0.941 & \textbf{0.886} & \textbf{0.947} & \textbf{0.76} & \textbf{0.822} & 0.653 & \textbf{0.636} & \textbf{0.806} \\
& \textsc{Meerkat} & 10 & \textbf{0.949} & 0.881 & 0.934 & 0.752 & 0.813 & \textbf{0.682} & 0.628 & 0.805 \\
\cline{2-11}
& \textsc{Meerkat-vp}  & 30  & 0.935 & 0.876 & \textbf{0.953} & \textbf{0.759} & \textbf{0.822} & 0.653 & \textbf{0.626} & \textbf{0.803} \\
& \textsc{Meerkat} & 30 & \textbf{0.944} & \textbf{0.878} & 0.928 & 0.734 & 0.800 & \textbf{0.663} & 0.624 & 0.795 \\
\cline{2-11}
& \textsc{Meerkat-vp} & 50   & 0.931 & \textbf{0.882} & \textbf{0.946} & \textbf{0.754} & \textbf{0.804} & 0.644 & \textbf{0.63} & \textbf{0.798} \\
& \textsc{Meerkat} & 50 & \textbf{0.948} & 0.872 & 0.926 & 0.746 & 0.795 & \textbf{0.663} & 0.594 & 0.792 \\
\cline{2-11}
& \textsc{Meerkat-vp} & 100 & 0.935 & 0.874 & \textbf{0.947} & \textbf{0.751} & \textbf{0.817} & 0.653 & \textbf{0.644} & \textbf{0.803} \\
& \textsc{Meerkat} & 100 & \textbf{0.936} & \textbf{0.878} & 0.925 & 0.741 & 0.795 & \textbf{0.663} & 0.61 & 0.792 \\
\midrule
\multirow{8}{*}{\textbf{Gemma2-2b}}
& \textsc{Meerkat-vp} & 10     & \textbf{0.948} & \textbf{0.873} & \textbf{0.971} & 0.802 & \textbf{0.657} & \textbf{0.663} & 0.609 & \textbf{0.789} \\
& \textsc{Meerkat} & 10 & 0.939 & 0.869 & 0.96 & \textbf{0.804} & 0.591 & 0.634 & 0.609 & 0.772 \\
\cline{2-11}
& \textsc{Meerkat-vp} & 30   & \textbf{0.948} & \textbf{0.86} & \textbf{0.974} & \textbf{0.799} & \textbf{0.6} & 0.634 & \textbf{0.619} & \textbf{0.776} \\
& \textsc{Meerkat} & 30 & 0.94 & 0.855 & 0.947 & 0.734 & 0.568 & \textbf{0.644} & 0.601 & 0.755 \\
\cline{2-11}
& \textsc{Meerkat-vp} & 50  & \textbf{0.949} & 0.853 & \textbf{0.969} & \textbf{0.782} & 0.551 & 0.615 & 0.620& 0.762 \\
& \textsc{Meerkat} & 50 & 0.945 & \textbf{0.857} & 0.966 & 0.767 & \textbf{0.613} & \textbf{0.634} & \textbf{0.623} & \textbf{0.772} \\
\cline{2-11}
& \textsc{Meerkat-vp} & 100   & \textbf{0.944} & 0.812 & \textbf{0.97} & 0.733 & 0.551 & 0.634 & \textbf{0.634} & \textbf{0.754} \\
& \textsc{Meerkat} & 100 & 0.94 & \textbf{0.851} & 0.951 & \textbf{0.745} & 0.551 & 0.634 & 0.574 & 0.749 \\
\bottomrule
\end{tabular}
}
\end{table*}

\begin{table*}[tbp]
\centering
\captionsetup{width=\textwidth, justification=centering}
\caption{\textsc{Meerkat} performance at local step $=1$ with varying outlier percentages across the LLaMA-3.2-1B, Qwen2-1.5b, and Gemma2-2b models. We report test accuracy on SST-2, AgNews, Yelp, BoolQ, RTE, WSC, and WIC under both IID 
and Non-IID client data settings. Bold numbers indicate the highest value in each row.}
\label{tab:outlier_percentage}
\vspace{5pt}
\resizebox{\textwidth}{!}{
\begin{tabular}{lc ccccccc ccccccc}
\toprule
& & \multicolumn{7}{c}{\textbf{IID}} 
& \multicolumn{7}{c}{\textbf{Non-IID}} \\
\cmidrule(lr){3-9} \cmidrule(lr){10-16}
\textbf{Model} & \textbf{Outlier Percentage}
& \textbf{SST-2} & \textbf{AgNews} & \textbf{Yelp} & \textbf{BoolQ} & \textbf{RTE} & \textbf{WSC} & \textbf{WIC}
& \textbf{SST-2} & \textbf{AgNews} & \textbf{Yelp} & \textbf{BoolQ} & \textbf{RTE} & \textbf{WSC} & \textbf{WIC} \\
\midrule
\multirow{4}{*}{\textbf{LLaMA-3.2-1B}}
& 5e-1     
  & 0.917  & 0.72 & 0.965 & 0.725 & 0.653  & 0.644 & 0.634 
  & 0.895  & 0.669 & 0.964 & 0.684 & 0.644 & 0.653 & 0.594 \\
& 5e-2    
  & 0.913  & 0.861 & 0.966 & 0.749 & 0.653 & 0.644 & 0.633
  & 0.915 & 0.87 & \textbf{0.97} & 0.722 & 0.653  & 0.644 & 0.619 \\
& 5e-3   
  & 0.900  & \textbf{0.885} & \textbf{0.971} & 0.769 & 0.702 & 0.653 & 0.614
  & \textbf{0.930}  & 0.874 & 0.963 & \textbf{0.753} & 0.620 & 0.66 & 0.62 \\
& 5e-4
  & 0.910  & 0.877 & 0.954 & \textbf{0.773} & \textbf{0.720} & \textbf{0.663} & 0.641
  & 0.911   & \textbf{0.888} & 0.956 & 0.700 & \textbf{0.693}  & \textbf{0.663}  & \textbf{0.628}  \\
& 5e-5
  & \textbf{0.922}  & 0.879 & 0.964 & 0.724 & 0.631 & 0.625 & \textbf{0.648}
  & 0.92   & 0.876 & 0.940 & 0.725 & 0.613 & \textbf{0.663}  & 0.626  \\
\midrule
\multirow{4}{*}{\textbf{Qwen2-1.5b}}
& 5e-1     
  & 0.854 & 0.856 & 0.947 & 0.766 & 0.82 & 0.663 & 0.644
  & 0.845  & 0.854 & 0.946 & 0.753 & 0.826 & 0.682 & 0.631 \\
& 5e-2      
  & 0.925  & 0.868 & 0.949 & 0.778 & 0.826 & 0.692 & 0.647
  & 0.93  & 0.853  & 0.943 & 0.759 & 0.822 & 0.663 & 0.663 \\
& 5e-3  
  & \textbf{0.926}  & 0.851 & 0.945 & 0.765 & \textbf{0.813} & \textbf{0.692} & \textbf{0.658}
  & 0.924  & \textbf{0.866} & 0.94 & 0.759 & 0.822 & \textbf{0.692} & 0.661  \\
& 5e-4
  & 0.92  & 0.764 & 0.943 & 0.774 & 0.813 & 0.682 & 0.645
  & 0.918  & 0.848 & 0.943  & \textbf{0.762} & 0.813 & 0.682 & 0.647 \\
& 5e-5
  & 0.903  & 0.78 & 0.941 & 0.748 & 0.80 & 0.673 & 0.625
  & 0.896  & 0.799 & 0.937  & 0.739 & 0.80 & 0.673 & 0.633 \\
\midrule
\multirow{4}{*}{\textbf{Gemma2-2b}}
& 5e-1     
  & 0.842  & 0.867  & 0.963 & 0.751 & 0.657 & \textbf{0.673} & 0.626
  & 0.871   & 0.855 & 0.952  & 0.695 & 0.653  & \textbf{0.663} & 0.619 \\
& 5e-2     
  & 0.932  & \textbf{0.878} & \textbf{0.977}  & 0.809 & 0.791  & 0.663 & 0.623
  & 0.92 & \textbf{0.863} & 0.968 & 0.786 & 0.706 & 0.653 & 0.634 \\
& 5e-3  
  & \textbf{0.952}  & 0.871 & 0.971 & \textbf{0.837}  & \textbf{0.800} & 0.663 & \textbf{0.639}
  & \textbf{0.942} & 0.853 & \textbf{0.97} & 0.807 & \textbf{0.751}  & 0.653 & \textbf{0.645} \\
& 5e-4
  & 0.941  & 0.824 & 0.967 & 0.83 & 0.764 & 0.663 & 0.612
  & 0.941  & 0.83 & 0.962  & \textbf{0.831} & 0.746 & 0.634 & 0.63 \\
& 5e-5
  & 0.92  & 0.828 & 0.952 & 0.797 & 0.6 & 0.634 & 0.606
  & 0.922  & 0.764 & 0.949  & 0.774 & 0.56 & 0.634 & 0.601 \\
\bottomrule
\end{tabular}
}
\end{table*}

\begin{table*}[tbp]
\centering
\captionsetup{width=\textwidth, justification=centering}
\caption{Performance comparison of Full-FedZO, LoRA-FedZO, and \textsc{Meerkat} under 
synchronous updates with $local step = 1$, evaluated on both IID and Non-IID client data settings(\textbf{Dirichlet $\alpha=0.5$}) 
across LLaMA-3.2-1B, Qwen2-1.5b, and Gemma2-2b. We report test accuracy on SST-2, AgNews, 
Yelp, BoolQ, RTE, WSC, and WIC. Bold numbers indicate the highest value in each row.}
\label{tab:all_in_one}
\vspace{5pt}
\resizebox{0.8\textwidth}{!}{
\begin{tabular}{ll cccccccc}
\toprule
\textbf{Model} & \textbf{Method} 
& \textbf{SST-2} & \textbf{AgNews} & \textbf{Yelp} & \textbf{BoolQ} & \textbf{RTE} & \textbf{WSC} & \textbf{WIC} & \textbf{Acc} \\
\midrule
\multirow{3}{*}{\textbf{LLaMA-3.2-1B (IID)}} 
&  Full-FedZO & \textbf{0.918} & 0.801 & 0.937 & 0.686 & 0.54 & 0.625 & 0.58 & 0.726  \\
& LoRA-FedZO & 0.915 & 0.855 & 0.944 & 0.672 & 0.599 & \textbf{0.663} & 0.599 & 0.749 \\
& \textsc{Meerkat} & 0.900 & \textbf{0.885} & \textbf{0.971} & \textbf{0.773} & \textbf{0.702} & 0.653 & \textbf{0.614} & \textbf{0.785} \\
\cmidrule(lr){2-10} 
\multirow{3}{*}{\textbf{LLaMA-3.2-1B (Non-IID)}} 
&  Full-FedZO & 0.911 & 0.831 & 0.937 & 0.672 & 0.528 & 0.587 & 0.567 & 0.719 \\
& LoRA-FedZO & 0.8669 & 0.842 & 0.944 & 0.659 & 0.53 & 0.567 & 0.578 & 0.712 \\
& \textsc{Meerkat} & \textbf{0.93} & \textbf{0.888} & \textbf{0.963} & \textbf{0.753} & \textbf{0.67} & \textbf{0.66} & \textbf{0.62} & \textbf{0.783} \\
\cmidrule(lr){1-10} 
\multirow{3}{*}{\textbf{Qwen2-1.5b (IID)}} 
&  Full-FedZO & 0.9013 & 0.726 & 0.918 & 0.700 & 0.797 & \textbf{0.710} & 0.579 & 0.761 \\
& LoRA-FedZO & \textbf{0.935} & 0.752 & 0.925 & 0.686 & 0.794 & 0.673 & 0.606 & 0.767 \\
& \textsc{Meerkat} & 0.926 & \textbf{0.851} & \textbf{0.945} & \textbf{0.778} & \textbf{0.813} & 0.692 & \textbf{0.658} & \textbf{0.809} \\
\cmidrule(lr){2-10} 
\multirow{3}{*}{\textbf{Qwen2-1.5b (Non-IID)}} 
&  Full-FedZO & 0.844 & 0.725 & 0.937 & 0.688 & 0.769 & 0.663 & 0.565 & 0.741 \\
& LoRA-FedZO & \textbf{0.932} & 0.76 & \textbf{0.944} & 0.682 & 0.773 & 0.682 & 0.565 & 0.763 \\
& \textsc{Meerkat} & 0.924 & \textbf{0.866} & 0.94 & \textbf{0.762} & \textbf{0.822} & \textbf{0.692} & \textbf{0.661} & \textbf{0.809} \\
\cmidrule(lr){1-10} 
\multirow{3}{*}{\textbf{Gemma2-2b (IID)}} 
&  Full-FedZO & 0.934 & 0.84 & 0.953 & 0.774 & 0.542 & 0.644 & 0.606 & 0.756 \\
& LoRA-FedZO & 0.942 & 0.856 & 0.94 & 0.735 & 0.52 & 0.644 & 0.606 & 0.749 \\
& \textsc{Meerkat} & \textbf{0.952} & \textbf{0.871} & \textbf{0.971} & \textbf{0.837} & \textbf{0.8} & \textbf{0.663} & \textbf{0.639} & \textbf{0.819} \\
\cmidrule(lr){2-10} 
\multirow{3}{*}{\textbf{Gemma2-2b (Non-IID)}} 
&  Full-FedZO & 0.93 & 0.824 & 0.95 & 0.744 & 0.56 & 0.625 & 0.575 & 0.744 \\
& LoRA-FedZO & 0.9415 & 0.825 & 0.954 & 0.711 & 0.528 & 0.625 & 0.578 & 0.737 \\
& \textsc{Meerkat} & \textbf{0.942} & \textbf{0.853} & \textbf{0.97} & \textbf{0.807} & \textbf{0.751} & \textbf{0.653} & \textbf{0.645} &\textbf{0.803} \\
\bottomrule
\end{tabular}
}
\end{table*}

\begin{table*}[tbp]
\centering
\captionsetup{width=\textwidth, justification=centering}
\caption{Performance comparison of LoRA-FedZO, and \textsc{Meerkat} under 
synchronous updates with $local step = 1$, evaluated on Non-IID client data settings (\textbf{Dirichlet $\alpha=0.3$}) 
across LLaMA-3.2-1B, Qwen2-1.5b, and Gemma2-2b. We report test accuracy on SST-2, AgNews, 
Yelp, BoolQ, RTE, WSC, and WIC. Bold numbers indicate the highest value in each row.}
\label{tab:all_in_one_alpha0.3}
\vspace{5pt}
\resizebox{0.8\textwidth}{!}{
\begin{tabular}{ll cccccccc}
\toprule
\textbf{Model} & \textbf{Method} 
& \textbf{SST-2} & \textbf{AgNews} & \textbf{Yelp} & \textbf{BoolQ} & \textbf{RTE} & \textbf{WSC} & \textbf{WIC} & \textbf{Acc} \\
\midrule
\multirow{3}{*}{\textbf{LLaMA-3.2-1B (Non-IID)}} 
& Full-FedZO & 0.891 & 0.759 & 0.94 & 0.623 & 0.528 & 0.644 & 0.551 & 0.705 \\
& LoRA-FedZO & 0.915 & 0.866 & 0.952 & 0.646 & 0.586 & 0.653 & 0.554 & 0.739 \\
& \textsc{Meerkat} & \textbf{0.918} & \textbf{0.843} & \textbf{0.97} & \textbf{0.761} & \textbf{0.626} & \textbf{0.653} & \textbf{0.609} & \textbf{0.769} \\
\cmidrule(lr){1-10} 
\multirow{3}{*}{\textbf{Qwen2-1.5b (Non-IID)}} 
& Full-FedZO & 0.52 & 0.347 & 0.45 & 0.62 & 0.532 & 0.632 & 0.51 & 0.516 \\
& LoRA-FedZO & 0.855 & 0.732 & 0.907 & 0.674 & 0.72 & 0.634 & 0.603 & 0.732 \\
& \textsc{Meerkat} & \textbf{0.91} & \textbf{0.809} & \textbf{0.954} & \textbf{0.772} & \textbf{0.822} & \textbf{0.682} & \textbf{0.661} & \textbf{0.801} \\
\cmidrule(lr){1-10} 
\multirow{3}{*}{\textbf{Gemma2-2b (Non-IID)}} 
& Full-FedZO & 0.881 & 0.761 & 0.94 & 0.688 & 0.552 & 0.613 & 0.603 & 0.720 \\
& LoRA-FedZO & 0.922 & 0.826 & 0.921 & 0.681 & 0.52 & 0.625 & 0.606 & 0.729 \\
& \textsc{Meerkat} & \textbf{0.942} & \textbf{0.873} & \textbf{0.97} & \textbf{0.806} & \textbf{0.688} & \textbf{0.634} & \textbf{0.615} &\textbf{0.79} \\
\bottomrule
\end{tabular}
} 
\end{table*}

\begin{table*}[tbp]
\centering
\captionsetup{width=\textwidth, justification=centering}
\caption{Performance comparison of LoRA-FedZO, and \textsc{Meerkat} under 
synchronous updates with $local step = 1$, evaluated on Non-IID client data settings (\textbf{Dirichlet $\alpha=0.1$}) 
across LLaMA-3.2-1B, Qwen2-1.5b, and Gemma2-2b. We report test accuracy on SST-2, AgNews, 
Yelp, BoolQ, RTE, WSC, and WIC. Bold numbers indicate the highest value in each row.}
\label{tab:all_in_one_alpha0.1}
\vspace{5pt}
\resizebox{0.8\textwidth}{!}{
\begin{tabular}{ll cccccccc}
\toprule
\textbf{Model} & \textbf{Method} 
& \textbf{SST-2} & \textbf{AgNews} & \textbf{Yelp} & \textbf{BoolQ} & \textbf{RTE} & \textbf{WSC} & \textbf{WIC} & \textbf{Acc} \\
\midrule
\multirow{3}{*}{\textbf{LLaMA-3.2-1B (Non-IID)}} 
& Full-FedZO & 0.891 & 0.754 & 0.933 & 0.626 & 0.522 & 0.365 & 0.512 & 0.658 \\
& LoRA-FedZO & 0.902 & 0.845 & 0.942 & 0.643 & 0.533 & 0.365 & 0.559 & 0.684 \\
& \textsc{Meerkat} & \textbf{0.92} & \textbf{0.794} & \textbf{0.965} & \textbf{0.745} & \textbf{0.582} & \textbf{0.644} & \textbf{0.603} & \textbf{0.750} \\
\cmidrule(lr){1-10} 
\multirow{3}{*}{\textbf{Qwen2-1.5b (Non-IID)}} 
& Full-FedZO & 0.49 & 0.247 & 0.44 & 0.62 & 0.528 & 0.634 & 0.5 & 0.494 \\
& LoRA-FedZO & 0.848 & 0.735 & 0.92 & 0.67 & 0.746 & 0.548 & 0.601 & 0.724 \\
& \textsc{Meerkat} & \textbf{0.889} & \textbf{0.78} & \textbf{0.944} & \textbf{0.732} & \textbf{0.822} & \textbf{0.634} & \textbf{0.637} & \textbf{0.777} \\
\cmidrule(lr){1-10} 
\multirow{3}{*}{\textbf{Gemma2-2b (Non-IID)}} 
& Full-FedZO & 0.879 & 0.741 & 0.937 & 0.681 & 0.48 & 0.634 & 0.601 & 0.708 \\
& LoRA-FedZO & 0.91 & 0.78 & 0.914 & 0.682 & 0.551 & 0.567 & 0.608 & 0.716 \\
& \textsc{Meerkat} & \textbf{0.944} & \textbf{0.866} & \textbf{0.971} & \textbf{0.805} & \textbf{0.728} & \textbf{0.605} & \textbf{0.628} &\textbf{0.792} \\
\bottomrule
\end{tabular}
} 
\end{table*}

\begin{figure*}[htbp]
    \centering
    \begin{subfigure}[b]{0.9\textwidth}
        \centering
        \includegraphics[width=\linewidth]{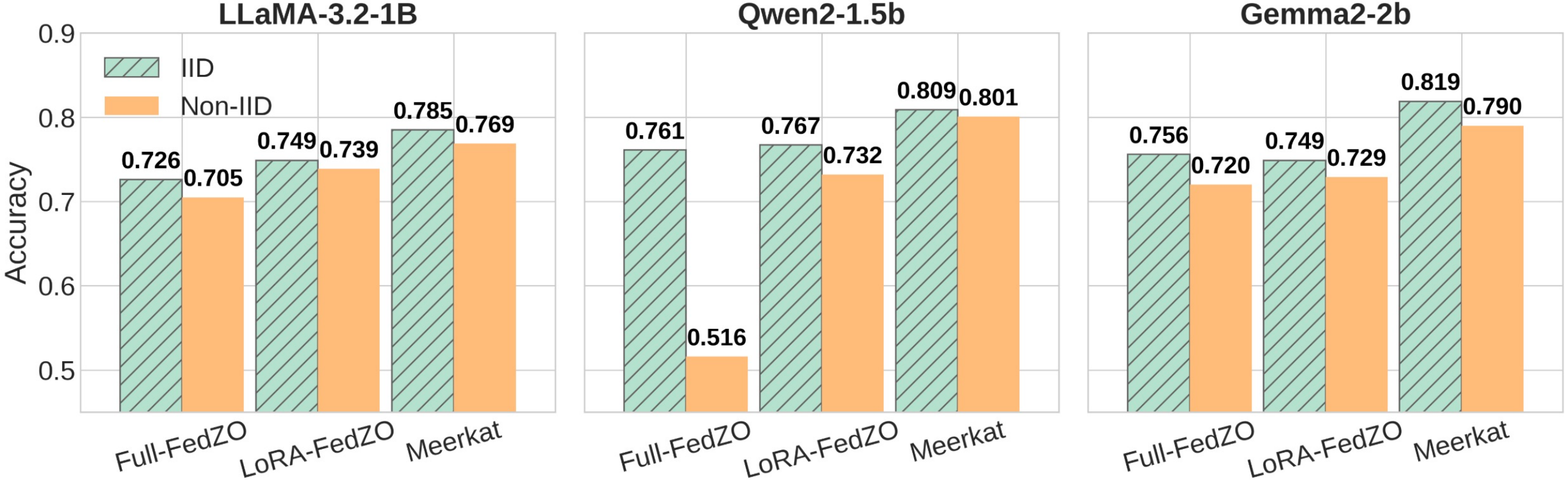}
        \caption{This figure compares three methods—Full-FedZO, LoRA-FedZO, and \textsc{Meerkat}—on three LLMs: LLaMA-3.2-1B, Qwen2-1.5b, and Gemma2-2b. The x-axis shows the different methods, and each method has two bars indicating performance under IID and Non-IID settings. The Non-IID results are obtained under a Dirichlet $\alpha = 0.3$ .The y-axis represents the average test accuracy across multiple downstream tasks—SST2, AgNews, Yelp, BoolQ, RTE, WSC, and WiC.}
        \label{fig:high_freq_alpha03}
    \end{subfigure}

    \vspace{0.2cm}
    
    \begin{subfigure}[b]{0.9\textwidth}
        \centering
        \includegraphics[width=\linewidth]{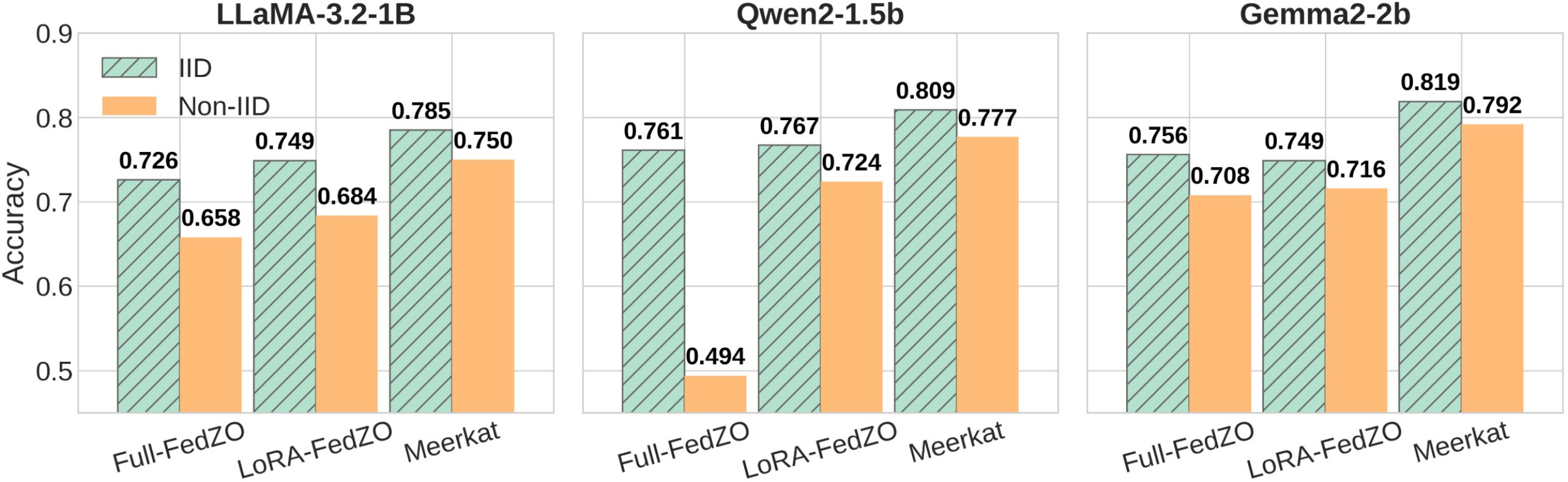}
        \caption{This figure compares three methods—Full-FedZO, LoRA-FedZO, and \textsc{Meerkat}—on three LLMs: LLaMA-3.2-1B, Qwen2-1.5b, and Gemma2-2b. The x-axis shows the different methods, and each method has two bars indicating performance under IID and Non-IID settings. The Non-IID results are obtained under a Dirichlet $\alpha = 0.1$ .The y-axis represents the average test accuracy across multiple downstream tasks—SST2, AgNews, Yelp, BoolQ, RTE, WSC, and WiC.}
        \label{fig:high_freq_alpha01}
    \end{subfigure}
    
    \vspace{0.5cm}
    \caption{Comparison of Full-FedZO, LoRA-FedZO, and \textsc{Meerkat} on LLaMA-3.2-1B, Qwen2-1.5b, and Gemma2-2b under IID and Non-IID settings with varying Dirichlet $\alpha$. Subfigure(a) presents results for Non-IID data generated with $\alpha = 0.3$, while Subfigure(b) shows results for Non-IID data with $\alpha = 0.1$.}
    \label{fig:high_freq_alpha03_01}
\end{figure*}

\begin{table*}[tbp]
\centering
\captionsetup{width=\textwidth, justification=centering}
\caption{Test accuracy of \textsc{Meerkat} versus DecomFL on Qwen2-1.5b with a single local step under Non-IID data settings (Dirichlet $\alpha=1$). Results are shown for SST-2, BoolQ, RTE, and WSC; bold indicates the best score in each row. Experiments use 8 clients in total, with 2 clients participating in each round, following the DecomFL configuration.}
\label{tab:meerkat_vs_decomfl}
\vspace{5pt}
\resizebox{0.6\textwidth}{!}{
\begin{tabular}{ll ccccc}
\toprule
\textbf{Model} & \textbf{Method} 
& \textbf{SST-2} & \textbf{BoolQ} & \textbf{RTE} & \textbf{WSC} \\
\midrule 
\multirow{2}{*}{\textbf{Qwen2-1.5b}} 
& DecomFL & 0.868 & 0.674 & 0.773 & 0.653 \\
& \textsc{Meerkat} & \textbf{0.918} & \textbf{0.734} & \textbf{0.817} & \textbf{0.682} \\
\bottomrule
\end{tabular}
} 
\end{table*}

\begin{table*}[tbp]
\centering
\captionsetup{width=\textwidth, justification=centering}
\caption{Performance comparison of Task-Mask, and \textsc{Meerkat} under 
synchronous updates with $local step = 1$, evaluated on IID client data settings
across LLaMA-3.2-1B, Qwen2-1.5b, and Gemma2-2b. We report test accuracy on SST-2, AgNews, 
Yelp, BoolQ, RTE, WSC, and WIC. Bold numbers indicate the highest value in each row.}
\label{tab:c4_vs_taskdata_IId}
\vspace{5pt}
\resizebox{0.8\textwidth}{!}{
\begin{tabular}{ll ccccccc}
\toprule
\textbf{Model} & \textbf{Method} 
& \textbf{SST-2} & \textbf{AgNews} & \textbf{Yelp} & \textbf{BoolQ} & \textbf{RTE} & \textbf{WSC} & \textbf{WIC} \\
\midrule
\cmidrule(lr){1-9} 
\multirow{2}{*}{\textbf{LLaMA-3.2-1B (IID)}} 
& Task & \textbf{0.910} & 0.847 & 0.957 & 0.718 & 0.661 & 0.644 & \textbf{0.661} \\
& \textsc{Meerkat} & 0.90 & \textbf{0.885} & \textbf{0.971} & \textbf{0.773} & \textbf{0.702} & \textbf{0.653} & 0.614  \\
\cmidrule(lr){1-9} 
\multirow{2}{*}{\textbf{Qwen2-1.5b (IID)}} 
& Task & 0.936 & 0.827 & 0.954 & 0.765 & 0.83 & \textbf{0.711} & \textbf{0.664}  \\
& \textsc{Meerkat} & 0.926 & \textbf{0.851} & 0.945 & \textbf{0.778} & \textbf{0.813} & 0.692 & 0.658  \\
\cmidrule(lr){1-9} 
\multirow{2}{*}{\textbf{Gemma2-2b (IID)}} 
& Task & 0.942 & 0.868 & \textbf{0.972} & 0.78 & 0.728 & 0.644 & 0.6 \\
& \textsc{Meerkat} & \textbf{0.952} & \textbf{0.871} & 0.971 & \textbf{0.837} & \textbf{0.8} & \textbf{0.663} & \textbf{0.639}  \\
\bottomrule
\end{tabular}
} 
\end{table*}

\begin{table*}[tbp]
\centering
\captionsetup{width=\textwidth, justification=centering}
\caption{Performance comparison of Task-Mask, and \textsc{Meerkat} under 
synchronous updates with $local step = 1$, evaluated on Non-IID client data settings (\textbf{Dirichlet $\alpha=0.5$}) 
across LLaMA-3.2-1B, Qwen2-1.5b, and Gemma2-2b. We report test accuracy on SST-2, AgNews, 
Yelp, BoolQ, RTE, WSC, and WIC. Bold numbers indicate the highest value in each row.}
\label{tab:c4_vs_taskdata}
\vspace{5pt}
\resizebox{0.8\textwidth}{!}{
\begin{tabular}{ll ccccccc}
\toprule
\textbf{Model} & \textbf{Method} 
& \textbf{SST-2} & \textbf{AgNews} & \textbf{Yelp} & \textbf{BoolQ} & \textbf{RTE} & \textbf{WSC} & \textbf{WIC} \\
\midrule
\cmidrule(lr){1-9} 
\multirow{2}{*}{\textbf{LLaMA-3.2-1B (Non-IID)}} 
& Task & 0.904 & 0.874 & 0.956 & 0.744 & 0.591 & 0.615 & 0.622 \\
& \textsc{Meerkat} & \textbf{0.93} & \textbf{0.888} & \textbf{0.963} & \textbf{0.753} & \textbf{0.62} & \textbf{0.66} & \textbf{0.62}  \\
\cmidrule(lr){1-9} 
\multirow{2}{*}{\textbf{Qwen2-1.5b (Non-IID)}} 
& Task & 0.938 & 0.863 & 0.956 & 0.779 & 0.817 & 0.692 & 0.65  \\
& \textsc{Meerkat} & \textbf{0.924} & \textbf{0.866} & \textbf{0.94} & \textbf{0.762} & \textbf{0.822} & \textbf{0.692} & \textbf{0.661}  \\
\cmidrule(lr){1-9} 
\multirow{2}{*}{\textbf{Gemma2-2b (Non-IID)}} 
& Task & 0.91 & 0.834 & 0.966 & 0.822 & 0.72 & 0.644 & 0.578 \\
& \textsc{Meerkat} & \textbf{0.942} & \textbf{0.853} & \textbf{0.97} & \textbf{0.807} & \textbf{0.751} & \textbf{0.653} & \textbf{0.645}  \\
\bottomrule
\end{tabular}
} 
\end{table*}

\begin{table*}[tbp]
\centering
\captionsetup{width=\textwidth, justification=centering}
\caption{Test accuracy of \textsc{Meerkat} versus Task-Mask on Qwen2-1.5b with a 10 local step under Non-IID data settings (Dirichlet $\alpha=0.5$). Results are shown for SST-2, BoolQ, RTE, and WSC; bold indicates the best score in each row. Experiments use 8 clients in total, with 2 clients participating in each round, following the DecomFL configuration.}
\label{tab:meerkat_vs_taskdata_multistep}
\vspace{5pt}
\resizebox{0.6\textwidth}{!}{
\begin{tabular}{ll ccccc}
\toprule
\textbf{Model} & \textbf{Method} 
& \textbf{SST-2} & \textbf{BoolQ} & \textbf{RTE} & \textbf{WSC} \\
\midrule 
\multirow{2}{*}{\textbf{Qwen2-1.5b}} 
& Task & 0.932 & 0.784 & 0.823 & 0.681 \\
& \textsc{Meerkat} & \textbf{0.944} & \textbf{0.752} & \textbf{0.813} & \textbf{0.682} \\
\bottomrule
\end{tabular}
} 
\end{table*}

\begin{figure*}[!htbp]
    \centering

    \begin{subfigure}[b]{0.39\textwidth}
        \centering
        \includegraphics[width=\linewidth]{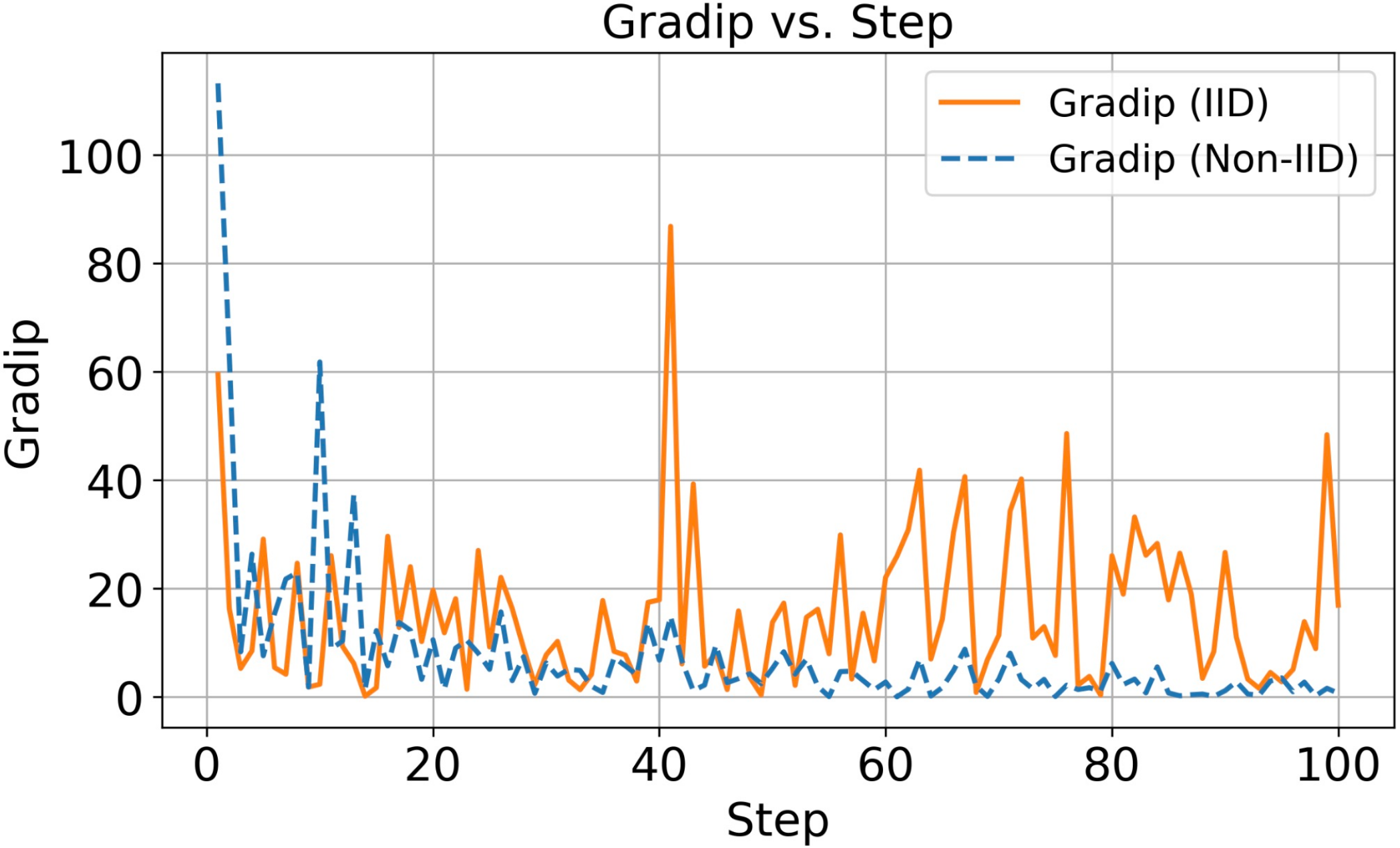}
        \caption{The GradIP measured for IID and Non-IID clients data under the WIC task using the Llama-3.2-1B model.}
        \label{fig:ip_llama_wic}
    \end{subfigure}
    \hfill
    \begin{subfigure}[b]{0.39\textwidth}
        \centering
        \includegraphics[width=\linewidth]{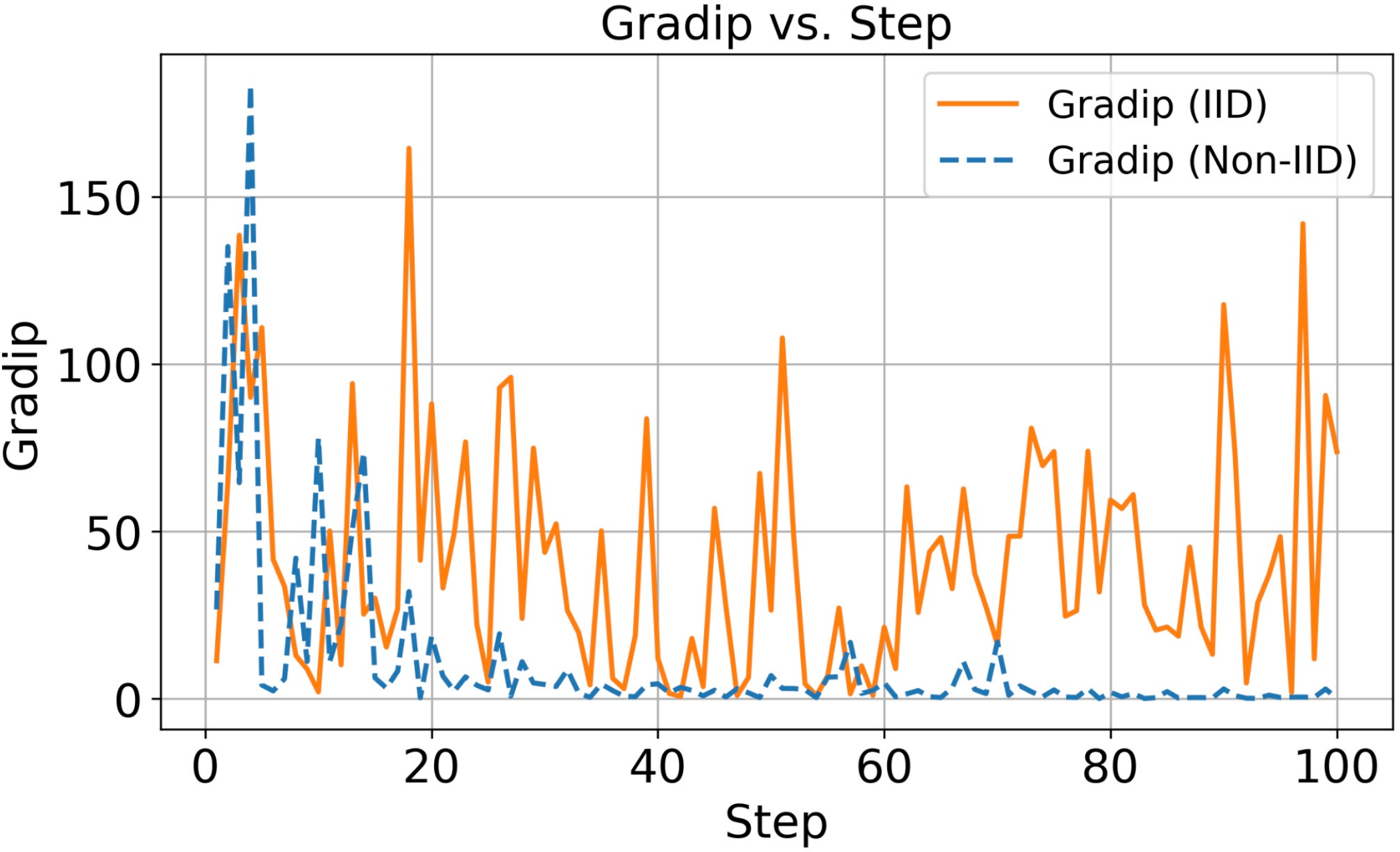}
        \caption{The GradIP measured for IID and Non-IID clients data under the AgNews task using the Llama-3.2-1B model.}
        \label{fig:ip_llama_agnews}
    \end{subfigure}

    \begin{subfigure}[b]{0.39\textwidth}
        \centering
        \includegraphics[width=\linewidth]{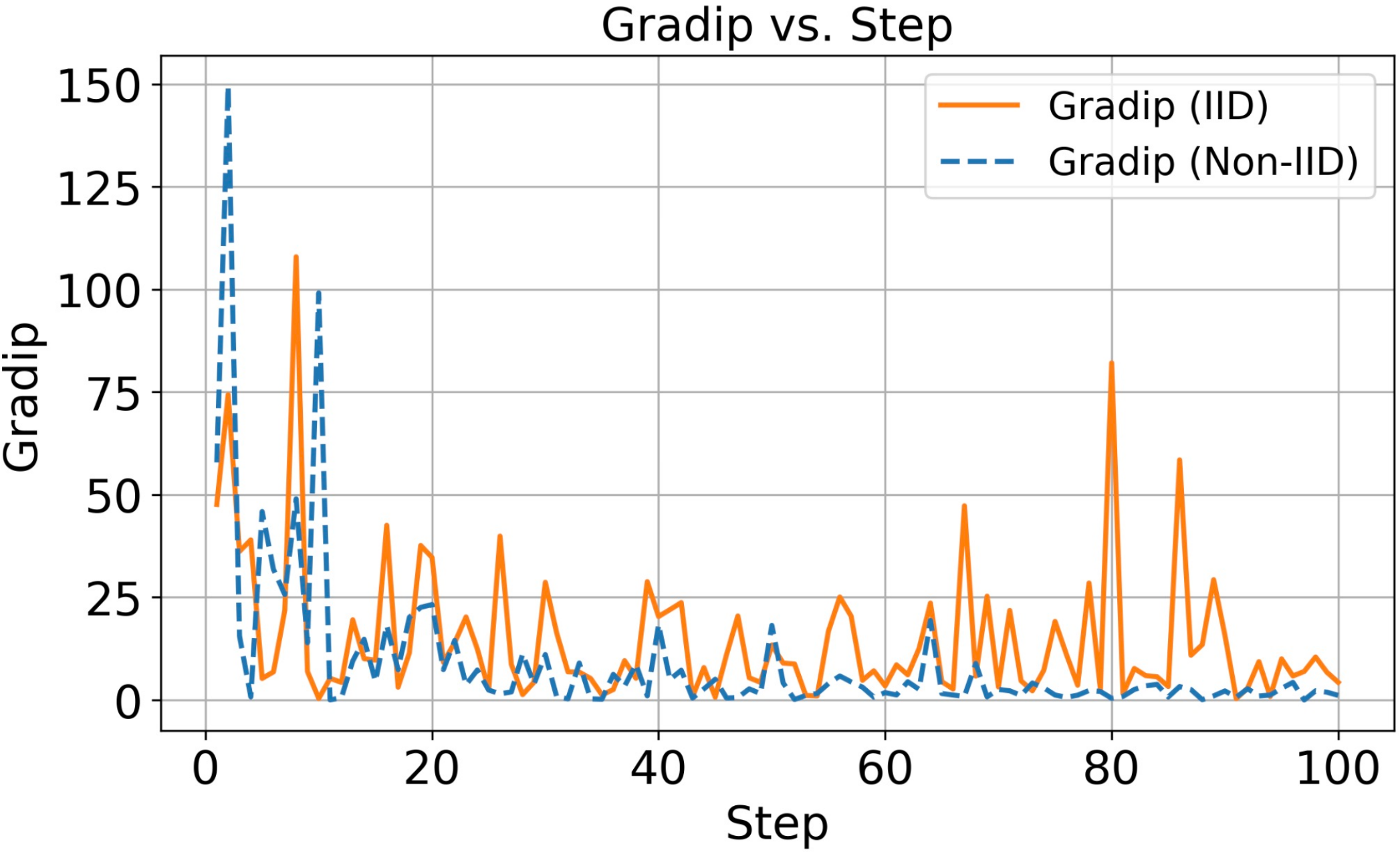}
        \caption{The GradIP measured for IID and Non-IID clients data under the Yelp task using the Llama-3.2-1B model.}
        \label{fig:ip_llama_yelp}
    \end{subfigure}
    \hfill
    \begin{subfigure}[b]{0.39\textwidth}
        \centering
        \includegraphics[width=\linewidth]{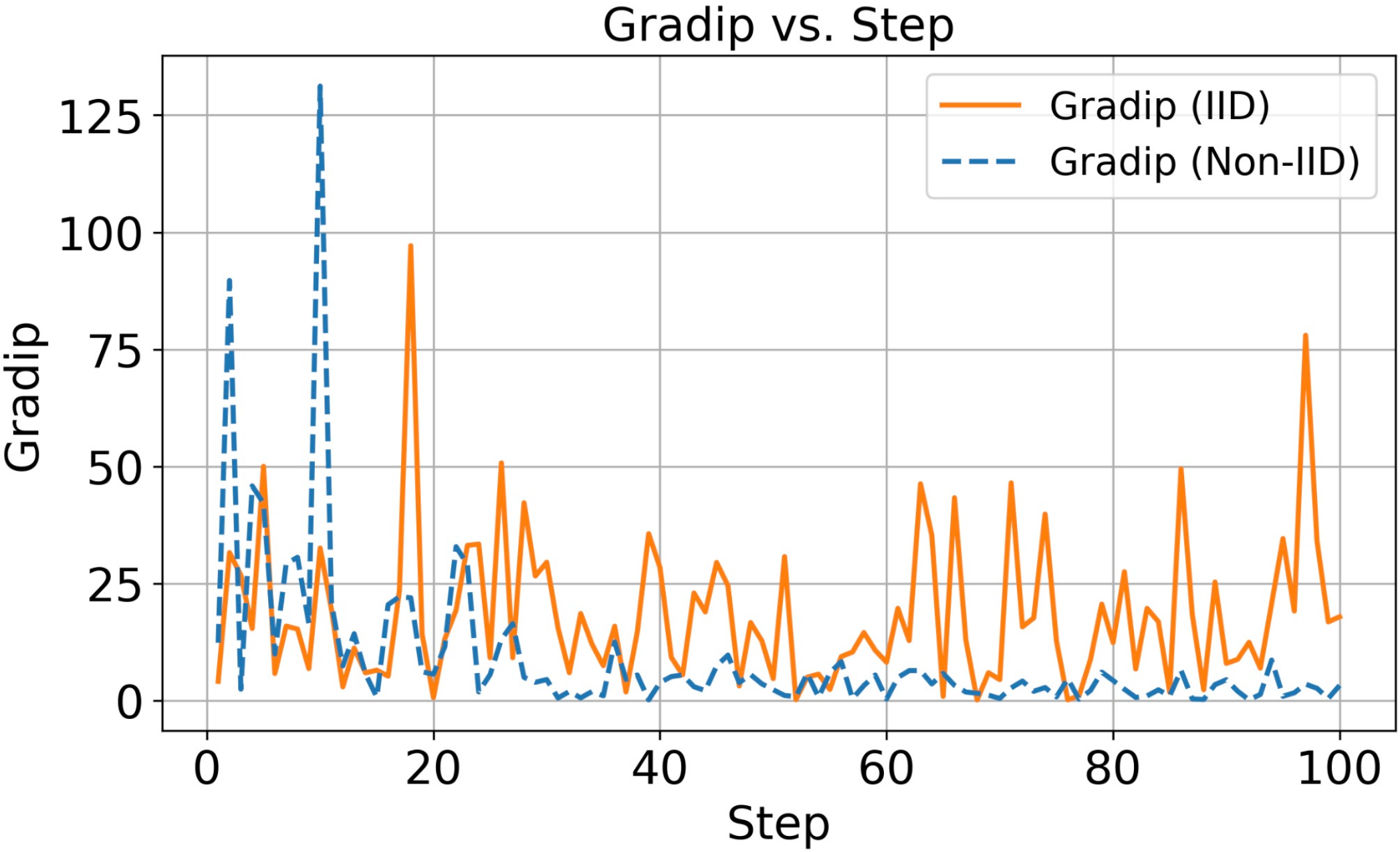}
        \caption{The GradIP measured for IID and Non-IID clients data under the BoolQ task using the Llama-3.2-1B model.}
        \label{fig:subfigD}
    \end{subfigure}

    \begin{subfigure}[b]{0.39\textwidth}
        \centering
        \includegraphics[width=\linewidth]{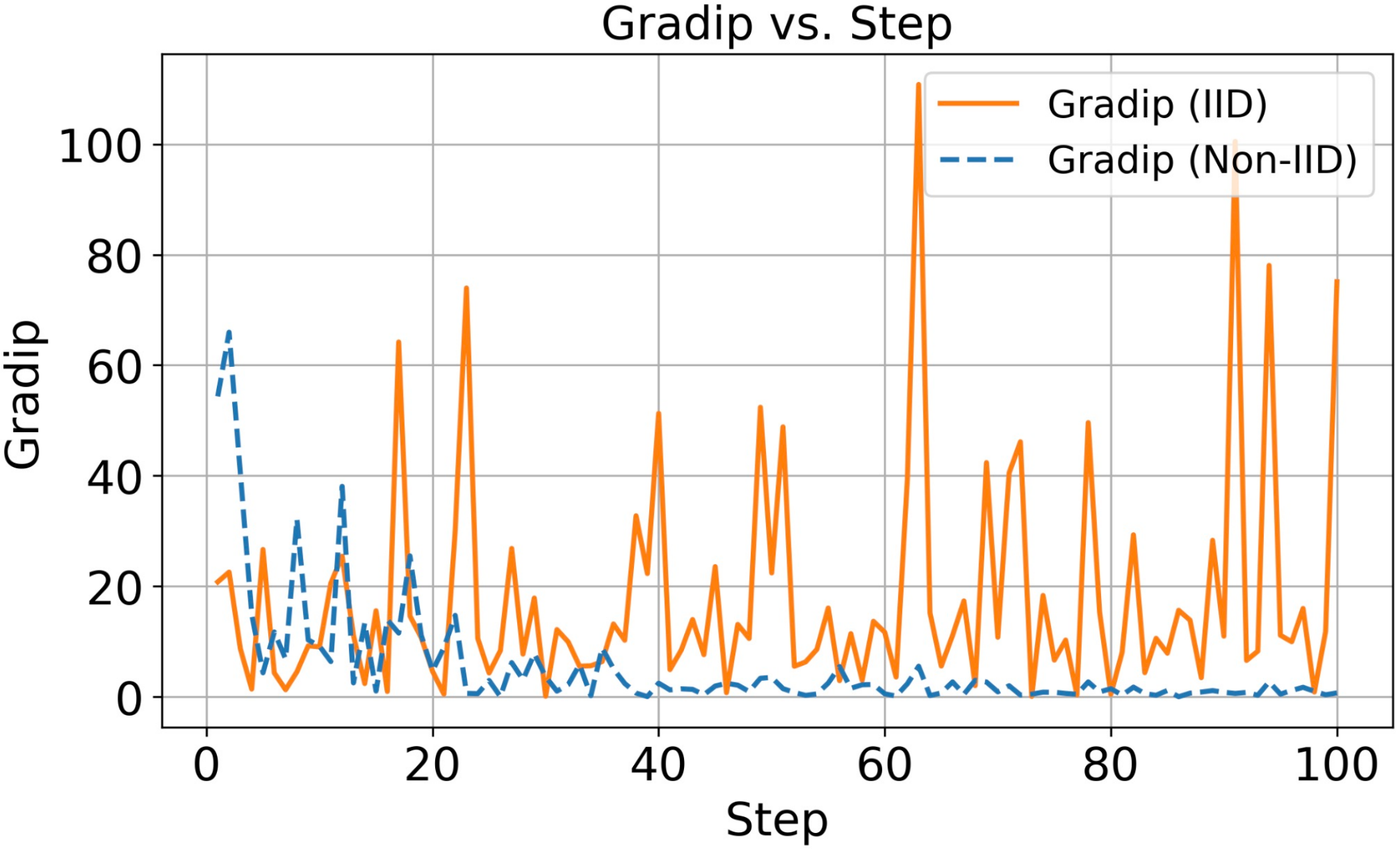}
        \caption{The GradIP measured for IID and Non-IID clients data under the RTE task using the Llama-3.2-1B model.}
        \label{fig:ip_llama_rte}
    \end{subfigure}
    \hfill
    \begin{subfigure}[b]{0.39\textwidth}
        \centering
        \includegraphics[width=\linewidth]{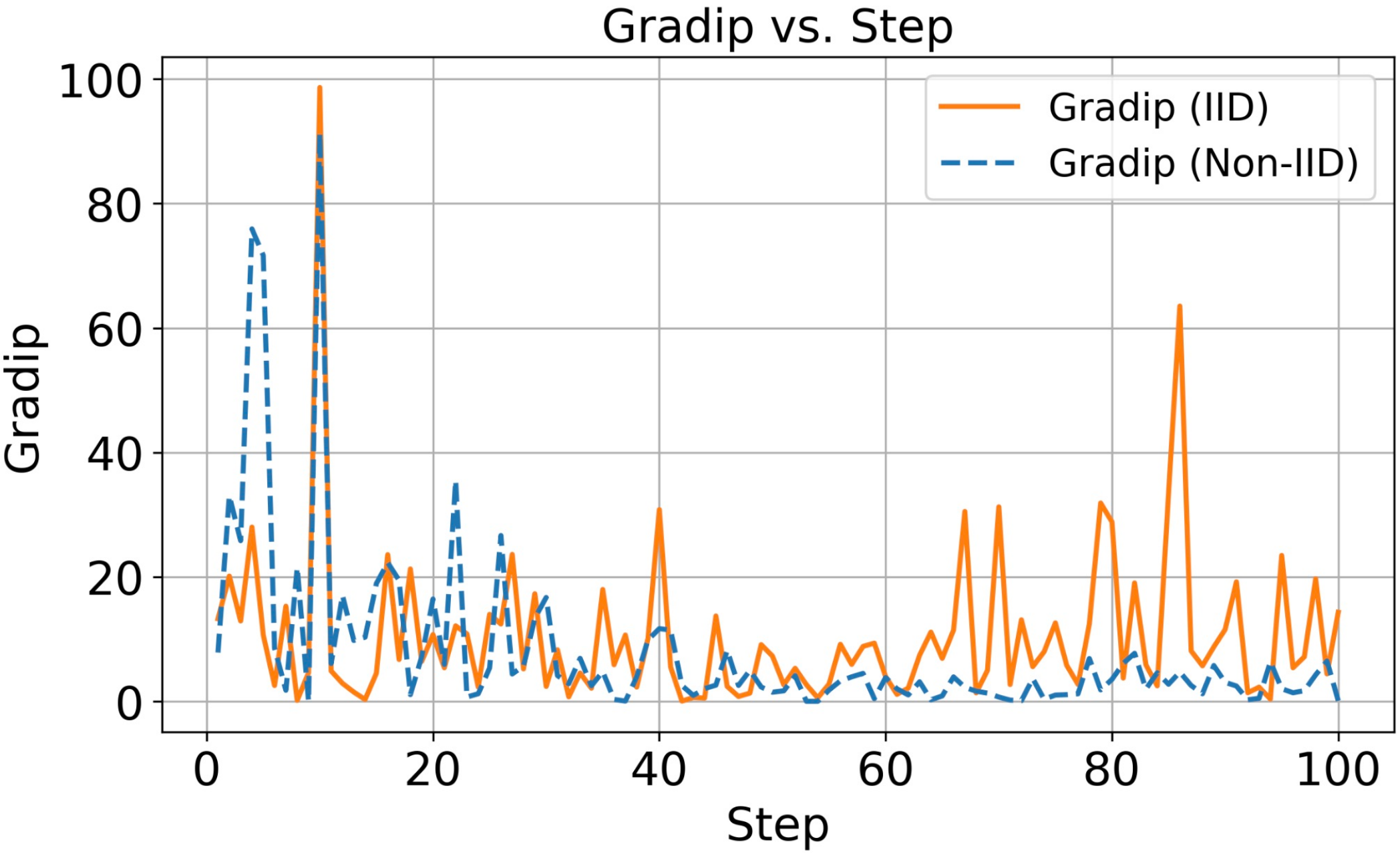}
        \caption{The GradIP measured for IID and Non-IID clients data under the WSC task using the Llama-3.2-1B model.}
        \label{fig:ip_llama_wsc}
    \end{subfigure}


    \begin{subfigure}[b]{0.39\textwidth}
        \centering
        \includegraphics[width=\linewidth]{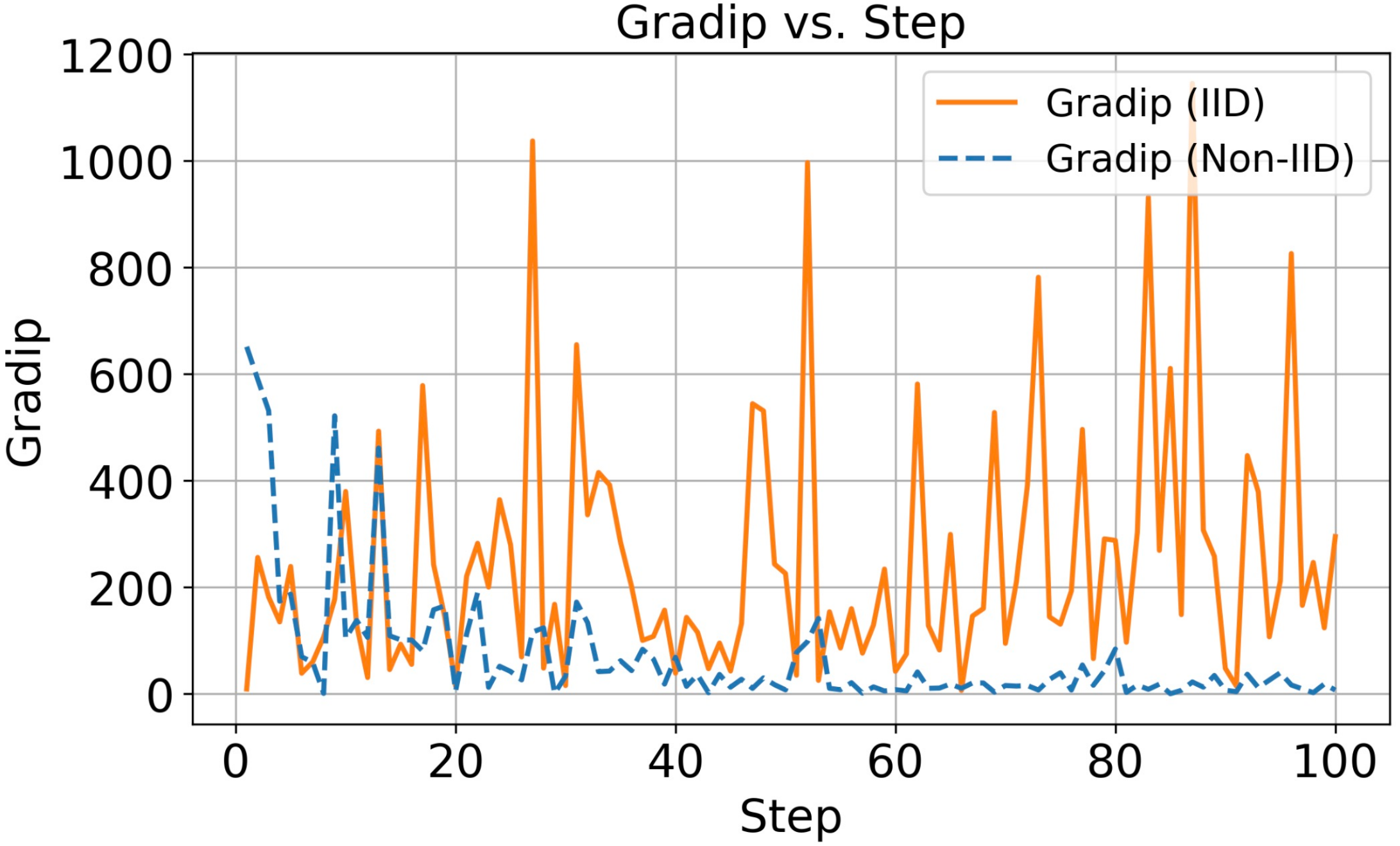}
        \caption{The GradIP measured for IID and Non-IID clients data under the BoolQ task using the Gemma-2-2b model.}
        \label{fig:ip_gemma_boolq}
    \end{subfigure}

    \caption{These figures show GradIP (Definition~\ref{def:gradip}) curves under IID and Non-IID settings, computed over 100 local training steps on six datasets (WSC, BoolQ, RTE, WIC, AgNews, Yelp) using the Llama-3.2-1B model with density level $5\times10^{-3}$. An extra BoolQ result is shown for the Gemma-2-2B model.
}
    \label{fig:gradip_main_fig}
\end{figure*}

\begin{figure*}[htbp]
    \centering

    \begin{subfigure}[b]{0.44\textwidth}
        \centering
        \includegraphics[width=\linewidth]{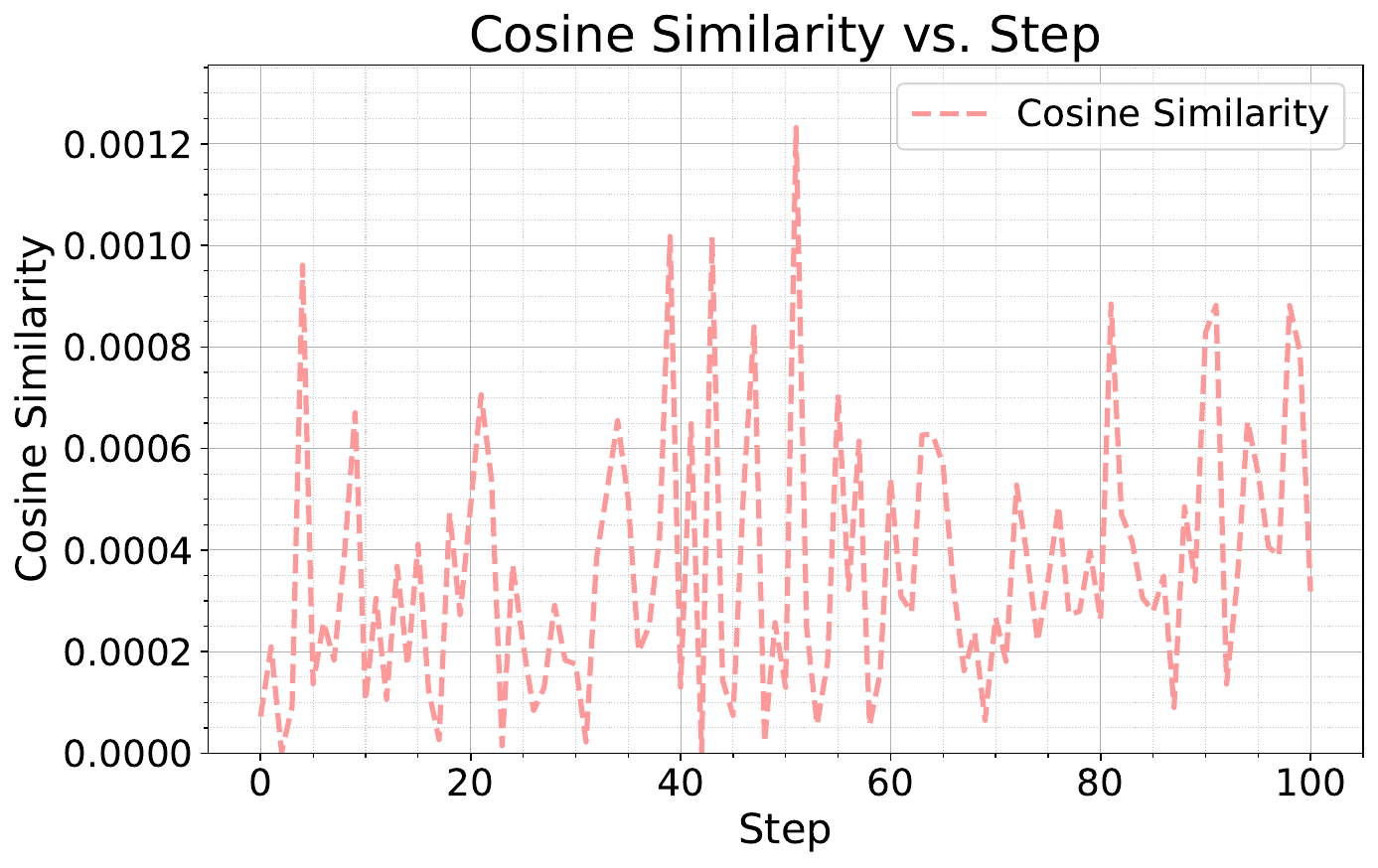}
        \caption{Cosine similarity between local ZO gradients and C4 pre-trained gradients.}

        \label{fig:cosine_sim_abs}
    \end{subfigure}
    \hfill
    \begin{subfigure}[b]{0.44\textwidth}
        \centering
        \includegraphics[width=\linewidth]{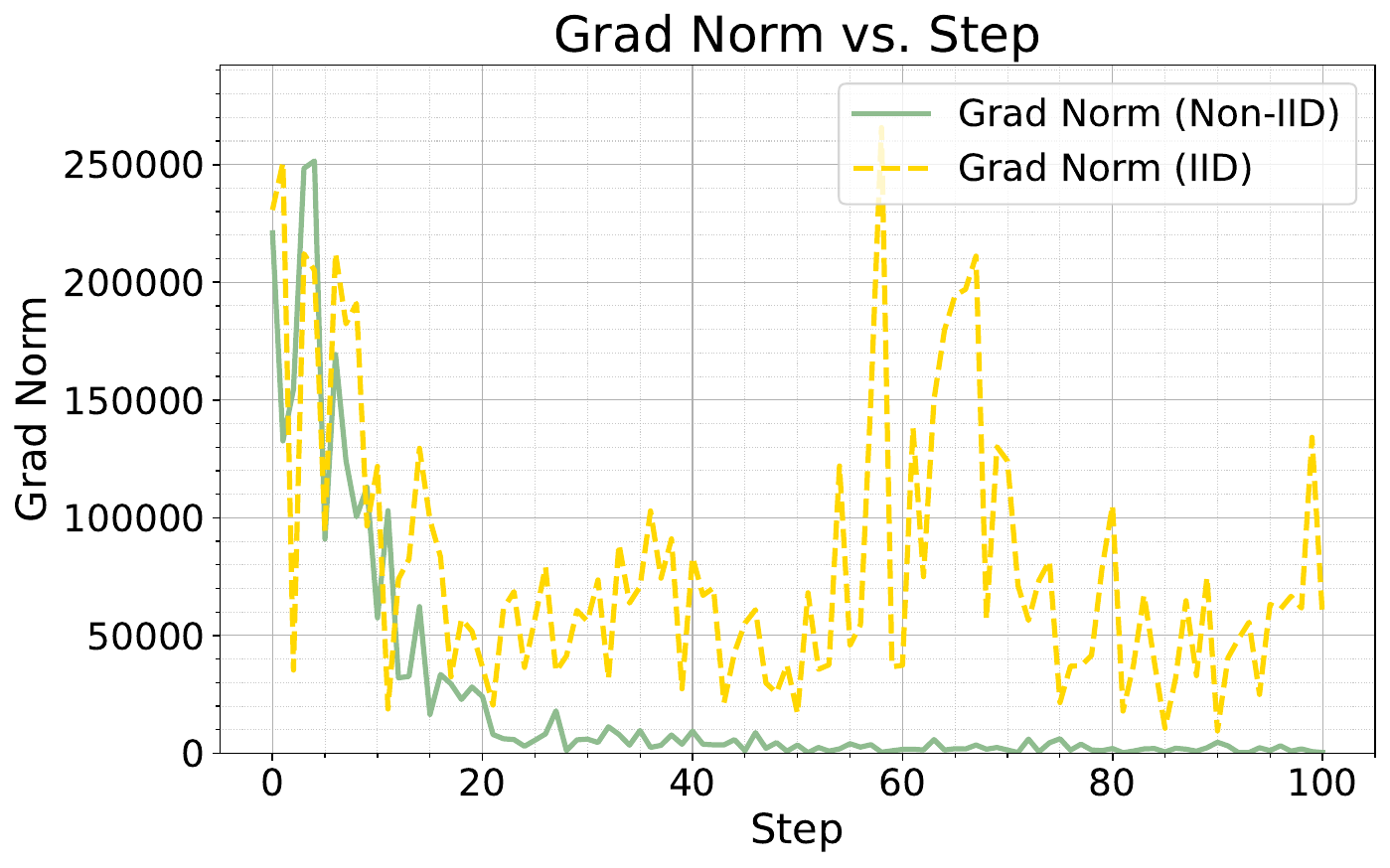}
        \caption{Gradient norm from local ZO training under Non-IID and IID data distribution.}
        \label{fig:gradient_norm_abs}
    \end{subfigure}

    \vspace{0.5cm}

    \caption{The left panel shows the cosine similarity between locally computed ZO gradients and gradients from the C4-pre-trained data, illustrating that the two gradient vectors remain nearly orthogonal throughout training. The right panel presents the norm of local ZO gradients over training steps, showing a consistent decay and convergence in magnitude under Non-IID and IID data distribution. These observations are obtained under density level of $5 \times 10^{-3}$.}
    \label{fig:general_phenomenon_of_gradient}
\end{figure*}

\begin{figure*}[htbp]
    \centering
    \begin{subfigure}[b]{0.44\textwidth}
        \centering
        \includegraphics[width=\linewidth]{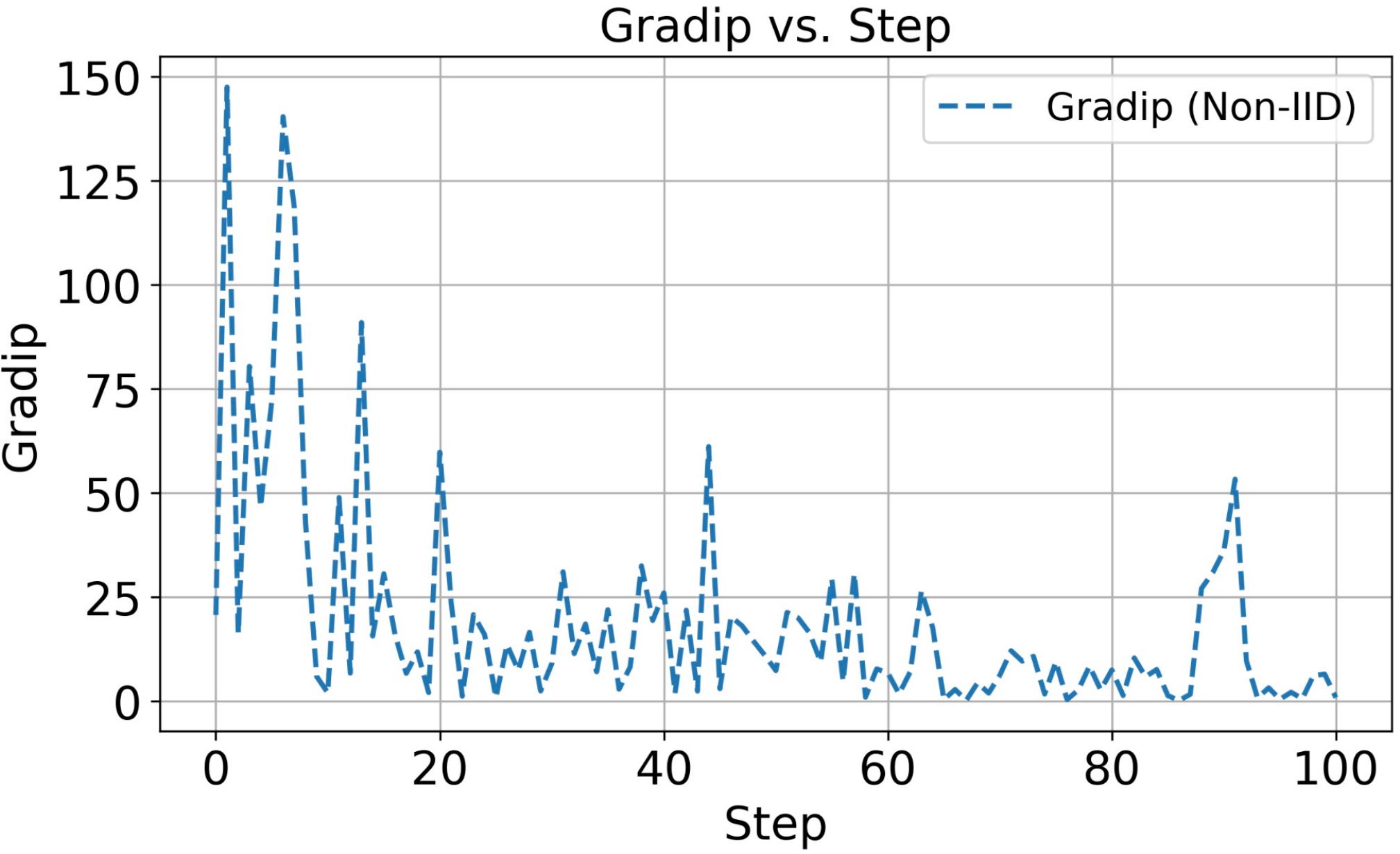}
        \caption{GradIP for Non-IID clients on the AgNews task, 
where the two classes have a highly imbalanced ratio (5 vs.\ 89 samples).}
        \label{fig:ip_llama_agnews_real}
    \end{subfigure}
    \hfill
    \begin{subfigure}[b]{0.44\textwidth}
        \centering
        \includegraphics[width=\linewidth]{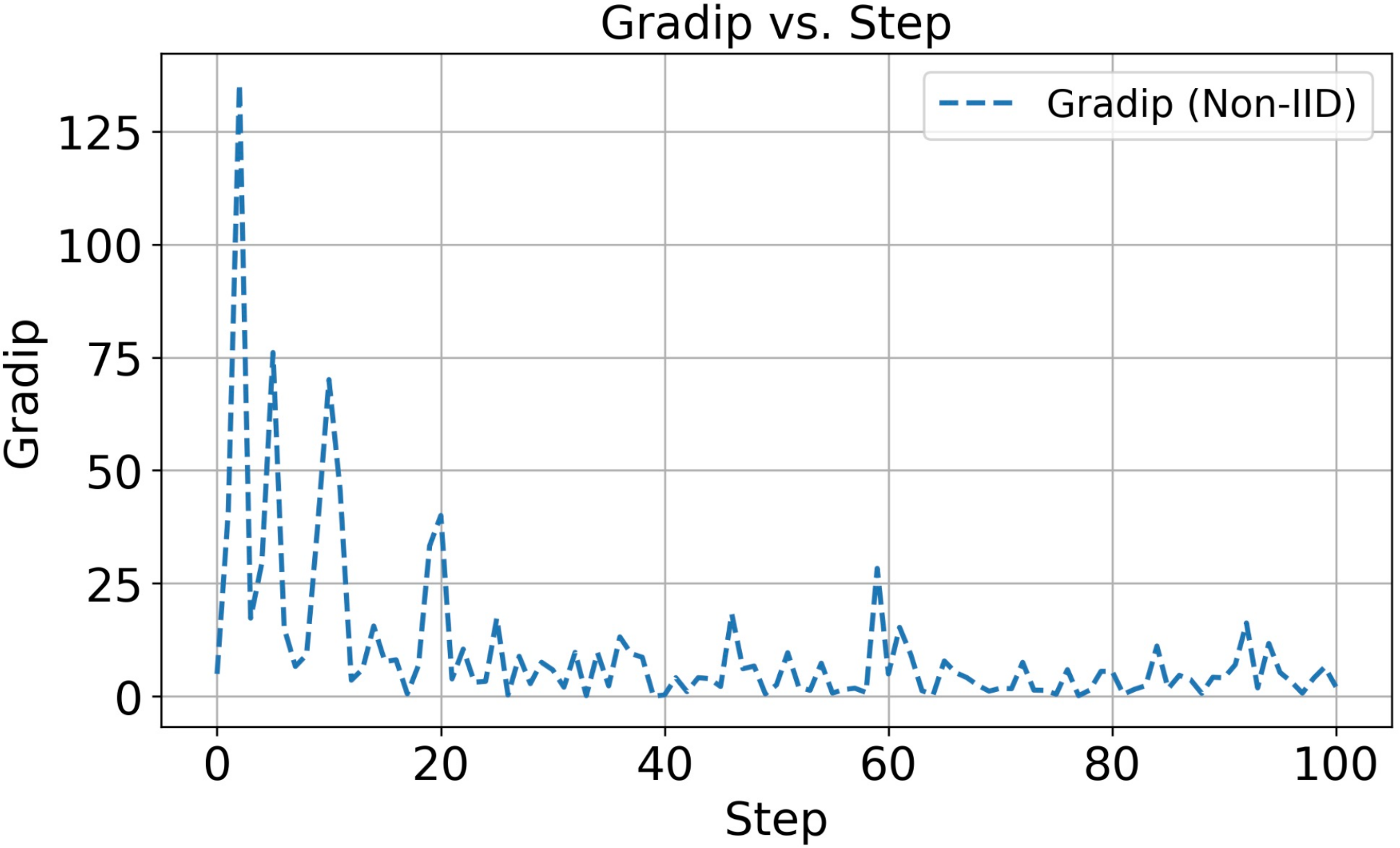}
        \caption{GradIP for Non-IID clients data on the BoolQ task, 
    where the two classes have a highly imbalanced ratio (6 vs.\ 190 samples).}
        \label{fig:ip_llama_boolq_real}
    \end{subfigure}
    \caption{These subfigures show GradIP (see Definition~\ref{def:gradip}) for LLaMA-3.2-1B under Non-IID client data 
with 100 local training steps. Subfigure~(a) uses AgNews (5 vs.\ 89), while Subfigure~(b) uses BoolQ (6 vs.\ 190).}
    \label{fig:gradip_real_noniid_agnews_boolq}
\end{figure*}

\begin{figure*}[htbp]
    \centering
    \begin{subfigure}[b]{0.72\textwidth}
        \centering
        \includegraphics[width=0.72\linewidth]{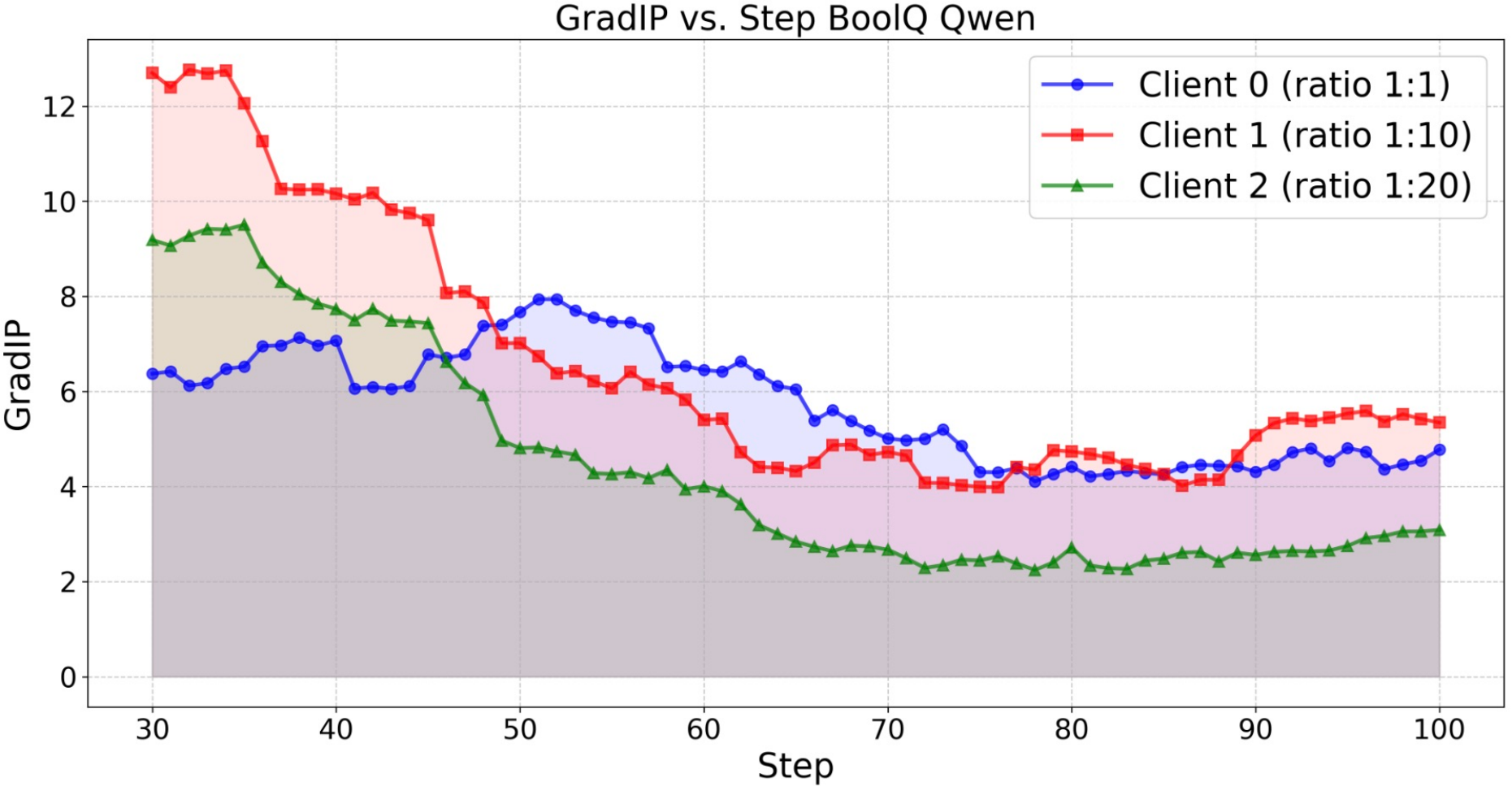}
        \caption{The experiments, conducted using the Qwen2-1.5B model on the BoolQ dataset, reveal that under Non-IID settings—especially with a 1:20 class imbalance—there is a pronounced decline in GradIP between the early and later stages of training. In the extreme Non-IID case, the GradIP values in the later stages tend to approach zero.}
        \label{fig:qwen_large_subfigure}
    \end{subfigure}
    
    \begin{subfigure}[b]{0.72\textwidth}
        \centering
        \includegraphics[width=0.72\linewidth]{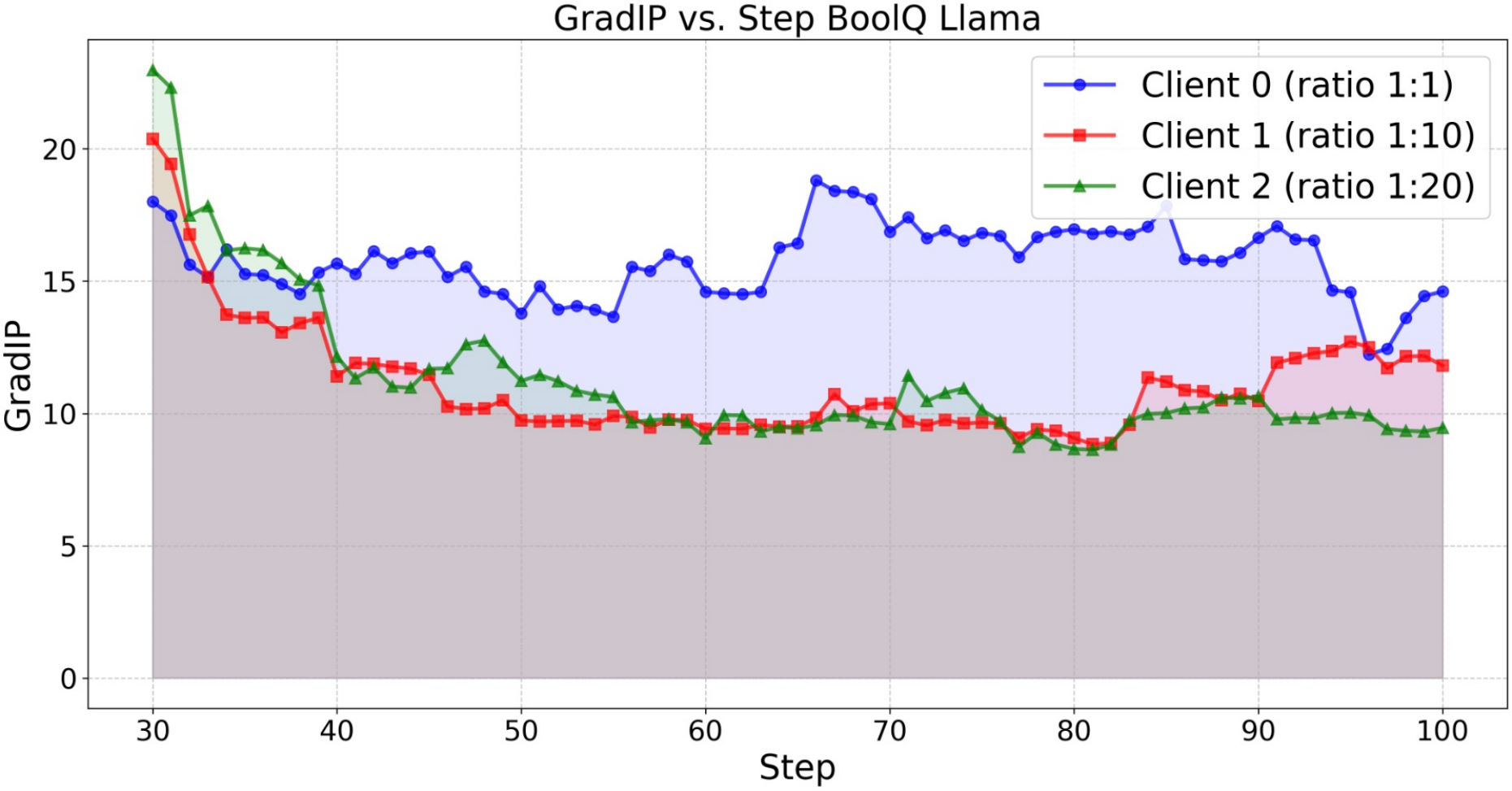}
        \caption{The experiments, conducted using the Llama-3.2-1B model on the BoolQ dataset, reveal that under Non-IID settings—especially with a 1:20 class imbalance—there is a pronounced decline in GradIP between the early and later stages of training. In the extreme Non-IID case, the GradIP values in the later stages tend to approach zero.}
        \label{fig:llama_large_subfigure}
    \end{subfigure}

    \begin{subfigure}[b]{0.72\textwidth}
        \centering
        \includegraphics[width=0.72\linewidth]{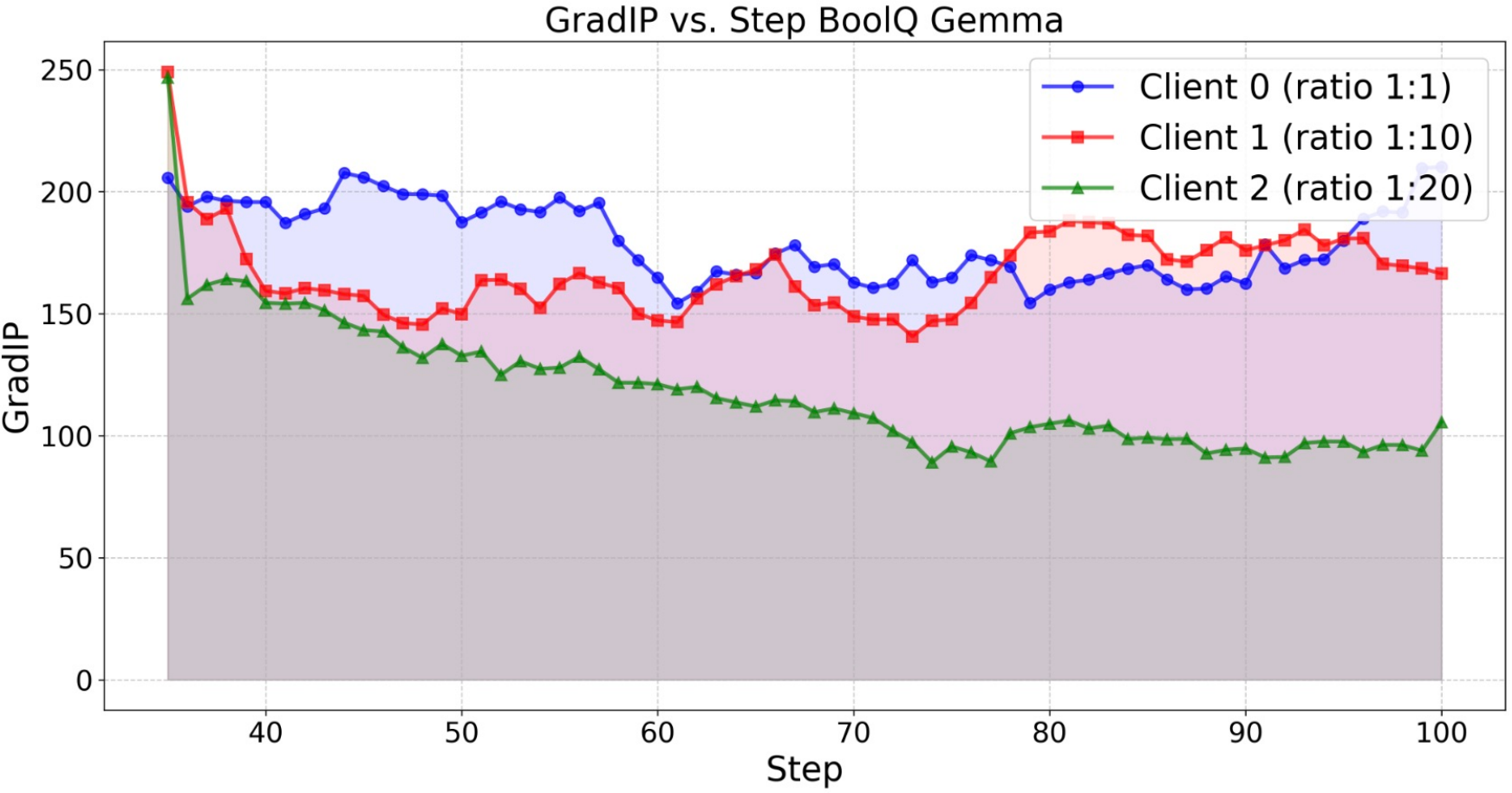}
        \caption{The experiments, conducted using the Gemma-2-2B model on the BoolQ dataset, reveal that under Non-IID settings—especially with a 1:20 class imbalance—there is a pronounced decline in GradIP between the early and later stages of training. In the extreme Non-IID case, the GradIP values in the later stages tend to approach zero.}
        \label{fig:gemma_large_subfigure}
    \end{subfigure}
    
    \caption{GradIP analysis for different models on the BoolQ dataset under Non-IID and IID conditions:
As the class imbalance ratio increases, GradIP in the later training stages tends to approach zero. This decline is more pronounced under Non-IID settings, where the gap between initial and final GradIP values is larger than in the IID case. All trends are visualized using a moving average for clarity.}
    \label{fig:model_comparison_gradip_ratiocompare_boolq}
\end{figure*}

\begin{figure*}[htbp]
    \centering
    \begin{subfigure}[b]{0.72\textwidth}
        \centering
        \includegraphics[width=0.72\linewidth]{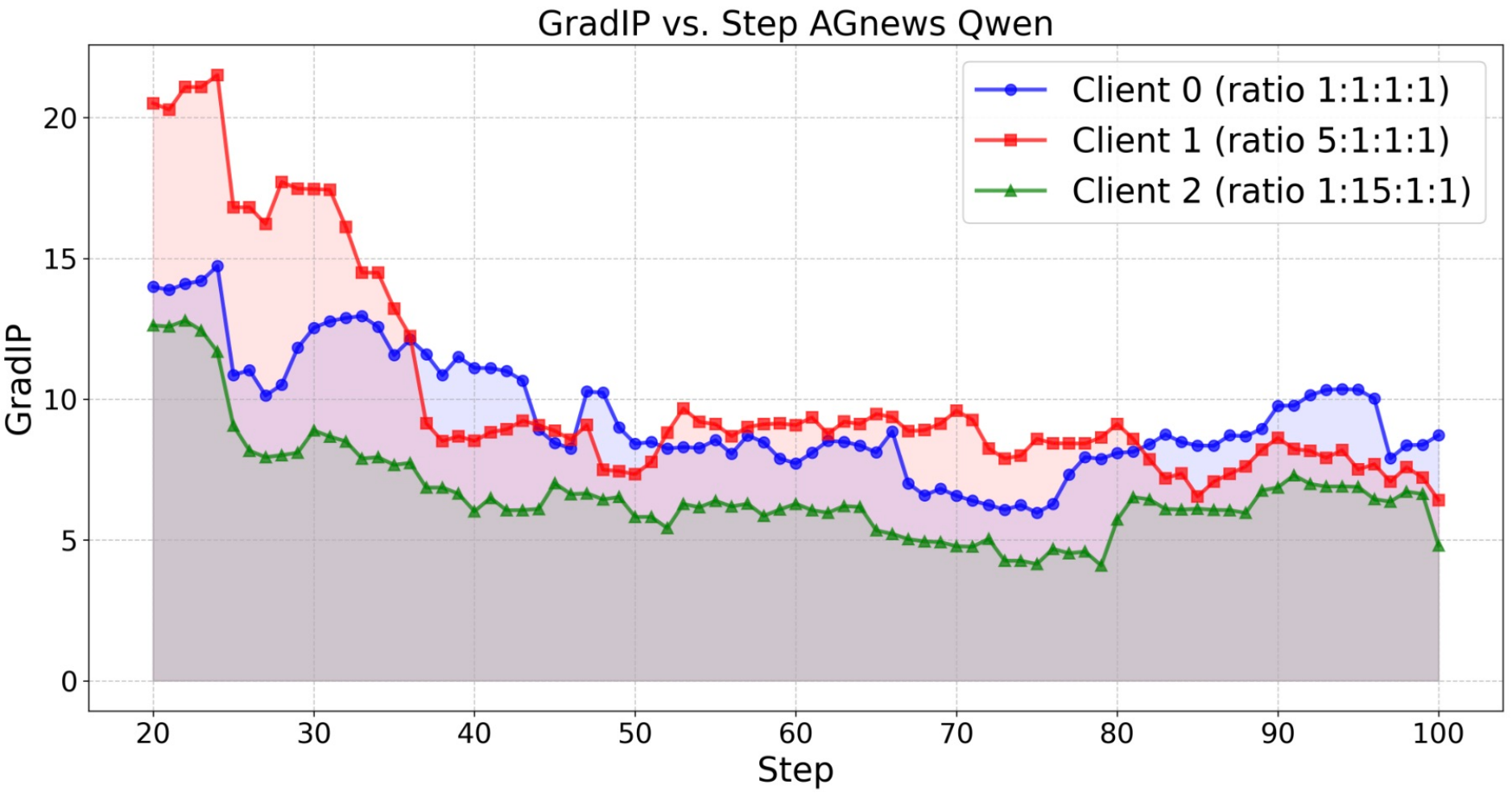}
        \caption{The experiments, conducted using the Qwen2-1.5B model on the AGNews dataset, reveal that under Non-IID settings—especially with a 1:15:1:1 class imbalance—there is a pronounced decline in GradIP between the early and later stages of training. In the extreme Non-IID case, the GradIP values in the later stages tend to approach zero.}
        \label{fig:qwen_large_subfigure_agnews}
    \end{subfigure}
    
    \begin{subfigure}[b]{0.72\textwidth}
        \centering
        \includegraphics[width=0.72\linewidth]{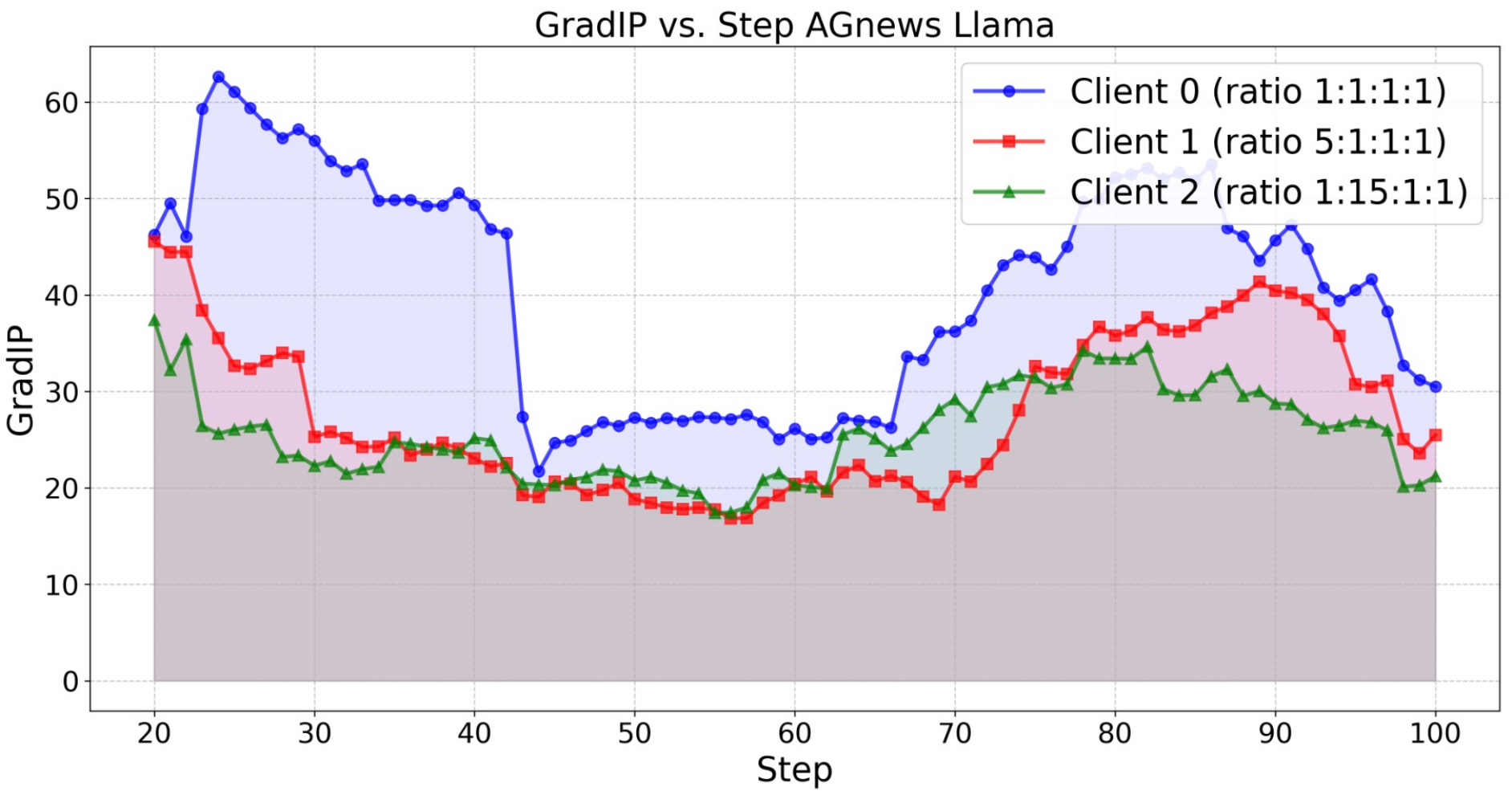}
        \caption{The experiments, conducted using the Llama-3.2-1B model on the AGNews dataset, reveal that under Non-IID settings—especially with a 1:15:1:1 class imbalance—there is a pronounced decline in GradIP between the early and later stages of training. In the extreme Non-IID case, the GradIP values in the later stages tend to approach zero.}
        \label{fig:llama_large_subfigure_agnews}
    \end{subfigure}
    
    \caption{GradIP analysis for different models on the AGNews dataset under Non-IID and IID conditions: As the class imbalance ratio increases, GradIP in the later training stages tends to approach zero. This decline is more pronounced under Non-IID settings, where the gap between initial and final GradIP values is larger than in the IID case. All trends are visualized using a moving average for clarity.}
    \label{fig:model_comparison_gradip_ratiocompare_agnews}
\end{figure*}

\end{document}